\documentclass[twoside]{article}

%
\usepackage[accepted]{aistats2025}
%


\usepackage{enumitem}
\usepackage{amsmath}
\usepackage{amssymb}
\usepackage{mathtools}
\usepackage{amsthm}
\usepackage{dsfont}
\usepackage{subfigure}

\theoremstyle{plain}
\newtheorem{theorem}{Theorem}[section]
\newtheorem{proposition}[theorem]{Proposition}
\newtheorem{lemma}[theorem]{Lemma}

\theoremstyle{definition}

\newtheorem{assumption}[theorem]{Assumption}
\theoremstyle{remark}
\newtheorem{remark}[theorem]{Remark}

\begin{document}

%

%
\runningtitle{Quantifying GNN vs. MLP}
\runningauthor{Huang, Cao, Wang, Cao, Suzuki}

\twocolumn[

\aistatstitle{Quantifying the Optimization and Generalization Advantages of Graph Neural Networks Over Multilayer Perceptrons}



\aistatsauthor{%
  Wei Huang\textsuperscript{1} \And
  Yuan Cao\textsuperscript{2} \And
  Haonan Wang\textsuperscript{3} \And
  Xin Cao\textsuperscript{4} \And
  Taiji Suzuki\textsuperscript{5,1}
}
\aistatsaddress{%
  \textsuperscript{1}RIKEN AIP 
  \textsuperscript{2}The University of Hong Kong 
  \textsuperscript{3}National University of Singapore \\
  \textsuperscript{4}The University of New South Wales 
  \textsuperscript{5}University of Tokyo
}]

\begin{abstract}
 Graph neural networks (GNNs) have demonstrated remarkable capabilities in learning from graph-structured data, often outperforming traditional Multilayer Perceptrons (MLPs) in numerous graph-based tasks. Although existing works have demonstrated the benefits of graph convolution through Laplacian smoothing, expressivity or separability, there remains a lack of quantitative analysis comparing GNNs and MLPs from an optimization and generalization perspective. This study aims to address this gap by examining the role of graph convolution through feature learning theory. Using a signal-noise data model, we conduct a comparative analysis of the optimization and generalization between two-layer graph convolutional networks (GCNs) and their MLP counterparts. Our approach tracks the trajectory of signal learning and noise memorization in GNNs, characterizing their post-training generalization. We reveal that GNNs significantly prioritize signal learning, thus enhancing the regime of {low test error} over MLPs by $D^{q-2}$ times, where $D$ denotes a node's expected degree and $q$ is the power of ReLU activation function with $q>2$. This finding highlights a substantial and quantitative discrepancy between GNNs and MLPs in terms of optimization and generalization, a conclusion further supported by our empirical simulations on both synthetic and real-world datasets.
\end{abstract}

\section{Introduction}

Graph neural networks (GNNs) have recently demonstrated remarkable capability in learning graph representations, yielding superior results across various downstream tasks, such as node classifications~\cite{kipf2016semi, velivckovic2017graph, hamilton2017inductive}, graph classifications~\cite{xu2018powerful,gilmer2017neural,lee2019self,Yuan2020StructPoolSG} and link predictions~\cite{kipf2016variational,zhang2018link,kumar2020link}, etc. Compared to multilayer perceptron (MLPs), GNNs enhance representation learning with an added message passing operation \cite{zhou2020graph}. Take graph convoluational network (GCN) \cite{kipf2016semi} as an example, it aggregates a node's attributes with those of its neighbors through a \textit{graph convolution} operation. This operation, which leverages the structural information (adjacency matrix) of graph data, forms the core distinction between GNNs and MLPs.

There is substantial evidence showing that GCNs consistently outperform MLPs empirically \cite{kipf2016semi,rong2019dropedge,zhou2020graph,shchur2018pitfalls,chen2017supervised,dwivedi2023benchmarking,ma2021homophily}. This success has led to numerous  theoretical studies aimed at understanding the underlying reasons for the effectiveness of graph convolutions. One prominent explanation is that graph convolutions leverage Laplacian smoothing to filter out noise from input features, as highlighted by several works~\cite{li2018deeper,chen2020measuring,nt2019revisiting,oono2019graph}. Another line of research investigates the expressivity of graph neural networks in distinguishing complex graph structures \cite{garg2020generalization,loukas2020hard,xu2018powerful}. Recent studies have also explored the theoretical role of graph convolutions in enhancing separability. For instance, \cite{baranwal2021graph} considered a setting of linear classification of data generated from a contextual stochastic block model \cite{deshpande2018contextual}, showing that graph convolution extends the regime where data is linearly separable by a factor of approximately $1/\sqrt{D}$ compared to MLPs, with $D$ denoting a node's expected degree. Building on this, \cite{baranwal2023effects} further examined multi-layer graph nerual networks and demonstrated improved non-linear separability with the incorporation of graph convolutions.

Despite the valuable insights provided by the existing literature, there has been limited theoretical research that offers a \textit{quantitative} understanding of the optimization and generalization properties of GNNs compared to their MLP counterparts in a unified framework. To address this gap, we aim to answer the following key question from a theoretical perspective:

  \textit{What role does graph convolution play during gradient descent training, and to what extent do GCNs exhibit better generalization compared to MLPs?}

To address this critical question, we conduct a feature learning analysis \cite{cao2022benign,allen2022feature} for graph neural networks to establish a unified theoretical framework that analyzes both the convergence and generalization properties of GCNs, enabling a \textit{quantitative comparison} with MLPs. Specifically, we introduce a data generation model, termed, that combines a signal-noise model \cite{allen2020towards,cao2022benign} for input feature creation with a stochastic block model \cite{abbe2015exact} for graph construction. This setting serves as a representative case study for our analysis. Our theoretical investigation focuses on the optimization trajectory and post-training generalization of two-layer GCNs trained using gradient descent, and compares these results with the established findings for two-layer MLPs \cite{cao2022benign}. While both GCNs and MLPs are shown to achieve near-zero training error, our analysis reveals a distinct quantitative advantage for GCNs in terms of generalization on test data. The key contributions of our study are as follows:

\begin{itemize} [leftmargin = *]
    \item  \textbf{Global Convergence Guarantees:} We provide global convergence guarantees for graph neural networks trained on data generated by the SNM-SBM model. By characterizing signal learning and noise memorization in feature learning, we show that, despite the inherent nonconvexity of the optimization landscape, GCNs can achieve zero training error within a polynomial number of iterations.
    
    \item  \textbf{Test Error Bounds for Overfitted GNNs:} We derive theoretical test error bounds for overfitted GNN models trained using gradient descent. Under specific conditions on the signal-to-noise ratio, we demonstrate that GCNs can achieve small (near-zero) test error, even when the model is over-fitted to the noisy data.

    \item \textbf{Quantitative Generalization Comparison:} We provide a quantitative comparison of the generalization performance between GCNs and MLPs. Our results identify a regime where GCNs achieve nearly zero test error, while MLPs exhibit a significantly higher test error. This finding is further validated through empirical experiments.
\end{itemize}

\section{Related Work}
 
\paragraph{Role of Graph Convolution in GNNs.} Existing studies \cite{chen2017supervised,ma2021homophily,zhang2019graph,he2020lightgcn,wang2023snowflake} have shown that graph convolutions significantly enhance the performance of traditional classification methods, such as MLPs. Motivated by these benefits, various graph network architectures and methods have been proposed to further harness the power of graph convolutions \cite{wu2019simplifying,gasteiger2018predict,wu2022towards}. From a theoretical standpoint, \cite{xu2020neural} highlight the superiority of GNNs for extrapolation problems in comparison with MLPs, based on the graph neural tangent kernel (GNTK)  \cite{jacot2018neural,du2019graph,huang2021towards,sabanayagam2022representation}. \cite{huang2021towards} use a similar approach to examine the role of graph convolution in deep GNNs, revealing that excessive graph layers can degrade optimization and generalization, thus supporting the well-known over-smoothing problem in deep GNNs \cite{li2018deeper}. In addition, \cite{yang2022graph} attribute the major performance gains of GNNs to their inherent generalization capability. \cite{chen2020graph} primarily examine the expressivity of Graph-Augmented MLPs. Similarly, \cite{hou2022measuring} propose two smoothness metrics to quantify the quality of information extracted from graph data and introduce a novel attention-based framework to optimize GNN performance. Besides, \cite{xu2021optimization} investigate the impact of skip connections, increased depth, and favorable label distributions on the training dynamics of GNNs. In contrast to these existing theoretical results, our work moves beyond GNTK analysis and differs from common approaches like Laplacian smoothing, expressivity, or separability studies. Instead, we focus on developing a unified framework to quantitatively compare GNNs and MLPs through a comprehensive convergence and generalization analysis.

\paragraph{Feature Learning in Deep Learning.} This work builds upon a growing body of research on how neural networks learn features. \cite{allen2020towards} formulate a theory illustrating that when data possess a ``multi-view'' feature, ensembles of independently trained neural networks can demonstrably improve test accuracy. Further, \cite{allen2022feature} show that adversarial training can eliminate specific small dense mixtures from the hidden layers, thereby refining the learned weights. Additionally, \cite{ba2022high} identify that the initial gradient update introduces a rank-1 `spike', waligning the first-layer weights with the linear component of the teacher model's features. The seminal work by \cite{cao2022benign} investigates the benign overfitting phenomenon in two-layer convolutional neural networks (CNNs) and reveals that under certain signal-to-noise ratio conditions, gradient descent can drive a two-layer CNN to achieve near-zero test loss through effective feature learning. Related works \cite{yang2020feature,zou2021understanding,wen2021toward,damian2022neural,zou2023benefits,chen2023towards,meng2023per,jelassi2022vision,kou2023benign,chen2023does,li2023theoretical,huang2025comparison,han2024feature,jiang2025unveil,bu2025provably,huang2023understanding} have similarly highlighted the role of feature learning in neural networks during gradient descent training, forming a critical area of research that our study continues to explore. While \cite{li2023improves,zhang2023joint} apply feature learning theory to analyze GNNs, they do not quantify the differences in optimization and generalization between GNNs and MLPs. Our work addresses this gap by providing a \textit{quantitative comparison}, offering new insights into the distinct behaviors of these architectures. 

\section{Problem Setup} \label{sec:problem_setup}

\subsection{Notations}

We use lower bold-faced letters for vectors, upper bold-faced letters for matrices, and non-bold-faced letters for scalars. For a vector $\mathbf{v} $, its $\ell_2$-norm is denoted as $\| \mathbf{v} \|_2 $. For a matrix $\mathbf{A}$, we use $\| \mathbf{A}\|_2$ to denote its spectral norm and $\| \mathbf{A}\|_F$ for its Frobenius norm. We employ standard asymptotic notations such as $O(\cdot)$, $o(\cdot)$, $\Omega(\cdot)$, and $\Theta(\cdot)$ to describe the limiting behavior. We use $\widetilde{O}(\cdot)$, $\widetilde{\Omega}(\cdot)$, and $\widetilde{\Theta}(\cdot)$ to hide logarithmic factors in these notations respectively. Moreover, we denote $a_n = \textrm{poly} (b_n)$ if $a_n = O((b_n)^p)$ for some positive constant $p$ and $a_n = \textrm{polylog}(b_n)$ if $a_n = \textrm{poly}( \log(b_n))$. 
Lastly, sequences of integers are denoted as $[m] = \{1,2,\ldots,m \}$.

\subsection{Data Model}

We adopt a combined \textit{signal-noise model} (SNM) for feature generation and a \textit{stochastic block model} for graph structure generation as a case study. 

\paragraph{Signal-noise Model} Specifically, let the feature matrix be denoted as $\mathbf{X} \in \mathbb{R}^{n \times 2d}$, where $n$ represents the number of samples and $2d$ is the feature dimensionality. Each feature associated with a data point is generated from a SNM, conditional on the Rademacher random variable $y \in \{-1, 1 \}$, and a latent vector $\boldsymbol{\mu} \in \mathbb{R}^d$:
\begin{equation} \label{eq:gaussian}
{\bf x} = [\mathbf{x}^{(1)}, \mathbf{x}^{(2)} ] = [y  \boldsymbol{\mu}, \boldsymbol{\xi} ],
\end{equation}

where $\mathbf{x}^{(1)}, \mathbf{x}^{(2)} \in \mathbb{R}^d$, and $\boldsymbol{\xi}  \sim \mathcal{N}(\mathbf{0}, \sigma_p^2 \cdot (\mathbf{I} -  \| \boldsymbol{\mu} \|_2^{-2} \cdot \boldsymbol{\mu} \boldsymbol{\mu}^\top))$ with $\sigma^2_p$ as the variance. The term $ \mathbf{I} -  \| \boldsymbol{\mu} \|_2^{-2} \cdot \boldsymbol{\mu} \boldsymbol{\mu}^\top $ ensures that the noise vector $\boldsymbol{\xi}$ is orthogonal to the signal vector $\boldsymbol{\mu}$. 

We make the following remarks on the data model.
\begin{itemize} [leftmargin = *]
    \item  \textbf{Feature Composition:} The data model simulates a setting where the input features consist of both signal and noise components. In particular, the term
    $y \boldsymbol{\mu}$ represents task (label)-relevant features, while $\boldsymbol{\xi}$ captures task-irrelevant features. Recent works \cite{allen2020towards,cao2022benign,zou2021understanding,shen2022data} have explored similar signal-noise models to investigate the feature learning process of neural networks, including studies focused on graph neural networks \cite{li2023improves,zhang2023joint}.

    \item \textbf{Real-world Relevance:} The SNM reflects the input feature structure of real-world graph datasets used in node classification tasks. For example, citation network datasets such as Cora, Citeseer, and Pubmed~\cite{kipf2016semi} typically use a bag-of-words representation for node features. Conceptually, these words can be divided into two categories: task-relevant and task-irrelevant. Words like ``algorithm" or ``neural network" are task-relevant for the field of computer science, whereas more generic words like ``study" or ``approach" are task-irrelevant. 
\end{itemize}

\paragraph{Stochstic Block Model} We employ a SBM to generate the graph structure, with intra-class edge probability $p$ and inter-class edge probability $s$. Specifically, each entry in the adjacency matrix $\mathbf{A} = (a_{ij})_{n \times n}$ follows a Bernoulli distribution: $a_{ij} \sim \textrm{Ber}(p)$ when $y_i = y_j$, and $a_{ij} \sim \textrm{Ber}(s)$ when $y_i \neq y_j$. The use of $p$ and $s$ are explicitly modeled, allows us to explicitly model different graph structures and analyze the impact of varying connectivity patterns. 

When $p \gg s$, the graph structure exhibits \textit{homophily}, meaning that nodes are more likely to connect with others that share the same label, resulting in clusters of similarly labeled nodes. Conversely, when $s \gg p$, the graph reflects a \textit{heterophily}, where nodes are more likely to connect to those with different labels. This flexibility makes SBM a powerful tool for simulating diverse graph topologies, and it is widely used in related studies on GNNs \cite{baranwal2021graph,mehta2019stochastic,baranwal2023effects,ma2021homophily,keriven2024functions}. 

We represent the combination of the SNM and SBM as $\mathrm{SNM-SBM}(n,p,s,\boldsymbol{\mu},\sigma_p,d)$. This combined model captures both feature and structural variations, providing a unified framework for studying GNNs. Note that when $p = s = 0$, the graph structure disappears, and $\mathrm{SNM-SBM}$ reduces to the standard SNM. 

\subsection{Graph Neural Network Model}

Graph neural network (GNNs) integrate both graph structure and node features to learn meaningful representations for nodes. Consider a two-layer GCN model, denoted as $f$, where the first layer performs a graph convolution operation, and the second layer parameters are fixed to either $+1$ or $-1$. The output of the GCN is given by:
\begin{align*}
    {f} (\mathbf{W}, \mathbf{A},{\mathbf{x}}) = F_{+1}(\mathbf{W}_{+1}, \mathbf{A},  {\mathbf{x}}) - {F}_{-1}(\mathbf{W}_{-1}, \mathbf{A}, {\mathbf{x}}),
\end{align*} 
where $ {F}_{+1}(\mathbf{W}_{+1}, \mathbf{A},  {\mathbf{x}})$ and $ {F}_{-1}(\mathbf{W}_{-1}, \mathbf{A},  {\mathbf{x}})$ are defined as:
\begin{equation} \label{eq:gcn}
     {F}_j(\mathbf{W}_j, \mathbf{A},  {\mathbf{x}} ) = \frac{1}{m}\sum_{r=1}^m \left[ \sigma(\mathbf{w}_{j,r}^\top \Tilde{\mathbf{x}}^{(1)}) + \sigma(\mathbf{w}_{j,r}^\top \Tilde{\mathbf{x}}^{(2)}) \right].
\end{equation}
where $j \in  \{+1, -1 \}$, and $\mathbf{W}_{\pm 1}$ 
 refer to the first layer weights associated with the second-layer fixed parameters. Besides, $\sigma(\cdot)$ is a polynomial ReLU activation function defined as $\sigma(z) = \max \{0,z\}^q$ for some $q > 2$, and $m$ is the width of hidden layer. The notation $\Tilde{\mathbf{X}} \triangleq [ \tilde{\mathbf{x}}_1, \tilde{\mathbf{x}}_2, \cdots, \tilde{\mathbf{x}}_n]^\top = \tilde{\mathbf{D}}^{-1} \tilde{\mathbf{A}} \mathbf{X}  \in \mathbb{R}^{n \times 2d}$ represents the node features after the graph convolution.

Specifically, the adjacency matrix with self-loops is defined as $\tilde{\mathbf{A}} = \mathbf{A} + \mathbf{I}_n$, where $\mathbf{A} $ is the original adjacency matrix and $ \mathbf{I}_n$ is the identity matrix of size $n$. The diagonal degree matrix $\tilde{\mathbf{D}}$
records the degree of each node, with entries given by $\tilde{D}_{ii} = \sum_j \tilde{A}_{ij}$. For simplicity we denote $ D_i \triangleq \tilde{D}_{ii} $. 

\begin{remark}
    The use of a polynomial ReLU activation function ensures a significant gap between signal learning and noise memorization. This type of activation function has been widely adopted in related works \cite{allen2020towards,allen2022feature,cao2022benign,zou2021understanding,kou2023benign} to facilitate the theoretical study of neural network training dynamics.
\end{remark}

Moreover, the symbol $\mathbf{W}$ collectively denotes the first layer's weights, and $\mathbf{w}_{j,r} \in \mathbb{R}^{d}$ refers to the weight of the first layer, in which $r$ corresponds to hidden neuron index and $j \in \{+1, -1 \}$ refer to the fixed value of second layer. The first-layer weights are initialized by sampling from a Gaussian distribution, i.e., $ \mathbf{w}_{j,r} \sim \mathcal{N}(\mathbf{0}, \sigma^2_0 \cdot \mathbf{I}_{d \times d})$ for all $r \in [m]$ and $j \in \{-1, 1 \}$, where $\sigma_0$ controls the strength of the initial weights.

Given the training data $\mathcal{S} \triangleq \{ \mathbf{x}_i,y_i \}_{i=1}^n$ and adjacency matrix $\mathbf{A} \in \mathbb{R}^{n \times n}$ drawn from $\mathrm{SNM-SBM}(n,p,s,\boldsymbol{\mu},\sigma_p,d)$, we aim to learn the parameter $\mathbf{W}$ by minimizing the empirical cross-entropy loss function:
\begin{equation}
    L^{\mathrm{GCN}}_{\mathcal{S}}(\mathbf{W}) = \frac{1}{n} \sum_{i=1}^n \ell(y_i \cdot  {f}(\mathbf{W}, \mathbf{A}, {\mathbf{x}}_i) ).
\end{equation}
Here, the cross-entropy loss is defined as $\ell(y \cdot  {f}(\mathbf{W}, \mathbf{A}, {\mathbf{x}}) ) = \log(1+ \exp(- f(\mathbf{W, \mathbf{A}, \mathbf{x}} ) \cdot  y ))$. The gradient descent update for the first layer weight $\mathbf{W}$ in GCN can be expressed as:
\begin{align}
    \mathbf{w}_{j,r}^{(t+1)} &= \mathbf{w}_{j,r}^{(t)} - \eta \cdot \nabla_{\mathbf{w}_{j,r}} L^{\mathrm{GCN}}_{\mathcal{S}}(\mathbf{W}^{(t)}) \nonumber  \\
    &= \mathbf{w}_{j,r}^{(t)} - \frac{\eta}{nm} \sum_{i=1}^n  {\ell}_i'^{(t)}    \sigma'(\langle \mathbf{w}_{j,r}^{(t)},   \tilde{y}_i  {\boldsymbol{\mu}} \rangle) \cdot j  \tilde{y}_i  {\boldsymbol{\mu}} \nonumber \\
    &\quad - \frac{\eta}{nm} \sum_{i=1}^n {\ell}_i'^{(t)}   \sigma'(\langle \mathbf{w}_{j,r}^{(t)},   \Tilde{\boldsymbol{\xi}}_i \rangle) 
    \cdot j y_i \Tilde{\boldsymbol{\xi}}_{i}  , 
    \label{eq:gcn_gdupdate} 
\end{align}
where we define the loss derivative as $ {\ell}'_i \triangleq \ell'(y_i \cdot {f}_i)  =-  \frac{ \exp(-y_i \cdot  {f}_i)}{1 + \exp(-y_i  \cdot  {f}_i)} $, the ``aggregated label'' $ \Tilde{y}_i = D_{i}^{-1} \sum_{k \in \mathcal{N}(i)} y_k  $, and the ``aggregated noise vector'' $\Tilde{\boldsymbol{\xi}}_i  = D_{i}^{-1} \sum_{k \in \mathcal{N}(i)}   \boldsymbol{\xi}_k$. Here $\mathcal{N}(i)$ denotes the set of neighbors of node $i$, $\sigma'(\cdot)$ represents the derivative of the polynomial ReLU activation function, and $\eta$ is the learning rate.

To quantify the learning capabilities of GNNs compared to MLPs, we analyze the generalization ability of GNN models through the lens of population loss, which is defined based on unseen test data. After training the network on $n$ data points, we generate a new test data point following the $\mathrm{SNM-SBM}$ distribution. Its connections to the training data points are determined using the stochastic block model, forming an updated adjacency matrix $\mathbf{A}' \in \mathbb{R}^{(n+1) \times (n+1)}$. The population loss is then calculated by taking the expectation over the randomness of the new test data, and is expressed as follows:
\begin{equation} \label{eq:population_gnn}
L^{\mathrm{GCN}}_{\mathcal{D}} (\mathbf{W}) = \mathbb{E}_{({\mathbf{x}}, y, \mathbf{A}') \sim \mathrm{SNM-SBM} }  \ell(y \cdot f(\mathbf{W}, \mathbf{A}', {\mathbf{x}})).
\end{equation}
This approach for formulating the generalization error is consistent with the methodology used in \cite{li2023improves}.

\subsection{Weight Decomposition for Optimization Analysis}

To track the complex training dynamics described by Equation (\ref{eq:gcn_gdupdate}), we employ a weight decomposition method inspired by feature learning theory \cite{cao2022benign}. From the gradient descent rule in Equation (\ref{eq:gcn_gdupdate}), we observe that each gradient descent iterate $\mathbf{w}_{j,r}^{(t)}$ can be represented as a linear combination of its initial random weight $\mathbf{w}_{j,r}^{(0)}$, the signal vector $\boldsymbol{\mu}$, and the noise vectors $\boldsymbol{\xi}_i$\footnote{By referring to Equation (\ref{eq:gcn_gdupdate}), we see that the gradient descent update moves in the direction of $\tilde{\boldsymbol{\xi}}_i$, which can be further decomposed into ${\boldsymbol{\xi}}_i$ through $\Tilde{\boldsymbol{\xi}}_i = D_{i}^{-1} \sum_{k \in \mathcal{N}(i)} \boldsymbol{\xi}_k$.} from the training data for $i \in [n]$. For $r \in [m]$, the weight decomposition at iteration $t$ can be expressed:
\begin{align}
    \mathbf{w}_{j,r}^{(t)} & = \mathbf{w}_{j,r}^{(0)} + j  \gamma_{j,r}^{(t)} \cdot \| \boldsymbol{\mu} \|_2^{-2} \cdot \boldsymbol{\mu} + \sum_{ i = 1}^n \overline{\rho}_{j,r,i}^{(t) }\cdot \| {\boldsymbol{\xi}}_i \|_2^{-2} \cdot  {\boldsymbol{\xi}}_{i}  \nonumber \\
    & \quad + \sum_{ i = 1}^n \underline{\rho}_{j,r,i}^{(t) }\cdot \|  {\boldsymbol{\xi}}_i \|_2^{-2} \cdot {\boldsymbol{\xi}}_{i} . \label{eq:w_decomposition}
\end{align}
where $ \gamma_{j,r}^{(t)} $ and $\rho_{j,r,i}^{(t)} = \{ \overline{\rho}_{j,r,i}^{(t)} , \underline{\rho}_{j,r,i}^{(t)} \} $ are the coefficients of the \textit{signal learning} and \textit{noise memorization}, respectively. To provide a more precise characterization of the coefficients, we define $\overline{\rho}_{j,r,i}^{(t)} \triangleq \rho_{j,r,i}^{(t)}\mathds{1} (\rho_{j,r,i}^{(t)} \geq 0)$, $\underline{\rho}_{j,r,i}^{(t)} \triangleq \rho_{j,r,i}^{(t)} \mathds{1} (\rho_{j,r,i}^{(t)} \leq 0)$. We refer to Equation (\ref{eq:w_decomposition}) as the \textit{signal-noise decomposition} of $\mathbf{w}_{j,r}^{(t)}$. The normalization factors $\| \boldsymbol{\mu} \|^{-2}_{2} $ and $\| \boldsymbol{\xi}_i \|^{-2}_{2} $ are introduced to ensure that $\gamma_{j,r}^{(t)} \approx \langle \mathbf{w}_{j,r}^{(t)}, \boldsymbol{\mu} \rangle $, and $\rho_{j,r,i}^{(t)} \approx \langle \mathbf{w}_{j,r}^{(t)}, \boldsymbol{\xi}_i \rangle $. We use $\gamma_{j,r}^{(t)} $ to characterize the process of \textit{signal learning} and $\rho_{j,r,i}^{(t)}$ to characterize the \textit{noise memorization}. If certain $ \gamma_{j,r}^{(t)} $ values are sufficiently large while all $|\rho_{j,r,i}^{(t)}|$ remain relatively small, this indicates that the network is primarily learning the label through signal learning. Conversely, if some $|\rho_{j,r,i}^{(t)}|$ values are large while all $ \gamma_{j,r}^{(t)} $ values remain small, the network is primarily focused on noise memorization.

\section{Theoretical Results} \label{sec:theoretical_results}

In this section, we introduce our key theoretical findings that demonstrate the optimization and generalization of GCNs through feature learning. Our analysis is based on the following assumptions:

\begin{assumption} \label{condition:d_sigma0_eta}
Suppose that 
    (1) The dimension $d$ is sufficiently large: $d = \tilde{\Omega}(m^{2\vee [4/(q-2)]}n^{4 \vee [(2q-2)/(q-2)]})$.
    (2) The size of the training sample $n$ and width of GCNs $m$ adhere to $n,m = \Omega(\mathrm{polylog}(d))$. 
    (3) The edge probability $p,s = \Omega ( {\sqrt{\log(n)/{n}})} $ and $ \Xi \triangleq \frac{p-s}{p+s} $ is a positive constant. (4) The learning rate $\eta$ satisfies $\eta \leq  \tilde{O}(\min\{\|\boldsymbol{\mu}\|_{2}^{-2}, \sigma_{p}^{-2}d^{-1}\})$, and the weight initialization strength follows $ \sigma_{0} \leq \tilde{O}(m^{-2/(q-2)}n^{-[1/(q-2)]\vee 1}  \cdot \min \{(\sigma_{p}\sqrt{d/(n(p+s))})^{-1}, \Xi^{-1}\|\boldsymbol{\mu} 
 \|_{2}^{-1}\}$.
\end{assumption}

The rationale for these assumptions is as follows: (1) The requirement for a high dimension is specifically aimed at ensuring that the learning occurs in a sufficiently over-parameterized setting when the second layer remains fixed. Similar choice can be found in \cite{cao2022benign,kou2023benign}. (2) This condition guarantees that certain statistical properties of the training data and weight initialization hold with high probability at least $1 - d^{-1}$. 
(3) The condition on edge probabilities $p$ and $s$ guarantees a sufficient concentration in node degrees and captures the level of homophily in the graph data. (4) The condition on $\eta$ and $\sigma_0$ ensures that gradient descent can effectively minimize the training loss.

Furthermore, we introduce a critical quantity called the signal-to-noise ratio (SNR), which measures the relative learning speed between the signal and the noise. It is defined as $ \mathrm{SNR} =  \| \boldsymbol{\mu} \|_2/(\sigma_p \sqrt{d})$. To prepare for our main result, we also define an effective SNR for GNNs as $ \mathrm{SNR}_{G} =  \| \boldsymbol{\mu} \|_2/(\sigma_p \sqrt{d}) \cdot (n(p+s))^{(q-2)/(2q)} $. Given the above assumptions and definitions, we present our main result for GNNs as follows:

\begin{theorem}\label{thm:signal_learning_main}
Let $T = \tilde{\Theta}( \eta^{-1} m\sigma_0 ^{-(q-2)} \Xi^{-q} \| \boldsymbol{\mu} \|_2^{-q} + \eta^{-1}\epsilon^{-1} m^{3}\|\boldsymbol{\mu}  \|_2^{-2})$. Under Assumption~\ref{condition:d_sigma0_eta}, if $ n \cdot \mathrm{SNR}_G^q  = \tilde\Omega( 1 )$, then with probability at least $ 1 - d^{-1}$, there exists a time $0 \leq t \leq T$ such that:
\begin{itemize} [leftmargin = *]
    \item The GCN learns the signal: $\max_r\gamma_{j,r}^{(t)} =  {\Omega}(1)$ for $j\in \{\pm 1\}$.
    \item The GCN does not memorize the noises: 
    $\max_{j,r,i} |\rho_{j,r,i}^{(T)}| = \tilde O( \sigma_0 \sigma_p \sqrt{d/n(p+s)} )$.
    \item The training loss converges to $\epsilon$, i.e., $L^{\mathrm{GCN}}_\mathcal{S}(\mathbf{W}^{(t)}) \leq \epsilon$.
    \item The trained GCN achieves a small test loss: $L^{\mathrm{GCN}}_{\mathcal{D}}(\mathbf{W}^{(t)})\leq  {c_1\epsilon + \exp(-c_2 n^{2})}$, where $c_1$ and $c_2$ are positive constants.
\end{itemize}

\end{theorem}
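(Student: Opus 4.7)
The plan is to adapt the signal-noise decomposition machinery of Cao et al.\ (2022) to the graph convolutional setting, treating the aggregated label $\tilde y_i$ and aggregated noise $\tilde{\boldsymbol{\xi}}_i$ as the effective ``data'' on which the same coefficient dynamics play out. Plugging the decomposition \eqref{eq:w_decomposition} into the gradient update \eqref{eq:gcn_gdupdate}, and using near-orthogonality of $\boldsymbol{\mu}$ and the $\tilde{\boldsymbol{\xi}}_i$'s to decouple the inner products, I would derive coupled scalar recursions
\begin{align*}
\gamma_{j,r}^{(t+1)} - \gamma_{j,r}^{(t)} &\approx -\tfrac{\eta}{nm}\sum_{i=1}^n \ell_i'^{(t)}\,\sigma'(\langle \mathbf{w}_{j,r}^{(t)}, \tilde y_i \boldsymbol{\mu}\rangle)\,j\tilde y_i\|\boldsymbol{\mu}\|_2^2, \\
\rho_{j,r,i}^{(t+1)} - \rho_{j,r,i}^{(t)} &\approx -\tfrac{\eta}{nm}\,\ell_i'^{(t)}\,\sigma'(\langle \mathbf{w}_{j,r}^{(t)}, \tilde{\boldsymbol{\xi}}_i\rangle)\,jy_i\|\tilde{\boldsymbol{\xi}}_i\|_2^2,
\end{align*}
and then track these two families of coefficients through time.

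The first technical ingredient is a set of concentration lemmas for the SNM-SBM data. Using Bernstein for Binomials I would show $D_i = \Theta(n(p+s))$ uniformly in $i$, and that $\tilde y_i = \Xi y_i(1+o(1))$ with $|\tilde y_i|=\Theta(\Xi)$, exploiting the assumption $p,s=\Omega(\sqrt{\log n/n})$ which makes the degrees concentrate. For the aggregated noise I would combine a Gaussian argument conditional on the neighbour set with a $\chi^{2}$ bound to obtain $\|\tilde{\boldsymbol{\xi}}_i\|_2^2 = \Theta(\sigma_p^2 d/(n(p+s)))$ together with near-orthogonality estimates for $|\langle \tilde{\boldsymbol{\xi}}_i,\tilde{\boldsymbol{\xi}}_k\rangle|$ when $i\neq k$ and for $|\langle\boldsymbol{\mu},\tilde{\boldsymbol{\xi}}_i\rangle|$. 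The shrinkage factor $1/\sqrt{n(p+s)}=1/\sqrt{D}$ in the noise norm, with no matching factor in the signal (which is only rescaled by the constant $\Xi$), is the whole reason graph convolution helps: after raising inner products to the $q$-th power inside the activation, this translates into the $\sqrt{D}^{\,q-2}$ improvement in the SNR threshold announced in the statement.

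Stage~1 is a tensor-power-iteration induction. While all outputs remain $o(1)$ so that $\ell_i'^{(t)}\in[-1,-1/4]$, I maintain three inductive hypotheses: (a) $\max_r \gamma_{j,r}^{(t)}$ grows polynomially; (b) $\max_{j,r,i}|\rho_{j,r,i}^{(t)}|$ stays at the initialization scale $\tilde{O}(\sigma_0\sigma_p\sqrt{d/(n(p+s))})$ up to absolute constants; and (c) the off-diagonal correlations remain negligible relative to the diagonal terms, so that the two coefficient recursions above are valid. Under these hypotheses $\gamma^{(t+1)}-\gamma^{(t)}\gtrsim (\eta/m)\,\Xi^q(\gamma^{(t)})^{q-1}\|\boldsymbol{\mu}\|_2^2$, which drives $\gamma$ from its $\tilde\Theta(\sigma_0\Xi\|\boldsymbol{\mu}\|_2)$ initial scale to $\Omega(1)$ within $\tilde\Theta(\eta^{-1}m\sigma_0^{-(q-2)}\Xi^{-q}\|\boldsymbol{\mu}\|_2^{-q})$ iterations --- exactly the first summand of $T$ --- while the SNR assumption $n\cdot \mathrm{SNR}^q\cdot\sqrt{n(p+s)}^{\,q-2}=\tilde\Omega(1)$ is precisely calibrated so that none of the $|\rho_{j,r,i}|$ can escape its initialization scale in that window. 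Stage~2 starts at the first iterate $t_1$ at which signal is learned, and runs an online-convex-optimization argument of the Cao et al.\ type, using the potential $\|\mathbf{W}^{(t)}-\mathbf{W}^{*}\|_F^2$ with a reference $\mathbf{W}^{*}$ formed from a large multiple of $\boldsymbol{\mu}$ in appropriate coordinates, to obtain $\min_{t_1\le t\le T} L^{\mathrm{GCN}}_{\mathcal{S}}(\mathbf{W}^{(t)})\le\epsilon$ after the extra $\tilde\Theta(\eta^{-1}\epsilon^{-1}m^3\|\boldsymbol{\mu}\|_2^{-2})$ steps; monotonicity of $\gamma$ and the control of $\rho$ carry over because the signs and magnitudes of the updates are preserved.

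For the last bullet I would draw a fresh $(\tilde{\mathbf{x}},y)\sim \mathrm{SNM\text{-}SBM}$, expand $y\cdot f(\mathbf{W}^{(t)},\tilde{\mathbf{x}})$ via the decomposition, and bound the signal part from below by a positive constant using $\gamma^{(t)}=\Omega(1)$ and $|\tilde y|=\Theta(\Xi)$. The remaining noise contribution is controlled by $\langle \mathbf{w}_{j,r}^{(t)},\tilde{\boldsymbol{\xi}}_{\mathrm{test}}\rangle$, where the initialization piece is Gaussian with variance $\sigma_0^2\|\tilde{\boldsymbol{\xi}}_{\mathrm{test}}\|_2^2$ and the $\rho$ piece is bounded by the Stage~1 estimate times small cross-correlations; both are driven to zero by the aggregation-induced $1/\sqrt{n(p+s)}$ shrinkage, so a Gaussian tail yields misclassification probability at most $\exp(-c_2 n^{2})$ after combining with the degree and homophily concentration of the test node. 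A standard Lipschitz/convexity bound then upgrades this event-wise control to $L^{\mathrm{GCN}}_{\mathcal{D}}\le c_1\epsilon+\exp(-c_2 n^{2})$. The hardest step, in my view, is maintaining Stage~1: unlike the CNN setting where noise vectors are mutually independent Gaussians, the aggregated noises $\tilde{\boldsymbol{\xi}}_i$ share summands whenever $\mathcal{N}(i)\cap \mathcal{N}(k)\neq\emptyset$, so the approximate-orthogonality estimates that make the coefficient recursions decouple must be obtained by an SBM-specific intersection bound on common neighbours rather than by pure Gaussian geometry, and keeping these cross-terms tight enough to survive a $q$-th power at the activation is the central technical challenge.
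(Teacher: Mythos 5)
Your high-level plan (signal--noise decomposition, two-stage dynamics, Gaussian-tail test bound) matches the paper's, and your heuristic for why graph convolution helps (the aggregated noise norm shrinks by $1/\sqrt{D}$ while $|\tilde y_i|$ stays of order $\Xi$) is exactly the right intuition. However, the concrete coefficient recursion you write for the noise part is not what falls out of the decomposition, and the fix is not a small one.

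The decomposition \eqref{eq:w_decomposition} expresses $\wb_{j,r}^{(t)}$ in the basis of the \emph{original} noise vectors $\bxi_i$, which are the ones that are genuinely near-orthogonal (Lemma~\ref{lemma:data_innerproducts}). The gradient step, however, moves $\wb_{j,r}$ along the \emph{aggregated} direction $\tilde\bxi_k = D_k^{-1}\sum_{l\in\mathcal N(k)}\bxi_l$. Projecting onto the $\bxi_i$-coordinate therefore collects a contribution from every sample $k$ such that $i\in\mathcal N(k)$, and the correct recursion (Lemma~\ref{lemma:coefficient_iterative}) is a neighborhood sum
\begin{align*}
\overline{\rho}_{j,r,i}^{(t+1)} = \overline{\rho}_{j,r,i}^{(t)} - \frac{\eta}{nm}\sum_{k\in\mathcal N(i)} D_k^{-1}\,\ell_k'^{(t)}\,\sigma'\bigl(\langle\wb_{j,r}^{(t)},\tilde\bxi_k\rangle\bigr)\,\|\bxi_i\|_2^2\,\mathds{1}(y_k=j),
\end{align*}
with the full $\|\bxi_i\|_2^2$ (not $\|\tilde\bxi_i\|_2^2$) and a sum over $\ell_k'$ for $k$ ranging over neighbors. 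Your proposed update has a single term with $\|\tilde\bxi_i\|_2^2$, which is what you would get if you decomposed $\wb_{j,r}^{(t)}$ in the basis $\{\tilde\bxi_i\}$ instead. That alternative basis does not work here: in the SBM regime of the theorem, neighborhoods overlap heavily, and a direct computation gives $|\langle\tilde\bxi_i,\tilde\bxi_k\rangle|/\|\tilde\bxi_i\|_2^2 = \Theta(p+s)$ for generic $i\neq k$. Since Assumption~\ref{condition:d_sigma0_eta} allows $p+s=\Theta(1)$ (and the experiments use $p=0.5$, $s=0.08$), the aggregated noises are \emph{constant-order} correlated, not approximately orthogonal, so the ``near-orthogonality estimates for $|\langle\tilde\bxi_i,\tilde\bxi_k\rangle|$'' you plan to use do not hold, the coefficients in a $\tilde\bxi$-decomposition are not uniquely pinned down, and the decoupled scalar recursion you write does not follow.

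You do flag this overlap issue at the end as ``the central technical challenge,'' but the remedy you propose --- an SBM intersection bound on common neighbours to recover near-orthogonality --- cannot succeed, because the correlation is genuinely $\Theta(1)$ in the dense case, not a lower-order error one can absorb. The paper avoids the problem entirely by never asking $\tilde\bxi_i$ to be orthogonal: it keeps the decomposition in the orthogonal $\bxi_i$, derives the neighborhood-sum recursion above, and controls the quantity $\hat\rho_{j,r,i}^{(t)}$ (a degree-weighted partial sum of the $\rho$'s, Lemma~\ref{lm:oppositebound}) that approximates $\langle\wb_{j,r}^{(t)}-\wb_{j,r}^{(0)},\tilde\bxi_i\rangle$. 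The $1/\sqrt{n(p+s)}$ shrinkage then enters through $\|\tilde\bxi_i\|_2^2 = \Theta(\sigma_p^2 d/(n(p+s)))$ (Lemma~\ref{lemma:graph_data_innerproducts}) and the small initialization $|\langle\wb_{j,r}^{(0)},\tilde\bxi_i\rangle|$ (Lemma~\ref{lemma:initialization_norms}), inside the $\sigma'$ factor, rather than through a direct norm factor in the recursion. Once you replace your $\rho$ recursion with the paper's and adjust the Stage~1 induction to carry the aggregated quantity $\hat\rho$, the rest of your outline (exponential growth of $\gamma$ at rate $\eta\sigma_0^{q-2}\Xi^q\|\bmu\|_2^q/m$, the online-convexity Stage~2 with reference point $\wb^*$, and the Gaussian-tail test-loss argument) does line up with the paper's proof.
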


\begin{figure}[hbt]
\centering
\includegraphics[width=6.0cm]{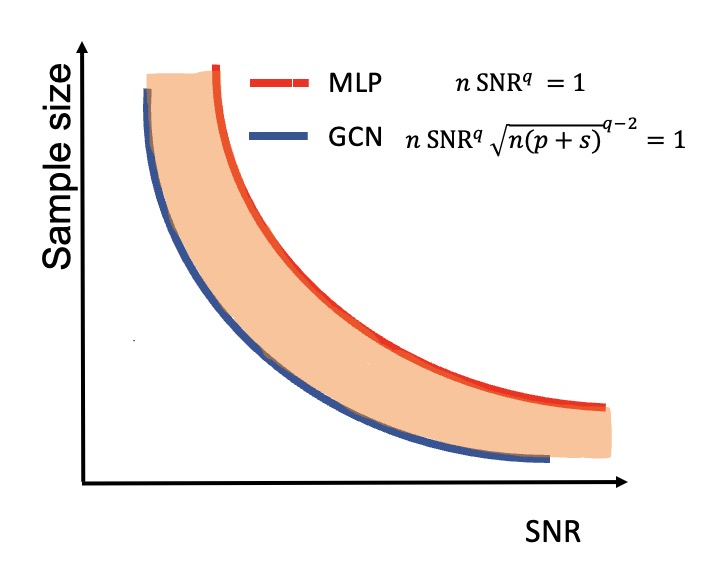}
\caption{Illustration of test performance comparison between GNN and MLP. The region below the red curve but above the blue curve (highlighted in orange) indicates the area where GNN can generalize well, but MLP fails to generalize to the test set.}
\label{fig:phase}
\end{figure}

Theorem~\ref{thm:signal_learning_main} reveals that, provided $ n \cdot \mathrm{SNR}_G^q = \tilde\Omega( 1 )$, the GCN is capable of learning the signal, achieving $\max_r\gamma_{j,r}^{(t)} = \Omega(1)$. On the other hand, the noise memorization during gradient descent training is suppressed, as indicated by $\max_{j,r,i} |\rho_{j,r,i}^{(T)}| = \tilde O( \sigma_0 \sigma_p \sqrt{d/n(p+s)} )$, given that $\sigma_0 \sigma_p \sqrt{d/n(p+s)} \ll 1$, according to assumption \ref{condition:d_sigma0_eta}. Because the signal learned by the network is sufficiently strong and much larger than the noise memorization, the model can generalize well to the test samples. Consequently, the learned neural network achieves both low training and test losses.  

We compare the result in Theorem \ref{thm:signal_learning_main} with the result of MLPs presented in \cite{cao2022benign} to highlight the quantitative advantage in generalization of GCNs over MLPs. According to \cite{cao2022benign}, when $n \cdot \mathrm{SNR}^q = \tilde{\Omega}(1)$, MLPs can achieve a small test error; otherwise, they experience a large test error when $n \cdot \mathrm{SNR}^q = \tilde{O}(1)$. Together, these results highlight the differences in generalization, showing that GCNs have a broader regime for achieving low test errors, which is qualitatively represented by $ [n(p+s)]^{(q-2)/(2q)}$. This outcome further requires a dense setting, where $n(p+s) > 1$, which is consistent with Assumption \ref{condition:d_sigma0_eta}. The reason for this improved performance is that, during feature learning, graph convolution can effectively suppress noise memorization, enabling GCNs to generalize better than MLPs, especially in low SNR settings. 
Finally, we illustrate the quantitative difference established in this work through Figure~\ref{fig:phase}. By precisely characterizing the feature learning dynamics from optimization to generalization for GNNs, we have successfully shown how GCNs gain a significant advantage over MLPs due to the benefits provided by graph convolution.



\section{Proof Roadmap  }\label{section:tech_overview}


In this section, we present proof sketches for GCNs using feature learning theory. We discuss the primary challenges encountered in the study of GNNs and outline the key techniques used in our proofs to address these challenges. These techniques are further elaborated in the subsequent sections, and detailed proofs can be found in the appendix. Although we adopt the feature learning framework from prior works \cite{allen2020towards,cao2022benign}, our focus is fundamentally different, making the existing results not directly applicable to GCNs.


\subsection{Iterative of coefficients }

To analyze the feature learning process in GCNs, we introduce an iterative methodology based on the signal-noise decomposition (\ref{eq:w_decomposition}) and gradient descent update rule (\ref{eq:gcn_gdupdate}). The following lemma provides a means to track the iteration of signal learning and noise memorization under graph convolution:
\begin{lemma}\label{lemma:coefficient_iterative}
The coefficients $\gamma_{j,r}^{(t)}, \overline{\rho}_{j,r,i}^{(t)},\underline{\rho}_{j,r,i}^{(t)}$ with the initialization value $\gamma_{j,r}^{(0)},\overline{\rho}_{j,r,i}^{(0)},\underline{\rho}_{j,r,i}^{(0)} = 0$ in decomposition~(\ref{eq:w_decomposition}) adhere to the following equations:
\begin{align}
    &\gamma_{j,r}^{(t+1)} = \gamma_{j,r}^{(t)} - \frac{\eta}{nm}   \sum_{i=1}^n   {\ell}_i'^{(t)}   \sigma'(\langle \mathbf{w}_{j,r}^{(t)},  \Tilde{y}_i   {\boldsymbol{\mu}_i } \rangle)    y_i \Tilde{y}_i   \| \boldsymbol{\mu} \|_2^2,\label{eq:update_gamma1}  \\
    & \overline{\rho}_{j,r,i}^{(t+1)}   = \overline{\rho}_{j,r,i}^{(t)} - \frac{\eta}{nm}   \sum_{k \in \mathcal{N}(i)} D^{-1}_{k}    {\ell}_k'^{(t)}  \sigma' (  \langle \mathbf{w}_{j,r}^{(t)},  \tilde{\boldsymbol{\xi}}_{k}   \rangle  )   \| \boldsymbol{\xi}_i \|_2^2 \nonumber \\
     & \qquad \mathds{1} (y_{k} = j), \label{eq:update_zeta1} \\
   & \underline{\rho}_{j,r,i}^{(t+1)}   = \underline{\rho}_{j,r,i}^{(t)} + \frac{\eta}{nm}  \sum_{k \in \mathcal{N}(i)} D^{-1}_{k}     {\ell}_k'^{(t)}   \sigma' (  \langle \mathbf{w}_{j,r}^{(t)},  \tilde{\boldsymbol{\xi}}_{k}   \rangle  )     \| \boldsymbol{\xi}_i \|_2^2 \nonumber  \\
   & \qquad \mathds{1} (y_{k} = -j). \label{eq:update_omega1}
\end{align} 
\end{lemma}

Lemma~\ref{lemma:coefficient_iterative} simplifies the analysis of feature learning in GCNs by reducing it to the examination of the discrete dynamical system defined by Equations (\ref{eq:update_gamma1} - \ref{eq:update_omega1}). 
Our proof strategy emphasizes an in-depth evaluation of the coefficient values $\gamma_{j,r}^{(t)},\overline{\rho}_{j,r,i}^{(t)}, \underline{\rho}_{j,r,i}^{(t)}$ throughout the training process.

\subsection{The importance of dense graphs in trajectory analysis}\label{subsection:twostage}

Note that graph convolution aggregates information from neighboring nodes to the central node, which often leads to a loss of \textit{statistical concentration} for the aggregated noise vectors and labels. To mitigate this challenge, we utilize a dense graph structure by setting the edge probability $p,s =\Omega(\sqrt{\log(n)/n})$, as stated in Assumption \ref{condition:d_sigma0_eta}. As a result, we show that the node degrees exhibit good concentration properties, as demonstrated by the following lemma:

\begin{lemma}  \label{lem:degree_main}
Let $p, s = \Omega \left( \sqrt{\frac{\log (n/\delta)}{n}} \right)$ and $\delta > 0$, then with probability at least $1 - \delta$, we have
$ 
  n(p+s)/4 \le  D_{i} \le  3n(p+s)/4,  
$ 
where $D_i$ is the degree of node $i$.
\end{lemma}
Lemma \ref{lem:degree_main} establishes the concentration of node degrees, which is crucial for the trajectory analysis of iterations for both signal learning and noise memorization.

\subsection{The Role of Homophily}

To preserve the sign of the graph-aggregated labels, we introduce ``homophily'' by setting 
$p > s$. This setting ensures that the convolution process effectively integrates neighborhood label information into the central node. The formal result is provided in the following lemma:
\begin{lemma}\label{lemma:graph_numberofdata_main}
Suppose that $\delta > 0$, $p > s$, and $n  \geq 8 \frac{p+s}{(p-s)^2} \log(4/\delta)$. Then with probability at least $1 - \delta$, it holds that $
  \frac{1}{2} \frac{p-s}{p+s} |y_i| \le   |\Tilde{y}_i| \le \frac{3}{2} \frac{p-s}{p+s} |y_i|$.
\end{lemma}

Lemma \ref{lemma:graph_numberofdata_main} shows that the effective label after graph convolution retains the same sign as the original node label. It is worth noting that if we consider heterophily (where $s > p$), the same generalization results hold in our setting, except that the sign of the aggregated label will be opposite to that of the central node.

\subsection{Optimization analysis}

\begin{figure*}[htbp] 
\centering
\begin{minipage}{0.23\textwidth}
\includegraphics[width =1.3in]{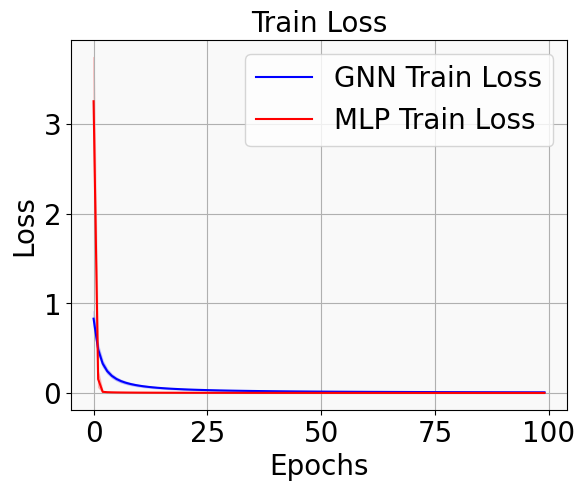}
\end{minipage} 
\hspace{2pt}
\begin{minipage}{0.23\textwidth}
\includegraphics[width =1.3in]{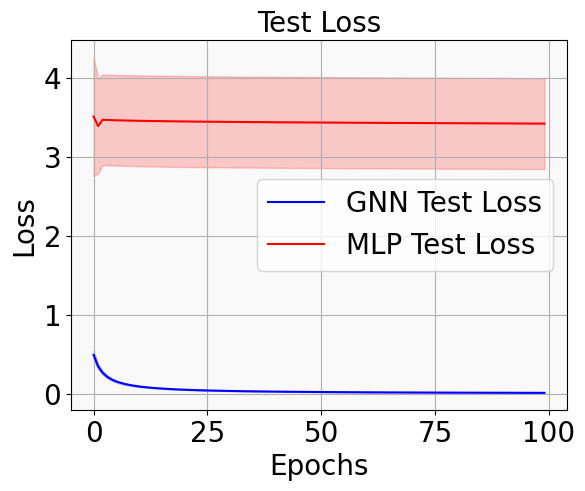}
\end{minipage}
\hspace{2pt}
\begin{minipage}{0.24\textwidth}
\includegraphics[width =1.38in]{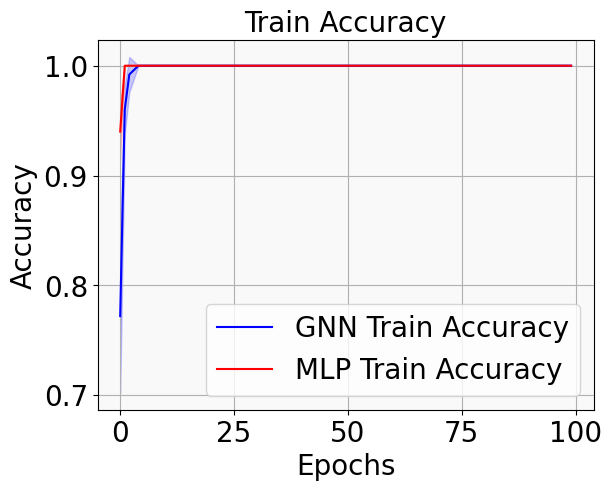}
\end{minipage}
\begin{minipage}{0.24\textwidth}
\includegraphics[width =1.39in]{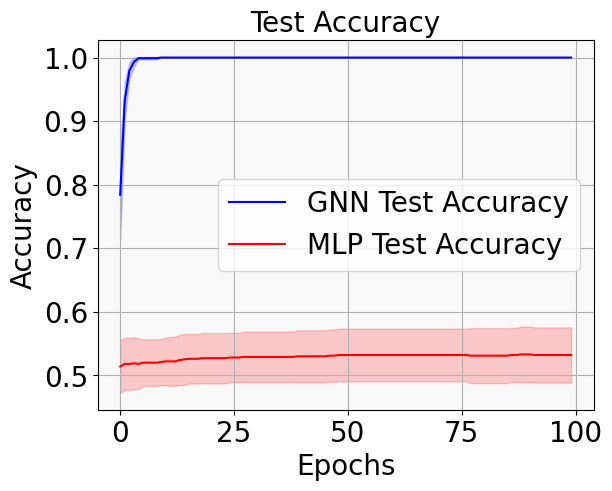}
\end{minipage}
\caption{Training loss, test loss, training accuracy, and test accuracy for both MLP and GNN over a span of 100 training epochs. Each curve represents the average of five experimental runs, with shaded regions indicating the error bars to capture variability.}
\label{fig:overfitting}
\end{figure*}

We provide a two-stage dynamics analysis to track the trajectory of the coefficients for signal learning and noise memorization, using Lemma \ref{lemma:coefficient_iterative}, Lemma \ref{lem:degree_main}, and Lemma \ref{lemma:graph_numberofdata_main}.

\paragraph{Stage 1.} Intuitively, the initial neural network weights are small enough such that the network at initialization exhibits constant-level loss derivatives on all training data points $\ell_i'^{(0)} = \ell'[ y_i \cdot f(\mathbf{W}^{(0)}, \mathbf{A},  {\mathbf{x}}_i)] = \Theta(1)$ for all $i\in [n]$. This is guaranteed under Assumption~\ref{condition:d_sigma0_eta} on $\sigma_0$. Motivated by this observation, the dynamics of the coefficients in Equations (\ref{eq:update_gamma1} - \ref{eq:update_omega1}) can be greatly simplified by replacing the $\ell_i'^{(t)}$ factors by their constant upper and lower bounds. The following lemma summarizes our main conclusion for feature learning in Stage 1:

\begin{lemma}\label{lemma:phase1_main_sketch}
Under the same conditions as Theorem~\ref{thm:signal_learning_main}, there exists $T_1 = \tilde O(\eta^{-1}m\sigma_{0}^{2-q}\Xi^{-q} \|\boldsymbol{\mu}\|_{2}^{-q}) $ such that $\max_{ r}\gamma_{j, r}^{(T_{1})} = \Omega(1)$ for $j\in \{\pm 1\}$, and $|\rho_{j,r,i}^{(t)}| = O\left(\sigma_0 \sigma_p \sqrt{d}/\sqrt{n(p+s)} \right)$ for all $j\in \{\pm 1\}$, $r\in[m]$, $i \in [n]$ and $0\leq t\leq T_1$. 
 
\end{lemma}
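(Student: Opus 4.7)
The plan is to prove the lemma by a combined induction plus scalar tensor-power-iteration argument, adapting the Cao et al.\ (2022) strategy to the aggregated quantities $\tilde y_i$ and $\tilde{\boldsymbol{\xi}}_k$ produced by graph convolution. I would first fix a high-probability event on which the data, graph, and initialization concentrate; conditional on this event, for every $i,k\in[n]$ one has: $\|\boldsymbol{\xi}_i\|_2^2 = \Theta(\sigma_p^2 d)$; $D_i = \Theta(n(p+s))$ by Chernoff together with Assumption~\ref{condition:d_sigma0_eta}(4); $\mathrm{sign}(\tilde y_i)=y_i$ with $|\tilde y_i|=\Theta(\Xi)$ by homophily of the SBM; $\|\tilde{\boldsymbol{\xi}}_k\|_2^2 = \Theta(\sigma_p^2 d / D_k)$ and $\langle \boldsymbol{\xi}_{i'},\tilde{\boldsymbol{\xi}}_k\rangle = O(\sigma_p^2 d/D_k)$ whenever $i'\in\mathcal{N}(k)$ (and negligibly small otherwise); and Gaussian initialization yielding $|\langle \mathbf{w}_{j,r}^{(0)},\boldsymbol{\mu}\rangle|=\tilde O(\sigma_0\|\boldsymbol{\mu}\|_2)$, $|\langle \mathbf{w}_{j,r}^{(0)},\tilde{\boldsymbol{\xi}}_k\rangle|=\tilde O(\sigma_0\sigma_p\sqrt{d/D_k})$, with at least one index $r^*\in[m]$ satisfying $\langle \mathbf{w}_{j,r^*}^{(0)},j\boldsymbol{\mu}\rangle = \tilde\Omega(\sigma_0\|\boldsymbol{\mu}\|_2)$. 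Onto this I would layer an inductive hypothesis valid for $0\le t\le T_1$: (i) $|\rho_{j,r,i}^{(t)}|\le \hat{C}\,\sigma_0\sigma_p\sqrt{d/(n(p+s))}$ uniformly; (ii) $|\gamma_{j,r}^{(t)}|\le 2$ (stopping once some $\gamma$ exits $[-2,2]$); and (iii) $|\ell_i'^{(t)}|\in[1/4,1]$ for every $i$, which is forced by (i)--(ii) together with the small-$\sigma_0$ clause of Assumption~\ref{condition:d_sigma0_eta}(5), since together they keep $|f(\mathbf{W}^{(t)},\tilde{\mathbf{x}}_i)|$ uniformly small.

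For the signal, substitute the concentration bounds into the recursion~\eqref{eq:update_gamma1}. Because $\tilde y_i = y_i|\tilde y_i|$, the activation argument $\langle \mathbf{w}_{j,r}^{(t)},\tilde y_i\boldsymbol{\mu}\rangle$ shares a sign with $y_i$ when $j=+1$, and after pulling out the $\Theta(1)$ factors $|\ell_i'^{(t)}|$ and $|\tilde y_i|=\Theta(\Xi)$, the update collapses to a one-dimensional recursion
\begin{equation*}
h_{r^*}^{(t+1)} - h_{r^*}^{(t)} = \Theta\!\bigl(\eta\, m^{-1}\,\Xi^{q}\,\|\boldsymbol{\mu}\|_2^{2}\bigr)\cdot\bigl(h_{r^*}^{(t)}\bigr)^{q-1},\qquad h_{r^*}^{(t)}:=\langle \mathbf{w}_{j,r^*}^{(0)},j\boldsymbol{\mu}\rangle + \gamma_{j,r^*}^{(t)},
\end{equation*}
with $h_{r^*}^{(0)} = \tilde\Theta(\sigma_0\|\boldsymbol{\mu}\|_2)>0$. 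A standard tensor-power escape estimate (comparing to the ODE $\dot h = c\,h^{q-1}$ and integrating) then shows that $h_{r^*}^{(t)}$ reaches $\Omega(1)$ in exactly $T_1=\tilde\Theta(\eta^{-1}m\,\Xi^{-q}\sigma_0^{-(q-2)}\|\boldsymbol{\mu}\|_2^{-q})$ iterations, matching the stated rate and proving the first bullet.

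For the noise, expand $\langle \mathbf{w}_{j,r}^{(t)},\tilde{\boldsymbol{\xi}}_k\rangle$ via the signal-noise decomposition~\eqref{eq:w_decomposition}; the signal contribution vanishes because $\boldsymbol{\mu}\perp\boldsymbol{\xi}$, while the noise contribution is dominated by the self-term $i'=k$, giving $\langle \mathbf{w}_{j,r}^{(t)},\tilde{\boldsymbol{\xi}}_k\rangle = \langle \mathbf{w}_{j,r}^{(0)},\tilde{\boldsymbol{\xi}}_k\rangle + O(\rho_{j,r,k}^{(t)}/D_k)$, which under the induction hypothesis is $\tilde O(\sigma_0\sigma_p\sqrt{d/D_k})$. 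Hence $\sigma'(\cdot)=\tilde O((\sigma_0\sigma_p)^{q-1}(d/D_k)^{(q-1)/2})$; plugging this into~\eqref{eq:update_zeta1}, summing over the $\Theta(n(p+s))$ neighbors $k\in\mathcal{N}(i)$ (each contributing $D_k^{-1}=\Theta(1/(n(p+s)))$), and then accumulating over all $T_1$ steps bounds $|\rho_{j,r,i}^{(T_1)}|$ by
\begin{equation*}
\tilde O\!\left(T_1\cdot\tfrac{\eta}{m}\cdot\sigma_p^2 d\cdot(\sigma_0\sigma_p)^{q-1}\bigl(d/(n(p+s))\bigr)^{(q-1)/2}\right),
\end{equation*}
which, after substituting the formula for $T_1$ and invoking the assumption $n\cdot\mathrm{SNR}^{q}\cdot(n(p+s))^{(q-2)/2}=\tilde\Omega(1)$ from Theorem~\ref{thm:signal_learning_main}, is at most $\hat{C}\,\sigma_0\sigma_p\sqrt{d/(n(p+s))}$, closing the induction on (i); the analysis for $\underline{\rho}$ is symmetric.

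The main technical obstacle I anticipate is the noise induction: controlling, uniformly over $i,k\in[n]$ and $t\le T_1$, the cross-correlations $\langle \boldsymbol{\xi}_{i'},\tilde{\boldsymbol{\xi}}_k\rangle$ that appear when the signal-noise decomposition is plugged into $\langle \mathbf{w}_{j,r}^{(t)},\tilde{\boldsymbol{\xi}}_k\rangle$. The aggregated noises $\tilde{\boldsymbol{\xi}}_k$ for different $k$ are not independent (they share neighbors), so the concentration step must be performed conditional on the realized graph and then union-bounded, which is precisely where the density clause $p,s=\Omega(\sqrt{\log n/n})$ of Assumption~\ref{condition:d_sigma0_eta}(4) is used. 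A secondary difficulty is maintaining $|\ell_i'^{(t)}|=\Theta(1)$ throughout Stage~1, which requires carefully bookkeeping how $|f(\mathbf{W}^{(t)},\tilde{\mathbf{x}}_i)|$ grows under the inductive bounds and ultimately dictates the small-$\sigma_0$ clause in Assumption~\ref{condition:d_sigma0_eta}(5).
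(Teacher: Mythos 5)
Your overall strategy matches the paper's: establish concentration events (for $\|\boldsymbol{\xi}_i\|_2$, $D_i$, $\tilde y_i$, $\|\tilde{\boldsymbol{\xi}}_k\|_2$, cross-correlations, and initialization inner products), then a combined argument of induction on $\max_{j,r,i}|\rho_{j,r,i}^{(t)}|$ together with a tensor-power escape estimate for the signal coefficient. The paper organizes this slightly differently --- it first bounds $\Psi^{(t)}=\max_{j,r,i}|\rho_{j,r,i}^{(t)}|$ over a longer horizon $T_1^+$ using only $|\ell_i'|\le 1$, and only afterwards runs the signal growth argument on $[0,T_1]$ with $T_1\le T_1^+/2$ enforced by the SNR condition --- whereas you run a simultaneous induction on $\rho$, $\gamma$, and $\ell'$. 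Both routes are viable, but two technical slips in your noise estimate need fixing.

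First, the claim that the noise contribution to $\langle \mathbf{w}_{j,r}^{(t)},\tilde{\boldsymbol{\xi}}_k\rangle$ is ``dominated by the self-term $i'=k$'' and equals $O(\rho_{j,r,k}^{(t)}/D_k)$ is incorrect. Expanding via the decomposition and $\tilde{\boldsymbol{\xi}}_k=D_k^{-1}\sum_{l\in\mathcal{N}(k)}\boldsymbol{\xi}_l$, every $i'\in\mathcal{N}(k)$ contributes a diagonal term of size $\rho_{j,r,i'}^{(t)}/D_k$, and there are $\Theta(D_k)$ such terms; their sum is the neighborhood average $D_k^{-1}\sum_{i'\in\mathcal{N}(k)}\rho_{j,r,i'}^{(t)}$ (the paper's $\hat\rho_{j,r,k}^{(t)}$ in Lemma~\ref{lm:oppositebound}), which is $O(\Psi^{(t)})$, not $O(\Psi^{(t)}/D_k)$. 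Your final bound $\tilde O(\sigma_0\sigma_p\sqrt{d/D_k})$ happens to survive only because the induction hypothesis pins $\Psi^{(t)}$ at that same scale, so the neighborhood average is of the same order as the initialization term; the reasoning as stated, however, would understate the cumulative contribution by a factor of $D_k$.

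Second, and more consequentially, your per-step increment reads $\frac{\eta}{m}$ where the update rule~\eqref{eq:update_zeta1} has $\frac{\eta}{nm}$. For the signal this discrepancy is harmless because the sum over $\Theta(n)$ same-class samples cancels the $1/n$, but for the noise the sum $\sum_{k\in\mathcal{N}(i)}D_k^{-1}=\Theta(1)$ does not produce an extra $n$, so the $1/n$ in the prefactor genuinely survives. Carrying your $\frac{\eta}{m}$ through the accumulation yields a bound that would require $\mathrm{SNR}^q(n(p+s))^{(q-2)/2}=\tilde\Omega(1)$ rather than the assumed $n\cdot\mathrm{SNR}^q(n(p+s))^{(q-2)/2}=\tilde\Omega(1)$; i.e., as written, the induction does not close under the theorem's hypothesis. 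Restoring the $1/n$ makes the accumulated noise precisely $\tilde O\big(T_1\cdot\frac{\eta}{nm}\cdot\sigma_p^2 d\cdot(\sigma_0\sigma_p\sqrt{d/(n(p+s))})^{q-1}\big)$, and the SNR condition (with the factor $n$) is then exactly what guarantees it stays below $\hat C\sigma_0\sigma_p\sqrt{d/(n(p+s))}$ --- matching the paper's verification that $T_1\le T_1^+/2$.
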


The proof can be found in Appendix \ref{sec:stage_1}.  Lemmas~\ref{lemma:phase1_main_sketch} leverages the period of training when the derivatives of the loss function remain at a constant order. 
It is important to note that graph convolution plays a significant role in differentiating the learning speeds between signal learning and noise memorization. Originally, without graph convolution, the learning speeds are primarily determined by $ \| \boldsymbol{\mu} \|_2$ and $\| \boldsymbol{\xi} \|_2$ for the signal and noise, respectively \cite{cao2022benign}. In contrast, with graph convolution, the learning speeds are determined by $ |\tilde{y}|  \| \boldsymbol{\mu} \|_2$ and $\| \tilde{\boldsymbol{\xi}} \|_2$ respectively. Here, $ |\tilde{y}|  \| \boldsymbol{\mu} \|_2$ is close to $ \| \boldsymbol{\mu} \|_2$, but $\| \tilde{\boldsymbol{\xi}} \|_2$ is much smaller than $\| \boldsymbol{\xi} \|_2$ (see Figure \ref{fig:agg} for an illustration). This implies that graph convolution can slow down noise memorization, thereby allowing GNNs to focus more on signal learning.

\paragraph{Stage 2.} Building on the results from the first stage, we move to the second stage of the training process. In this stage, the loss derivatives are no longer constant, and we show that the training error can be minimized to an arbitrarily small value. Besides, the scale differences established during the first stage of learning are preserved throughout the second stage:

\begin{lemma}\label{lemma:signal_proof_sketch}
Under the same conditions as Theorem~\ref{thm:signal_learning_main}, for any $t\in [T_1,  T]$, it holds that  $\max_{r}\gamma_{j, r}^{(T_{1})} \geq 2, \forall j \in \{\pm 1\}$ and   $|\rho_{j,r,i}^{(t)}| \leq \sigma_{0}\sigma_{p}\sqrt{d/(n(p+s))}$
for all $j\in\{\pm 1\}$, $r\in [m]$ and $i\in[n]$. Moreover, there exists a $t\in [T_1,  T]$ such that
  $ L^{\mathrm{GCN}}_{\mathcal{S}}(\mathbf{W}^{(t)})   \leq   \epsilon$.
 
\end{lemma}

Lemma~\ref{lemma:signal_proof_sketch} presents two primary outcomes: (1) Throughout this training phase, it ensures that the noise memorization coefficients remain significantly small, while the coefficients of the signal learning reach large values. (2) It guarantees convergence for the GNN, showing that the training loss can be reduced to an arbitrarily small value.

\subsection{Test error analysis}

Analyzing the generalization performance of graph neural networks is a challenging task. To tackle this issue, we introduce an expectation over the distribution for a single data point. Specifically, we consider a new data point $(\mathbf{x},y)$ drawn from the SNM-SBM distribution. The following lemma provides an upper bound on the test loss of GNNs after training:
\begin{lemma}\label{lemma:signal_polulation_loss_main}
Let $T$ be defined as in Theorem~\ref{thm:signal_learning_main}. Under the same conditions as Theorem~\ref{thm:signal_learning_main}, for any $t \leq T$ with $L^{\mathrm{GCN}}_{\mathcal{S}}(\mathbf{W}^{(t)}) \leq 1$, it holds that $L^{\mathrm{GCN}}_{\mathcal{D}}(\mathbf{W}^{(t)}) \leq c_1 \cdot L^{\mathrm{GCN}}_{\mathcal{S}}(\mathbf{W}^{(t)}) + \exp(-c_2 n^{2})$, where $c_1$ and $c_2$ are positive constants.
\end{lemma}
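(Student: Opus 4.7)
The plan is to bound the population loss by relating the test-time output of the GCN to its training-time output via the signal-noise decomposition in Equation~(\ref{eq:w_decomposition}), and to absorb atypical test-time configurations into a rare event of mass at most $\exp(-c_2 n^2)$. By Lemma~\ref{lemma:signal_proof_sketch}, for all $t \leq T$ the coefficients satisfy $\max_r \gamma_{j,r}^{(t)} \geq 2$ and $|\rho_{j,r,i}^{(t)}| \leq \sigma_0 \sigma_p \sqrt{d/(n(p+s))}$; these two scale separations will let me argue that on any input whose graph-aggregation behaves typically, the signal part dominates $f(\mathbf{W}^{(t)}, \cdot)$.

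Concretely, for a fresh test point $(\tilde{\mathbf{x}}, y)$ with $\tilde{\mathbf{x}} = [\tilde y \boldsymbol{\mu}, \tilde{\boldsymbol{\xi}}]$, substituting the decomposition of $\mathbf{w}_{j,r}^{(t)}$ and using $\boldsymbol{\xi}_i \perp \boldsymbol{\mu}$ yields
\[
\langle \mathbf{w}_{j,r}^{(t)}, \tilde y \boldsymbol{\mu}\rangle = \langle \mathbf{w}_{j,r}^{(0)}, \tilde y \boldsymbol{\mu}\rangle + j \gamma_{j,r}^{(t)} \tilde y, \qquad \langle \mathbf{w}_{j,r}^{(t)}, \tilde{\boldsymbol{\xi}}\rangle = \langle \mathbf{w}_{j,r}^{(0)}, \tilde{\boldsymbol{\xi}}\rangle + \sum_{i} \rho_{j,r,i}^{(t)} \|\boldsymbol{\xi}_i\|_2^{-2} \langle \boldsymbol{\xi}_i, \tilde{\boldsymbol{\xi}}\rangle.
\]
I would then define a good event $\mathcal{E}$ over the fresh test randomness (Bernoulli edges from the test node to training nodes plus the fresh Gaussian $\boldsymbol{\xi}$) under which: (i) $D_{\mathrm{test}} = \Theta(n(p+s))$; (ii) $\mathrm{sign}(\tilde y) = y$ and $|\tilde y| = \Omega(\Xi)$ by an SBM Chernoff bound; (iii) $\|\tilde{\boldsymbol{\xi}}\|_2^2 = \Theta(\sigma_p^2 d / D_{\mathrm{test}})$; and (iv) $\max_i |\langle \boldsymbol{\xi}_i, \tilde{\boldsymbol{\xi}}\rangle|$ stays at its sub-Gaussian scale. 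A union bound of Chernoff and Gaussian tails yields $\mathbb{P}(\mathcal{E}^c) \leq \exp(-c_2 n^2)$ under the dense-graph conditions of Assumption~\ref{condition:d_sigma0_eta}.

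On $\mathcal{E}$, the bound $|\rho_{j,r,i}^{(t)}| \leq \sigma_0 \sigma_p \sqrt{d/(n(p+s))}$ combined with the sub-Gaussian cross-inner-product estimates makes $\sigma(\langle \mathbf{w}_{j,r}^{(t)}, \tilde{\boldsymbol{\xi}}\rangle)$ much smaller than $\sigma(\langle \mathbf{w}_{j,r}^{(t)}, \tilde y \boldsymbol{\mu}\rangle)$, while the training-time aggregation has the same signal structure. This yields a deterministic lower bound of the form $y \cdot f(\mathbf{W}^{(t)}, \tilde{\mathbf{x}}) \geq y_i \cdot f(\mathbf{W}^{(t)}, \tilde{\mathbf{x}}_i) - \delta$ for every $i \in [n]$ with a common small constant $\delta$. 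Applying the multiplicative logistic-loss inequality $\ell(a - \delta) \leq e^{\delta} \ell(a)$ and averaging over training indices gives $\mathbb{E}[\ell(y f(\mathbf{W}^{(t)}, \tilde{\mathbf{x}})) \mathds{1}_{\mathcal{E}}] \leq c_1 \cdot L^{\mathrm{GCN}}_{\mathcal{S}}(\mathbf{W}^{(t)})$. On $\mathcal{E}^c$, a crude bound $\ell(y f) = O(\mathrm{poly}(n,d))$ combined with $\mathbb{P}(\mathcal{E}^c) \leq \exp(-c_2 n^2)$ is absorbed into the residual $\exp(-c_2 n^2)$ term after a mild shrinking of $c_2$.

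The main obstacle is the partial dependence between $\tilde{\boldsymbol{\xi}}$ and $\{\boldsymbol{\xi}_i\}_{i \in [n]}$: because the test node's neighbors are training nodes, $\tilde{\boldsymbol{\xi}}$ is a $D_{\mathrm{test}}^{-1}$-scaled sum of the same $\boldsymbol{\xi}_i$'s that drive the coefficient iteration. Controlling $\langle \boldsymbol{\xi}_i, \tilde{\boldsymbol{\xi}}\rangle$ therefore requires splitting the sum into neighbor and non-neighbor contributions, using a deterministic bound for the self-term $\|\boldsymbol{\xi}_i\|_2^2 / D_{\mathrm{test}}$ when $i$ is a neighbor of the test point and a sub-Gaussian inner-product tail otherwise. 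Tracking constants across these pieces to establish the precise $\exp(-c_2 n^2)$ tail, rather than a weaker one, is the second-most delicate step and requires the density assumption $p, s = \Omega(\sqrt{\log n / n})$ from Assumption~\ref{condition:d_sigma0_eta}.
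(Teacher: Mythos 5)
Your high-level strategy matches the paper's: split the population loss into a good event $\mathcal{E}$ where the test output stays close to a training output, and a bad event $\mathcal{E}^c$ that is absorbed into the additive tail. Your way of comparing to the training loss is slightly different but legitimate --- you compare $y f(\mathbf{W}^{(t)}, \tilde{\mathbf{x}})$ to $y_i f(\mathbf{W}^{(t)}, \tilde{\mathbf{x}}_i)$ for \emph{every} $i$ and then average using $\ell(a-\delta) \leq e^{\delta}\ell(a)$, whereas the paper uses the fact that $L_{\mathcal{S}} \leq 1$ to single out one index $i$ with $\ell\big(y_i f(\mathbf{W}^{(t)}, \tilde{\mathbf{x}}_i)\big) \leq L_{\mathcal{S}}$, then shows the noise-patch outputs for the test and that training point differ by at most a constant (this is the content of Lemmas~\ref{lm:signalg} and~\ref{lm:signalg2_appendix}). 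Both routes yield $I_1 \lesssim L_{\mathcal{S}}$. Your awareness of the dependence between $\tilde{\boldsymbol{\xi}}$ and the training noises $\boldsymbol{\xi}_i$, and the need to separate neighbor self-terms from cross-terms, is the right concern, and the paper handles it through the $\hat\rho$-type aggregation in Lemma~\ref{lm:oppositebound}.

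The genuine gap is in your probability estimate for $\mathcal{E}^c$. You write that ``a union bound of Chernoff and Gaussian tails yields $\mathbb{P}(\mathcal{E}^c) \leq \exp(-c_2 n^2)$ under the dense-graph conditions,'' but your good event $\mathcal{E}$ includes degree concentration $D_{\mathrm{test}} = \Theta(n(p+s))$ and SBM label-aggregation consistency, and under Assumption~\ref{condition:d_sigma0_eta} (point 4, $p,s = \Omega(\sqrt{\log n / n})$) these Bernoulli-sum Chernoff bounds fail with probability at best $\exp\big(-\Omega(n(p+s))\big) = \exp\big(-\Omega(\sqrt{n\log n})\big)$ --- nowhere near $\exp(-c_2 n^2)$. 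In the paper, the $\exp(-c_2 n^2)$ rate does not come from graph concentration at all; it comes entirely from the sub-Gaussian tail of $\langle \mathbf{w}_{j,r}^{(t)}, \tilde{\boldsymbol{\xi}} \rangle$ (Lemma~\ref{lm:signalg}), whose variance scale $\sigma_0^2 \sigma_p^2 d / (n(p+s))$ is forced to be $\tilde O(m^{-4/(q-2)}n^{-2})$ by the \emph{initialization} condition on $\sigma_0$ (Assumption~\ref{condition:d_sigma0_eta}, point 5). In other words, the $n^2$ scaling is an artifact of the small-initialization regime, not of graph density. To repair your argument you should define $\mathcal{E}$ directly by the event in Lemma~\ref{lm:signalg}, and handle the low-degree/label-sign failure as a separate small term multiplied by a polynomial loss bound (the paper does this implicitly via Cauchy--Schwarz on $I_2$), rather than trying to push all the graph concentration events into the same $\exp(-c_2 n^2)$ tail.
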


Lemma \ref{lemma:signal_polulation_loss_main} demonstrates that GNNs can achieve a small test error. Combined with the results stated in Lemma~\ref{lemma:signal_proof_sketch}, this completes the proof for Theorem~\ref{thm:signal_learning_main}.

\begin{figure*}[htbp]
\centering
    \subfigure[Performance of MLP]{\includegraphics[width=0.4 \textwidth]{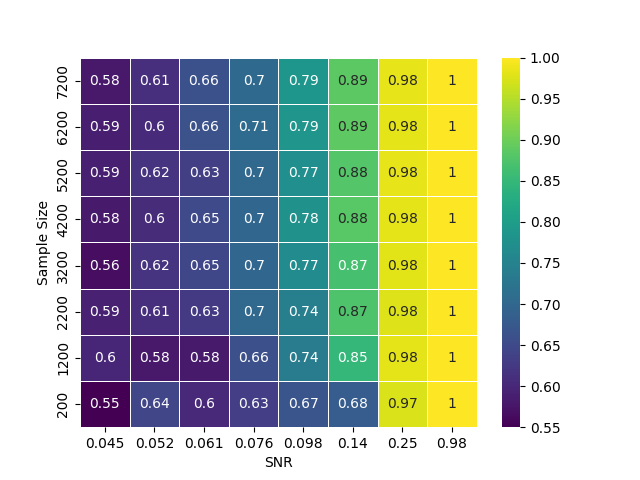}}
\subfigure[Performance of GCN]{\includegraphics[width=0.4 \textwidth]{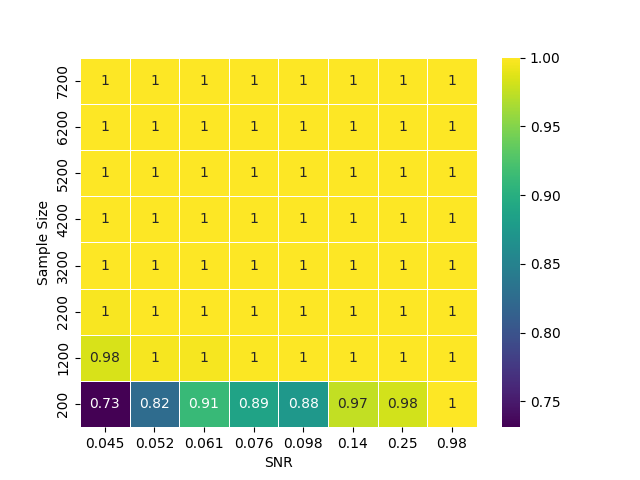}}
\caption{Test accuracy heatmap for MLPs and GCNs after training.}
\label{fig:heatmap}
\end{figure*}

\begin{figure*}[h] 
\centering
\begin{minipage}{0.23\textwidth}
\includegraphics[width =1.3in]{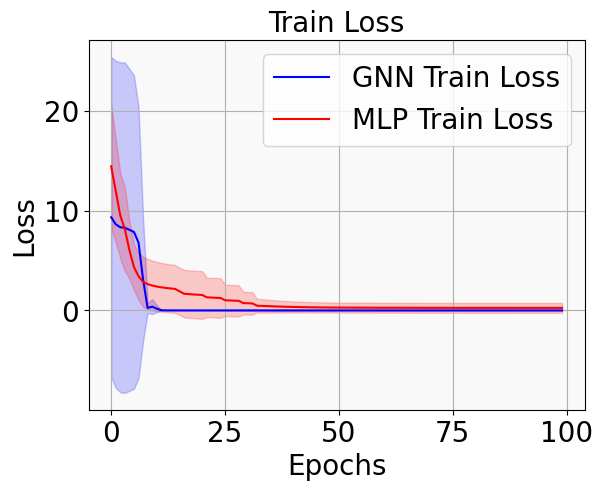}
\end{minipage} 
\hspace{2pt}
\begin{minipage}{0.23\textwidth}
\includegraphics[width =1.3in]{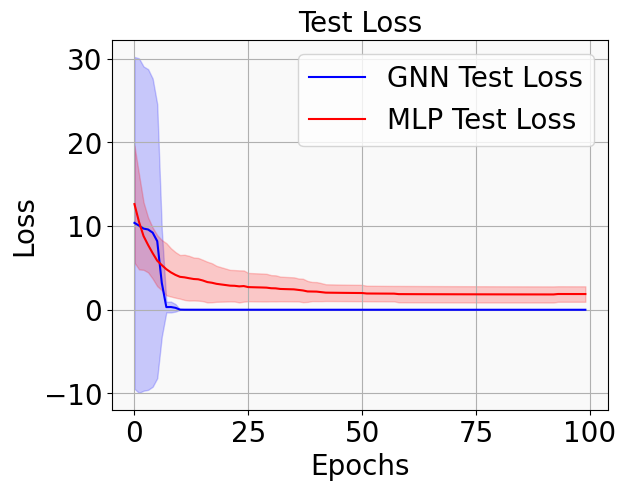}
\end{minipage}
\hspace{2pt}
\begin{minipage}{0.24\textwidth}
\includegraphics[width =1.38in]{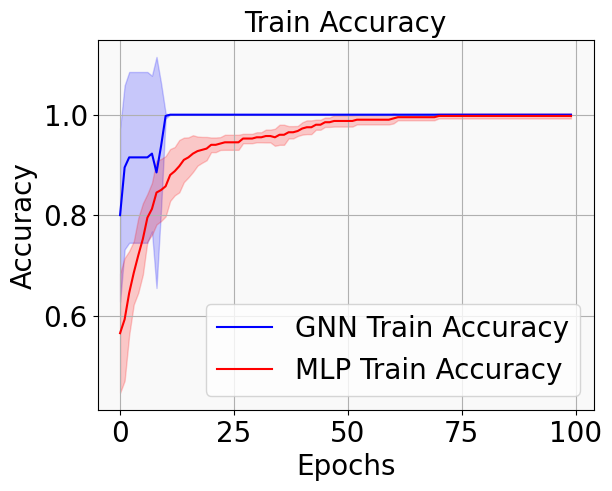}
\end{minipage}
\begin{minipage}{0.24\textwidth}
\includegraphics[width =1.39in]{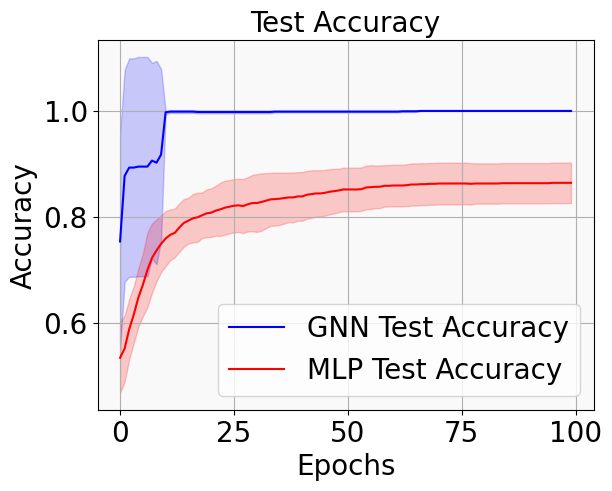}
\end{minipage}
\caption{The verification of our theoretical result with a modified real-world data. We show the training loss, testing loss, training accuracy, and testing accuracy for both MLP and GNN over a span of 100 training epochs. {Five experimental runs are conducted, with shaded areas highlighting error bars for variability.}}
\label{fig:mnist_graph}
\end{figure*}

\begin{figure}[hbtp]
\centering
\includegraphics[width=8.0cm]{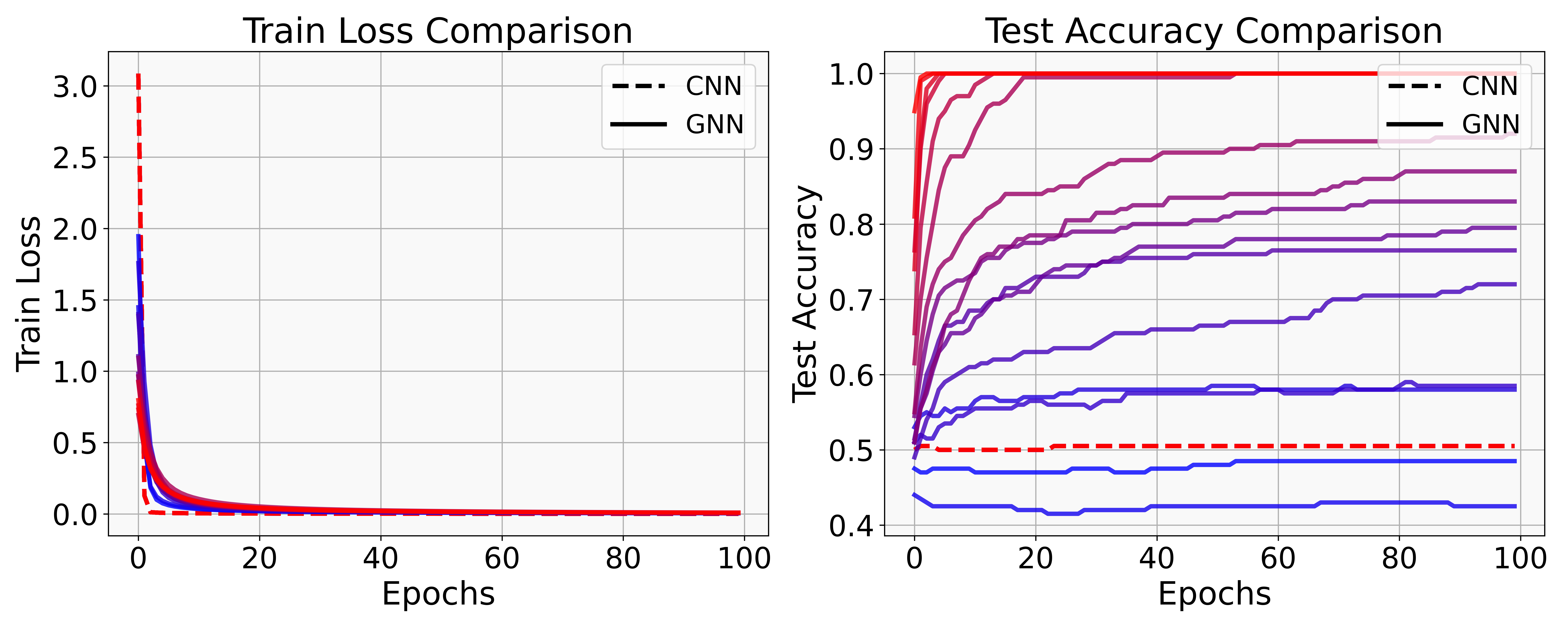}
\caption{Training loss and test accuracy comparison between CNN (dashed lines) and GNN (solid lines) models across varying graph densities.} 
\label{fig:dense}
\end{figure}

\begin{figure}[hbtp]
\centering
\includegraphics[width=8.0cm]{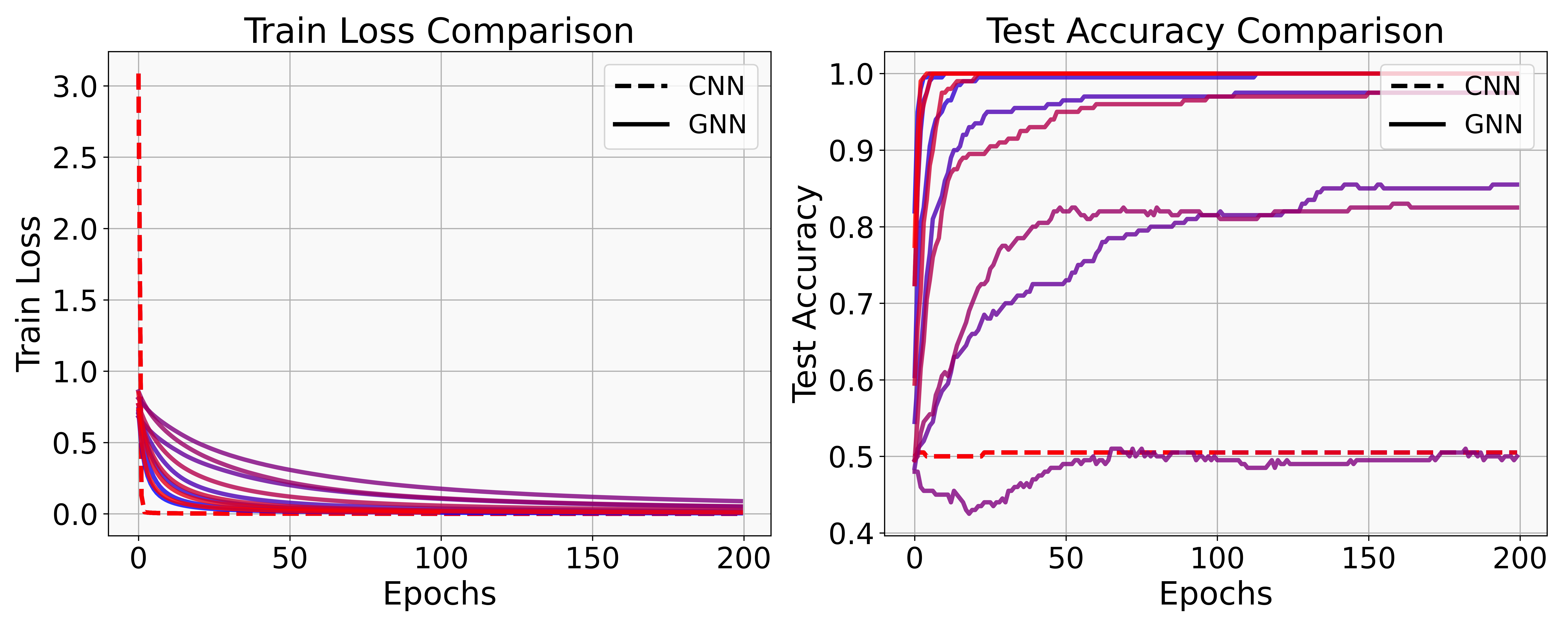}
\caption{Training loss and test accuracy comparison between CNN (dashed lines) and GNN (solid lines) models under varying homophily levels while keeping graph density constant.}
\label{fig:homo}
\end{figure}

\section{Experiments} \label{sec:experiments}

In this section, we validate our theoretical findings through numerical simulations using synthetic data and modified real-world data.

\paragraph{Synthetic data.} We generated synthetic data using the SNM-SBM model. The signal vector $\boldsymbol{\mu}$ is drawn from a standard normal distribution $\mathcal{N}(\mathbf{0}, \mathbf{I})$, and the noise vector $\boldsymbol{\xi}$ is sampled from a Gaussian distribution $\mathcal{N}(\mathbf{0}, 20\mathbf{I})$. We set the training data size to $n = 50$, the input dimension to $d = 500$, the edge probability to $p = 0.5$, and $s = 0.08$.
We train a two-layer MLP and GNN using Equation (\ref{eq:gcn}) with polynomial ReLU activation $q = 3$. The optimization is by the gradient descent method with a learning rate of $\eta = 0.03$. The primary task is node classification, aiming to predict the class labels of nodes in a graph. Figure \ref{fig:overfitting} shows the training loss, test loss, training accuracy, and test accuracy for both the MLP and GNN. Our observations reveal that both the GNN and MLP can achieve zero training error. However, while the GNN attains nearly zero test error, the MLP fails to generalize. This simulation result validates our theoretical findings.

\paragraph{Graph Density}

 We maintained a fixed ratio of \( p/s = 10 \), varying only the absolute values of \( p \) and \( s \). Specifically, we considered values of \( p \) in the range \( \{0.01,\, 0.02,\, 0.03,\, \dots,\, 0.1,\, 0.2,\, \dots,\, 0.7\} \). This experimental setup preserves the relative homophily while systematically adjusting the overall graph density. The results, depicted in Figure~\ref{fig:dense}, utilize a color gradient from blue (sparser graphs) to red (denser graphs) to highlight changes in model performance. We observe a significant improvement in test accuracy as the graph becomes denser. Beyond a certain density threshold, the model consistently achieves near-perfect test accuracy. These findings empirically support our claim that dense graphs facilitate effective aggregation.

\paragraph{Graph Homophily}
We fixed the sum \( p + s \) to a constant value, varying only the individual values of \( p \) and \( s \). Specifically, we considered values of \( p \) within the range \(\{0.1,\, 0.15,\, 0.2,\, \dots,\, 0.6\}\). This configuration maintains a constant overall graph density while altering the level of homophily. Results are presented in Figure~\ref{fig:homo}, where we employ color coding from blue (strongly homophilic) to red (strongly heterophilic). We observe that both strongly homophilic and strongly heterophilic graph structures achieve higher test accuracy, whereas graphs with intermediate homophily exhibit relatively lower performance. These findings empirically support our theoretical claims concerning the effects of homophily.

\paragraph{Heatmap of test accuracy} 

We then explore a range of Signal-to-Noise Ratios (SNRs) from 0.045 to 0.98, and a variety of sample sizes, $n$, ranging from 200 to 7200. Based on our results, we train the neural network for 200 steps for each combination of SNR and sample size $n$. After training, we calculate the test accuracy for each run. The results are presented as a heatmap in Figure \ref{fig:heatmap}. Compared to MLPs, GCNs demonstrate a perfect accuracy score of 1 across a more extensive range in the SNR and $n$ plane, indicating that GNNs have a broader \textit{high test accuracy} regime with high test accuracy. 

{\paragraph{Real-world data.} We conduct an experiment using the MNIST dataset. To align with the theoretical setting, we added Gaussian noise sampled from $\sigma^2_p \mathbf{I}$ to the digit images and then divide both the noise component and the digit component into two groups of patches of equal size, inspired by \cite{cao2022benign}. The details for creating this data can be found in Appendix \ref{seca:mnist}. We select the digits `1' and `2' from the ten MNIST digits, using a training sample size of $n=100$, while the remaining samples are used as the test set. The graph structure was generated using a stochastic block model, with an edge probability of $p = 0.2$, and $s = 0.01$. We set the learning rate $\eta = 0.0005$ and the noise level $\sigma_p = 0.1$. Detailed results are shown in Figure \ref{fig:mnist_graph}. The results are consistent with our theoretical conclusions, reinforcing the insights derived from our analysis.

\section{Conclusion and Limitation} \label{sec:conclusion}

This paper leverages feature learning theory to analyze the optimization and generalization behavior of GCNs. Specifically, we adopt a signal-noise decomposition to characterize the signal learning and noise memorization processes during the training of a two-layer GCN. We establish specific conditions under which a GNN primarily focuses on signal learning, resulting in low training and testing errors. When combined with results for MLPs, our findings quantitatively demonstrate that GCNs, by utilizing structural information, outperform MLPs in terms of generalization ability across a broader benign regime.

\paragraph{Limitation} Our theoretical framework is limited to analyzing the role of graph convolution within a specific two-layer GCN and a particular data model. In practice, the feature learning dynamics of neural networks can be influenced by various factors, such as the depth of the GNN, the choice of activation function, the optimization algorithm, and the underlying data distribution \cite{kou2023benign,zou2021understanding,zou2023benefits}. Future work could extend our framework to account for the impact of these additional factors on feature learning in GCNs.

\section*{Acknowledgments}
  We thank the anonymous reviewers for their insightful comments to improve the paper. WH was supported by JSPS KAKENHI Grant Number 24K20848. YC was supported in part by NSFC 12301657 and Hong Kong ECS award 27308624. TS was partially supported by JSPS KAKENHI (24K02905) and JST CREST (JPMJCR2015).

\bibliography{reference}
\bibliographystyle{plain}

\section*{Checklist}



 \begin{enumerate}

 \item For all models and algorithms presented, check if you include:
 \begin{enumerate}
   \item A clear description of the mathematical setting, assumptions, algorithm, and/or model. [Yes] Please refer to the corresponding context in Sections \ref{sec:problem_setup} and \ref{sec:theoretical_results}.
   \item An analysis of the properties and complexity (time, space, sample size) of any algorithm. [Yes] We have shown the sample complexity in Theorem \ref{thm:signal_learning_main}.
   \item (Optional) Anonymized source code, with specification of all dependencies, including external libraries. [Yes] We have uploaded the code as supplementary material.
 \end{enumerate}

 \item For any theoretical claim, check if you include:
 \begin{enumerate}
   \item Statements of the full set of assumptions of all theoretical results. [Yes] We have stated the complete set of assumptions in Assumption \ref{condition:d_sigma0_eta}.
   \item Complete proofs of all theoretical results. [Yes] The proof sketch and complete proof are provided in Section \ref{section:tech_overview} and Appendices \ref{seca:preliminary}, \ref{seca:general}, \ref{seca:dynamics}, and \ref{seca:generalization}, respectively.
   \item Clear explanations of any assumptions. [Yes] We have provided explanations below Assumption \ref{condition:d_sigma0_eta}.   
 \end{enumerate}

 \item For all figures and tables that present empirical results, check if you include:
 \begin{enumerate}
   \item The code, data, and instructions needed to reproduce the main experimental results (either in the supplemental material or as a URL). [Yes] We have uploaded the code as supplementary material and provided instructions in Section \ref{sec:experiments} and Appendices \ref{seca:phase} and \ref{seca:mnist}.
   \item All the training details (e.g., data splits, hyperparameters, how they were chosen). [Yes] We have uploaded the code as supplementary material and provided details in Section \ref{sec:experiments} and Appendices \ref{seca:phase} and \ref{seca:mnist}.
         \item A clear definition of the specific measure or statistics and error bars (e.g., with respect to the random seed after running experiments multiple times). [Yes] We have included the error bars in Figures \ref{fig:overfitting} and \ref{fig:mnist_graph}.
         \item A description of the computing infrastructure used. (e.g., type of GPUs, internal cluster, or cloud provider). [Yes] We have included the computing infrastructure details in Appendix \ref{sec:comp}.
 \end{enumerate}

 \item If you are using existing assets (e.g., code, data, models) or curating/releasing new assets, check if you include:
 \begin{enumerate}
   \item Citations of the creator If your work uses existing assets. [Yes] We have provided the citation.
   \item The license information of the assets, if applicable. [Yes] We have uploaded the code to the supplementary material.
   \item New assets either in the supplemental material or as a URL, if applicable. [Not Applicable] This paper does not release new assets.
   \item Information about consent from data providers/curators. [Not Applicable] Our study does not involve external data requiring consent.
   \item Discussion of sensible content if applicable, e.g., personally identifiable information or offensive content. [Not Applicable] Our study does not include any sensitive or personally identifiable information.
 \end{enumerate}

 \item If you used crowdsourcing or conducted research with human subjects, check if you include:
 \begin{enumerate}
   \item The full text of instructions given to participants and screenshots. [Not Applicable] Our study does not involve any human participants.
   \item Descriptions of potential participant risks, with links to Institutional Review Board (IRB) approvals if applicable. [Not Applicable] There are no human subjects involved in our research.
   \item The estimated hourly wage paid to participants and the total amount spent on participant compensation. [Not Applicable] No participants were recruited or compensated.
 \end{enumerate}

 \end{enumerate}

\newpage
\appendix
\onecolumn

\section{Preliminary Lemmas} \label{seca:preliminary}

In this section, we present preliminary lemmas which form the foundation for the proofs to be detailed in the subsequent sections. The proof will be developed after the lemmas presented.

\subsection{Preliminary Lemmas without Graph Convolution}

{In this section, we introduce necessary lemmas that will be used in the analysis without graph convolution, following the study of feature learning in CNN \cite{cao2022benign}. In particular, Lemma \ref{lemma:data_innerproducts} states that noise vectors are ``almost orthogonal'' to each other and Lemma \ref{lemma:initialization_weight} indicates that random initialization results in a controllable inner product between the weights at initialization and the data vectors.}

\begin{lemma} \cite{cao2022benign} \label{lemma:data_innerproducts}
Suppose that $\delta > 0$ and $d = \Omega( \log(4n / \delta) ) $. Then with probability at least $1 - \delta$, 
\begin{align*}
    &\sigma_p^2 d / 2\leq \| \boldsymbol{\xi}_i \|_2^2 \leq 3\sigma_p^2 d / 2,\\
    & |\langle \boldsymbol{\xi}_i, \boldsymbol{\xi}_{i'} \rangle| \leq 2\sigma_p^2 \cdot \sqrt{d \log(4n^2 / \delta)},
\end{align*}
for all $i,i'\in [n]$.
\end{lemma}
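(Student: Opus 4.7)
The plan is to reduce both bounds to standard Gaussian concentration by exploiting the explicit form of the noise distribution. Write $\bxi_i = \sigma_p \mathbf{P} \bz_i$, where $\mathbf{P} = \mathbf{I} - \|\bmu\|_2^{-2} \bmu \bmu^\top$ is the orthogonal projection onto the $(d-1)$-dimensional subspace orthogonal to $\bmu$, and the $\bz_i \sim \mathcal{N}(\mathbf{0}, \mathbf{I}_d)$ are independent across $i$. With this representation in hand, the whole lemma is just (i) $\chi^2$ concentration for the norms, (ii) a conditional Gaussian tail bound for the cross terms, and (iii) union bounds over $i$ and $(i,i')$.

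For the norm bound, note that $\|\bxi_i\|_2^2 / \sigma_p^2 = \bz_i^\top \mathbf{P} \bz_i \sim \chi^2_{d-1}$. The Laurent--Massart inequality (or any sub-exponential tail bound for $\chi^2$) yields, for an absolute constant $c>0$ and $s \in (0,1]$,
\begin{equation*}
\Pr\!\left( \left|\|\bxi_i\|_2^2 - \sigma_p^2 (d-1) \right| \geq s\, \sigma_p^2 d \right) \leq 2 \exp(-c d s^2).
\end{equation*}
Choosing $s$ to be a small numerical constant (e.g.\ $1/3$) gives $\sigma_p^2 d / 2 \leq \|\bxi_i\|_2^2 \leq 3 \sigma_p^2 d / 2$ with per-$i$ failure probability at most $2 \exp(-c' d) \leq \delta/(2n)$, where the last inequality uses $d = \Omega(\log(4n/\delta))$. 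Union bounding over $i \in [n]$ controls this event with probability at least $1 - \delta/2$.

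For the inner-product bound, I would condition on $\bxi_i$ (equivalently on $\bz_i$) for a fixed pair $i \neq i'$. Using $\mathbf{P}^\top \mathbf{P} = \mathbf{P}$,
\begin{equation*}
\la \bxi_i, \bxi_{i'}\ra = \sigma_p\, \bxi_i^\top \mathbf{P}\, \bz_{i'},
\end{equation*}
which, conditionally on $\bxi_i$, is a centered Gaussian in $\bz_{i'}$ with variance $\sigma_p^2 \|\mathbf{P} \bxi_i\|_2^2 = \sigma_p^2 \|\bxi_i\|_2^2$. On the event from the previous paragraph this variance is at most $3 \sigma_p^4 d / 2$, so the Gaussian tail bound with $t = 2 \sigma_p^2 \sqrt{d \log(4 n^2/\delta)}$ gives a conditional failure probability of at most $2 \exp\!\left(- \frac{4 \log(4 n^2/\delta)}{3}\right) \leq \delta/(2n^2)$. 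A union bound over the at most $n^2$ pairs $(i,i')$ then controls the total failure probability by $\delta/2$; the case $i=i'$ is trivially absorbed by the norm bound.

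Combining the two events via one final union bound yields the stated probability $1 - \delta$. The main (and only) obstacle I anticipate is purely bookkeeping: ensuring that the numerical constants---the $1/2$ and $3/2$ on the norm, and the leading $2$ in $2\sigma_p^2 \sqrt{d\log(4n^2/\delta)}$---come out exactly as stated, which pins down the constant $s$ in the Laurent--Massart step and the constant in front of $\sqrt{d\log(4n^2/\delta)}$. Since the lemma is attributed to \citep{cao2022benign}, no deeper idea is needed beyond this careful choice of constants.
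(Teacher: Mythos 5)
The paper does not prove this lemma; it simply cites it from Cao et al.\ (2022), so there is no in-paper proof to compare against. Your argument is a correct and self-contained reconstruction: the representation $\bxi_i = \sigma_p \mathbf{P}\bz_i$ with $\mathbf{P} = \mathbf{I} - \|\bmu\|_2^{-2}\bmu\bmu^\top$ is exactly right (it reproduces the stated covariance), $\bz_i^\top \mathbf{P}\bz_i \sim \chi^2_{d-1}$ handles the norm via sub-exponential concentration, conditioning on $\bxi_i$ turns the cross term into a one-dimensional Gaussian with variance $\sigma_p^2 \|\bxi_i\|_2^2$, and the constants check out under $d = \Omega(\log(4n/\delta))$. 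This is the standard route and almost certainly the same one used in the cited source.

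One small wrinkle: you say the $i=i'$ case of the inner-product bound is ``trivially absorbed by the norm bound,'' but in fact $\la\bxi_i,\bxi_i\ra = \|\bxi_i\|_2^2 \approx \sigma_p^2 d$, which is far larger than $2\sigma_p^2\sqrt{d\log(4n^2/\delta)}$ for large $d$, so the second inequality simply cannot hold at $i=i'$. This is a latent imprecision in the lemma as restated in the paper (it should read $i\neq i'$, which is how it is actually used, e.g.\ in the proof of Lemma~\ref{lemma:graph_data_innerproducts}), not a flaw in your argument; but your phrasing should say the bound is only claimed and proved for distinct indices rather than suggest it follows from the norm bound.
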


\begin{lemma} \cite{cao2022benign} \label{lemma:initialization_weight} Suppose that $d  = \Omega( \log(nm / \delta) )$, $ m = \Omega(\log(1 / \delta))$. Then with probability at least $1 - \delta$, 
\begin{align*}
    &|\langle \mathbf{w}_{j,r}^{(0)}, \boldsymbol{\mu} \rangle | \leq \sqrt{2 \log(8m/\delta)} \cdot \sigma_0 \| \boldsymbol{\mu} \|_2,\\
    &| \langle \mathbf{w}_{j,r}^{(0)}, \boldsymbol{\xi}_i \rangle | \leq 2\sqrt{ \log(8mn/\delta)}\cdot \sigma_0 \sigma_p \sqrt{d}, 
\end{align*}
for all $r\in [m]$,  $j\in \{\pm 1\}$ and $i\in [n]$. Moreover, 
\begin{align*}
    \sigma_0 \| \boldsymbol{\mu} \|_2 / 2  & \leq \max_{r\in[m]} j\cdot \langle \mathbf{w}_{j,r}^{(0)}, \boldsymbol{\mu} \rangle \leq \sqrt{2 \log(8m/\delta)} \cdot \sigma_0 \| \boldsymbol{\mu} \|_2,\\
    \sigma_0 \sigma_p \sqrt{d} / 4 & \leq \max_{r\in[m]} j\cdot \langle \mathbf{w}_{j,r}^{(0)}, \boldsymbol{\xi}_i \rangle \leq 2\sqrt{ \log(8mn/\delta)} \cdot \sigma_0 \sigma_p \sqrt{d},
\end{align*}
for all $j\in \{\pm 1\}$ and $i\in [n]$.
\end{lemma}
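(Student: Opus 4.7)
The plan is to reduce every claim to a standard statement about univariate Gaussian random variables and then aggregate failure events via a union bound. The key structural observation is that $\la \wb_{j,r}^{(0)}, \bmu \ra$ is exactly $\mathcal{N}(0, \sigma_0^2 \|\bmu\|_2^2)$, while $\la \wb_{j,r}^{(0)}, \bxi_i \ra$ is Gaussian with variance $\sigma_0^2 \|\bxi_i\|_2^2$ conditionally on $\bxi_i$, because the initialization $\wb_{j,r}^{(0)}$ is independent of the data. I will therefore handle the signal part unconditionally and the noise part by first conditioning on $\bxi_i$ and then invoking the norm control $\sigma_p^2 d/2 \leq \|\bxi_i\|_2^2 \leq 3\sigma_p^2 d/2$ from Lemma~\ref{lemma:data_innerproducts}.

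First I would prove the two upper bounds. For the signal, applying the standard Gaussian tail inequality $\Pr(|Z|\geq t\sigma) \leq 2\exp(-t^2/2)$ with $t = \sqrt{2\log(8m/\delta)}$ and union bounding over the $2m$ pairs $(j,r)$ gives the first displayed bound with failure probability at most $\delta/4$. For the noise, the same tail bound applied conditionally on $\bxi_i$, with the upper norm bound $\|\bxi_i\|_2 \leq \sigma_p \sqrt{3d/2}$ absorbed into the constant, followed by a union bound over the $4mn$ triples $(j,r,i)$, yields the claimed $2\sqrt{\log(8mn/\delta)}\cdot \sigma_0\sigma_p\sqrt{d}$ bound up to absolute constants. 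The hypothesis $d = \Omega(\log(nm/\delta))$ enters only through the invocation of Lemma~\ref{lemma:data_innerproducts}.

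Next I would establish the lower bounds on the maxima. For fixed $j$, the $m$ scalars $\{j \cdot \la \wb_{j,r}^{(0)}, \bmu \ra\}_{r \in [m]}$ are i.i.d.\ $\mathcal{N}(0, \sigma_0^2 \|\bmu\|_2^2)$, since the sign flip by $j$ preserves the centered Gaussian law. Because a centered Gaussian exceeds half of its standard deviation with some absolute constant probability $c > 0$, independence across neurons gives
\[
\Pr\!\bigl(\max_{r \in [m]} j \cdot \la \wb_{j,r}^{(0)}, \bmu \ra < \sigma_0 \|\bmu\|_2 / 2\bigr) \leq (1-c)^m \leq \exp(-cm),
\]
which is at most $\delta/4$ under the assumption $m = \Omega(\log(1/\delta))$. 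The noise-side lower bound follows by the same argument after conditioning on $\bxi_i$, this time using $\|\bxi_i\|_2 \geq \sigma_p \sqrt{d/2}$ to guarantee a conditional standard deviation comfortably above the target threshold $\sigma_0 \sigma_p \sqrt{d}/4$; a union bound over $2n$ pairs $(j,i)$ closes the argument.

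I do not expect a genuine obstacle here: the lemma is a purely probabilistic fact about Gaussian initialization intersected with a Gaussian data-norm concentration event, so neither the GCN architecture, the polynomial activation, nor the graph structure plays any role. The only delicate point is the bookkeeping of the four separate failure events (upper tail for signal, upper tail for noise, max-lower-bound for signal, max-lower-bound for noise) so that each contributes at most $\delta/4$, which is precisely where the stated hypotheses on $d$ and $m$ enter.
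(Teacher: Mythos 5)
The paper states this lemma as a citation to \citet{cao2022benign} and provides no proof of its own, so there is no internal argument to compare against; your reconstruction is the standard one and is correct in substance. Conditioning the noise inner products on $\bxi_i$, invoking the norm control of Lemma~\ref{lemma:data_innerproducts}, Gaussian tail bounds with union bounds for the uniform upper bounds, and independence across $r$ together with a per-neuron constant-probability event for the lower bounds on the maxima is exactly the intended route, and the polynomial activation, graph structure, and SBM all correctly play no role here.

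One small bookkeeping caveat: in your treatment of the noise-side lower bound you take a union over the $2n$ pairs $(j,i)$, so the per-pair failure probability $(1-c)^m$ must satisfy $2n(1-c)^m \le \delta/4$, which requires $m = \Omega(\log(n/\delta))$ rather than the weaker $m = \Omega(\log(1/\delta))$ that you quote as sufficient. This is harmless in context — the paper's global Assumption~\ref{condition:d_sigma0_eta} has $n,m = \Omega(\mathrm{polylog}(d))$ and $\delta = d^{-1}$, under which $\log(n/\delta) = O(\mathrm{polylog}(d))$ is absorbed — but as a self-contained derivation of the lemma at the stated hypothesis level it is a gap worth flagging explicitly rather than letting the constant $c$ silently carry the extra $\log n$ factor.
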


\subsection{Preliminary Lemmas on Graph Properties}

{We now introduce important lemmas that are critical to our analysis. The key idea to ensure a relatively dense graph. In a sparser graph, the concentration properties of graph degree (Lemma \ref{lem:degree}), the graph convoluted label (\ref{lemma:graph_numberofdata}), the graph convoluted noise vector (Lemma \ref{lemma:initialization_norms} and Lemma \ref{lemma:graph_data_innerproducts}) are no longer guaranteed. This lack of concentration affects the behavior of coefficients during gradient descent training, leading to deviations from our current main results.}

\begin{lemma} [Degree concentration] \label{lem:degree}
Let $p, s = \Omega \left( \sqrt{\frac{\log (n/\delta)}{n}} \right)$ and $\delta > 0$, then with probability at least $1 - \delta$, we have
\begin{align*}
  n(p+s)/4 \le  D_{i} \le  3n(p+s)/4.  
\end{align*} 
\end{lemma}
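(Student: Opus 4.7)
The plan is to view $D_i$ as a sum of independent Bernoulli edge indicators, apply a Chernoff-style bound conditional on the class labels, handle the residual label randomness separately, and then take a union bound over all $n$ nodes.

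First, I would fix $i$ and condition on the labels $\{y_j\}_{j=1}^{n}$. Decomposing $D_i = 1 + \sum_{j \neq i} a_{ij}$, where $a_{ij} \sim \mathrm{Ber}(p)$ if $y_j = y_i$ and $a_{ij} \sim \mathrm{Ber}(s)$ otherwise (all independent), and writing $N_+ = |\{j \neq i : y_j = y_i\}|$, the conditional mean is $\mu_i := 1 + N_+ p + (n - 1 - N_+) s$. Since the $y_j$ are i.i.d.\ Rademacher, Hoeffding's inequality gives $|N_+ - (n-1)/2| = O(\sqrt{n \log(n/\delta)})$ with probability at least $1 - \delta/(2n)$, so that $|\mu_i - n(p+s)/2| = O((p+s)\sqrt{n \log(n/\delta)})$. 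Under $p+s = \Omega(\sqrt{\log(n/\delta)/n})$, this slack is a factor $O(\sqrt{\log(n/\delta)/n}) = o(1)$ relative to $n(p+s)$, so $\mu_i = (1 + o(1))\cdot n(p+s)/2$.

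Next, conditional on the labels with $\mu_i$ thus controlled, I would apply the multiplicative Chernoff bound
\[
\Pr(|D_i - \mu_i| > \epsilon \mu_i \mid \{y_j\}) \leq 2 \exp(-\epsilon^2 \mu_i / 3),
\]
with $\epsilon$ a small absolute constant (say $1/8$). Because $\mu_i = \Theta(n(p+s))$ and the hypothesis forces $n(p+s) = \Omega(\sqrt{n \log(n/\delta)})$, which in particular exceeds any constant multiple of $\log(n/\delta)$ once the hidden constant in the $\Omega(\cdot)$ on $p,s$ is taken sufficiently large, the right-hand side can be bounded by $\delta/(2n)$. Combining the two slacks yields $D_i \in [n(p+s)/4,\, 3n(p+s)/4]$ with probability at least $1 - \delta/n$ for each fixed $i$, and a union bound over $i \in [n]$ completes the argument.

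The hard part is arithmetic rather than conceptual: I must calibrate the two deviation budgets — the label-concentration slack on $\mu_i$ and the Chernoff slack of $D_i$ around $\mu_i$ — so that together they fit strictly inside the window $[n(p+s)/4,\, 3n(p+s)/4]$ around the target $n(p+s)/2$. The hypothesis $p, s = \Omega(\sqrt{\log(n/\delta)/n})$ is designed precisely so that $n(p+s)$ is large enough to drive both concentration steps below $\delta/n$ simultaneously, which is exactly what the final union bound needs.
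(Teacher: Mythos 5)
Your proof is correct and reaches the same conclusion, but by a genuinely different route from the paper's. The paper applies additive Hoeffding in one shot to $D_i = 1 + \sum_{j\neq i} a_{ij}$, giving $|D_i - \mathbb{E}[D_i]| \lesssim \sqrt{n\log(n/\delta)}$, and then reads the hypothesis $p+s = \Omega(\sqrt{\log(n/\delta)/n})$ as exactly the condition making this deviation a small fraction of $n(p+s)$; the label randomness is absorbed into a balanced-class expectation. You instead condition on the labels, bound the class split $N_+$ by a separate Hoeffding step, and then invoke a \emph{multiplicative} Chernoff bound for the edges conditional on the labels. This is more careful about the label randomness, which the paper somewhat glosses over, but it trades one concentration tool for two and changes the deviation scale: the multiplicative tail $\exp(-\Theta(n(p+s)))$ drops below $\delta/n$ only once $n(p+s) \gtrsim \log(n/\delta)$, an additional (mild, and satisfied under Assumption~\ref{condition:d_sigma0_eta}, but not literally implied by the hypothesis alone unless $n \gtrsim \log(n/\delta)$ or the hidden constant on $p,s$ is taken correspondingly large) requirement that additive Hoeffding sidesteps, since its deviation $\sqrt{n\log(n/\delta)}$ is precisely the quantity the hypothesis is calibrated to control. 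Both proofs are valid; the paper's is more economical, yours is more explicit about where class-balance concentration actually enters.
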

\begin{proof}
It is known that the degrees are sums of Bernoulli random variables.
\begin{align*} 
    D_{i} = 1 + \sum_{j \neq i}^{n} a_{ij}, 
\end{align*}
where $a_{ij} = [\mathbf{A}]_{ij}$. Hence, by the Hoeffding's inequality, with probability at least $ 1 - \delta/n$
\begin{align*}
     | D_{i} - \mathbb{E}[D_{i}]| <  \sqrt{\log(n/\delta)(n-1) }.
\end{align*}
Note that $a_{ii} = 1$ is a fixed value, which means that it is not a random variable, thus the denominator in the exponential part is $n-1$ instead of $n$. Now we calculate the expectation of degree:
\begin{align*}
\mathbb{E}[D_{ii}] = 1+\frac{n}{2} s + (\frac{n}{2}-1)p  = n(p+s)/2 + 1-p,
\end{align*}
then we have
\begin{align*}
    \left| D_{i} -  n(p+s)/2 + 1 - p \right| \le  \sqrt{n \log (n/\delta)}. 
\end{align*}
Because that $p,s = \Omega \left( \sqrt{\frac{\log (n/\delta)}{n}} \right)$, we further have,
\begin{align*}
  n(p+s)/4 \le  D_{i} \le  3n(p+s)/4.  
\end{align*}
Applying a union bound over $i \in [n]$ conclude the proof.
\end{proof}

\begin{lemma}\label{lemma:graph_numberofdata}
Suppose that $\delta > 0$ and $n  \geq 8 \frac{p+s}{(p-s)^2} \log(4/\delta)$. Then with probability at least $1 - \delta$, 
\begin{align*}
  \frac{1}{2} \frac{p-s}{p+s} |y_i| \le   |\Tilde{y}_i| \le \frac{3}{2} \frac{p-s}{p+s} |y_i|.
\end{align*}
\end{lemma}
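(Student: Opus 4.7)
The plan is to establish the concentration $\tilde{y}_i \approx \Xi y_i$ by controlling the numerator and denominator of $\tilde{y}_i = D_i^{-1}\sum_{k\in\mathcal{N}(i)} y_k$ in a coordinated way. First I would reduce to $y_i = +1$ by symmetry of the labels, and decompose the numerator as $N_+ - N_-$, where $N_+$ counts those $k\in\mathcal{N}(i)$ with $y_k = +1$ (including $i$ itself via the self-loop in $\tilde{\mathbf{A}}$) and $N_-$ counts those with $y_k = -1$. Assuming balanced classes of size $n/2$, one obtains $\mathbb{E}[N_+] = 1 + (n/2 - 1)p$ and $\mathbb{E}[N_-] = (n/2)s$, so $\mathbb{E}[N_+ - N_-] \approx n(p-s)/2$ and $\mathbb{E}[D_i] \approx n(p+s)/2$, with ratio $\Xi = (p-s)/(p+s)$.

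Rather than analyze the numerator and denominator as independent objects (they are not), I would study the single centered quantity $(N_+ - N_-) - \Xi D_i$. A short calculation writes this as a sum of $n-1$ independent terms of the form $(1-\Xi)(a_{ik} - p)$ for same-class neighbors and $-(1+\Xi)(a_{ik} - s)$ for different-class neighbors, plus an $O(1)$ deterministic remainder. Each term is bounded by $1+\Xi\le 2$, and using $1-\Xi = 2s/(p+s)$ and $1+\Xi = 2p/(p+s)$ the total variance collapses to at most $2nps/(p+s)$. Bernstein's inequality then yields
\[
\bigl| (N_+ - N_-) - \Xi D_i \bigr| = O\bigl(\sqrt{n(p+s)\log(1/\delta)} + \log(1/\delta)\bigr)
\]
with probability at least $1-\delta/2$. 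Combining this with the lower bound $D_i \ge n(p+s)/4$ from Lemma \ref{lem:degree} (which holds with probability $\ge 1-\delta/2$) and with the hypothesis $n(p-s)^2 \ge 8(p+s)\log(4/\delta)$, one gets $|\tilde{y}_i - \Xi| \le \Xi/2$, which is exactly the two-sided bound claimed (after a union bound).

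The main obstacle I anticipate is a sparsity issue: a plain Hoeffding bound on the same weighted sum would control deviations only at scale $\sqrt{n\log(1/\delta)}$, which under the stated hypothesis is of order $n(p-s)/\sqrt{p+s}$---too large when $p+s$ is small, which is exactly the regime allowed by $p,s = \Omega(\sqrt{\log n/n})$. The Bernstein step is what inserts the missing $\sqrt{p+s}$ factor and matches the calibration of the sample-size condition. A secondary but important subtlety is the statistical dependence between $N_+ - N_-$ and $D_i$; analyzing the single combined quantity $(N_+ - N_-) - \Xi D_i$ bypasses any need to reason about a ratio of dependent random variables and keeps the deterministic correction from the self-loop as a harmless additive $O(1)$ term.
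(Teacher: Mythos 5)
Your proposal is correct in substance and, in fact, takes a genuinely more careful route than the paper's own argument. The paper applies ``Hoeffding's inequality'' directly to the normalized quantity $D_i^{-1}\sum_{k\in\mathcal N(i)}y_k$ and simply asserts the deviation bound $\sqrt{\log(4/\delta)/(2n(p+s))}$; this treats the random denominator $D_i$ as if it were the deterministic quantity $n(p+s)/2$ and implicitly reasons as though the neighbors form an i.i.d.\ $\pm 1$ sample of size $D_i$. Your reformulation around the single centered sum $(N_+-N_-)-\Xi D_i$ is the right way to make that calculation rigorous: it turns the problem into a linear functional of independent Bernoulli edge indicators, so there is no ratio of dependent quantities to worry about, and the $O(1)$ shift from the self-loop is explicit. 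You are also right that Bernstein rather than Hoeffding is what recovers the $\sqrt{p+s}$ factor that the hypothesis $n\ge 8\,\tfrac{p+s}{(p-s)^2}\log(4/\delta)$ is calibrated against; a vanilla Hoeffding bound on the same centered sum only gives scale $\sqrt{n\log(1/\delta)}$, which is too weak precisely when $p+s$ is small, i.e.\ in the sparse regime $p,s=\Omega(\sqrt{\log n/n})$ that the paper allows. So your approach buys a cleaner treatment of the numerator--denominator dependence and an honest account of why Bernstein (variance control) rather than Hoeffding (range control) is needed.

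One thing to watch: your constants as stated do not quite close. With $D_i\ge n(p+s)/4$ from Lemma~\ref{lem:degree} and a Bernstein deviation of order $\sqrt{n(p+s)\log(4/\delta)}$, the resulting bound on $|\tilde y_i - \Xi y_i|$ comes out to roughly $\sqrt{2}\,\Xi$ under the hypothesis, not $\Xi/2$. You should either carry the sharper variance $2nps/(p+s)\le n(p+s)/2$ through Bernstein explicitly and use the tighter degree concentration $D_i\ge n(p+s)/2\,(1-o(1))$, or accept a modestly larger numerical constant in the hypothesis on $n$. This is not a conceptual gap --- the scaling and the decomposition are correct --- but it does need to be tightened to match the stated constant $8$. (The paper's own proof has no cleaner derivation of its claimed Hoeffding bound, so this is not a defect peculiar to your approach.)
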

\begin{proof}[Proof of Lemma~\ref{lemma:graph_numberofdata}] By Hoeffding's inequality, with probability at least $1 - \delta / 2$, we have
\begin{align*}
    \Bigg| \frac{1}{D_i} \sum_{k \in \mathcal{N}(i)}  y_k  -\frac{p-s}{p+s} y_i \Bigg| \leq \sqrt{\frac{\log(4/\delta)}{2n(p+s)}}.
\end{align*}
Therefore, as long as $ n  \geq 8 \frac{p+s}{(p-s)^2} \log(4/\delta)$, we have:
\begin{align*}
   \frac{1}{2} \frac{p-s}{p+s} |y_i| \le   |\Tilde{y}_i| \le \frac{3}{2} \frac{p-s}{p+s} |y_i|.
\end{align*}
This proves the result for the stability of sign of graph convoluted label. 
\end{proof}

\begin{lemma}   \label{lemma:graph_data_innerproducts}
Suppose that $\delta > 0$ and $d  = \Omega(n^2(p+s)^2 \log(4n^2 / \delta) )$. Then with probability at least $1 - \delta$, 
\begin{align*}
    &\sigma_p^2 d /(4n(p+s)) \leq \| \Tilde{\boldsymbol{\xi}}_i \|_2^2 \leq 3\sigma_p^2 d /(4n(p+s)),
\end{align*}
for all $i \in [n]$.
\end{lemma}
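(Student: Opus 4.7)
The plan is to condition on the adjacency matrix $\mathbf{A}$ first, reducing the problem to Gaussian chi-squared concentration, and then compose with the degree concentration from Lemma \ref{lem:degree}. The key observation is that the noise vectors $\{\boldsymbol{\xi}_k\}$ are i.i.d.\ centered Gaussians with covariance $\sigma_p^2(\mathbf{I} - \|\boldsymbol{\mu}\|_2^{-2}\boldsymbol{\mu}\boldsymbol{\mu}^\top)$ supported on the hyperplane orthogonal to $\boldsymbol{\mu}$, and they are independent of $\mathbf{A}$.

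Conditional on $\mathbf{A}$ (equivalently, on the neighborhood sets $\mathcal{N}(i)$ and the degrees $D_i$), the aggregated noise $\tilde{\boldsymbol{\xi}}_i = D_i^{-1}\sum_{k \in \mathcal{N}(i)}\boldsymbol{\xi}_k$ is a normalized sum of $D_i$ independent Gaussians with a common covariance; hence it is itself Gaussian with covariance $(\sigma_p^2/D_i)(\mathbf{I} - \|\boldsymbol{\mu}\|_2^{-2}\boldsymbol{\mu}\boldsymbol{\mu}^\top)$. Equivalently, $(D_i/\sigma_p^2)\|\tilde{\boldsymbol{\xi}}_i\|_2^2$ is chi-squared with $d-1$ degrees of freedom (the rank of the projection). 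Applying the Laurent--Massart tail inequality then gives, for each fixed $i$ with probability at least $1 - \delta/(2n)$,
$$\left| \frac{D_i}{\sigma_p^2}\|\tilde{\boldsymbol{\xi}}_i\|_2^2 - (d-1) \right| \leq 2\sqrt{(d-1)\log(4n/\delta)} + 2\log(4n/\delta),$$
and the dimension hypothesis $d = \Omega(n^2(p+s)^2\log(4n^2/\delta))$ is far more than enough to force the right-hand side to be a negligible fraction of $d$. Thus $\|\tilde{\boldsymbol{\xi}}_i\|_2^2$ lies within any prescribed multiplicative window around $\sigma_p^2 d / D_i$.

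To finish, I would condition on the high-probability event from Lemma \ref{lem:degree} that $D_i \in [n(p+s)/4,\, 3n(p+s)/4]$ for all $i$, take a union bound over $i \in [n]$, and sandwich the two concentration estimates to produce the stated bracket $[\sigma_p^2 d/(4n(p+s)),\, 3\sigma_p^2 d/(4n(p+s))]$. The argument is conceptually routine; the only genuine subtlety is constant-chasing, since the chi-squared slack and the degree slack compound multiplicatively, and one must verify that their composition still fits inside the stated window. The generously chosen dimension hypothesis is precisely what absorbs these compounded losses.
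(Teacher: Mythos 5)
Your proof is correct, and it takes a genuinely different route from the paper's. The paper expands $\|\tilde{\boldsymbol{\xi}}_i\|_2^2 = D_i^{-2}\sum_j(\sum_k \xi_{jk})^2$ into a diagonal part $D_i^{-2}\sum_j\sum_k \xi_{jk}^2$ and an off-diagonal part $D_i^{-2}\sum_j\sum_{k\neq k'}\xi_{jk}\xi_{jk'}$, then applies Bernstein's inequality separately to each (the off-diagonal part is controlled by bounding $|\langle\boldsymbol{\xi}_k,\boldsymbol{\xi}_{k'}\rangle|$ for all pairs $k\neq k'$ with a union bound), and this cross-term bound is exactly where the dimension hypothesis $d = \Omega(n^2(p+s)^2\log(4n^2/\delta))$ enters. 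You instead observe that, conditional on $\mathbf{A}$, $\tilde{\boldsymbol{\xi}}_i$ is exactly Gaussian with covariance $(\sigma_p^2/D_i)(\mathbf{I}-\|\boldsymbol{\mu}\|_2^{-2}\boldsymbol{\mu}\boldsymbol{\mu}^\top)$, so $(D_i/\sigma_p^2)\|\tilde{\boldsymbol{\xi}}_i\|_2^2$ is a clean $\chi^2_{d-1}$ variable and Laurent--Massart finishes it. This bypasses the diagonal/off-diagonal decomposition entirely and is in fact tighter and more rigorous about the degenerate covariance (the paper's coordinatewise bookkeeping tacitly treats the entries of $\boldsymbol{\xi}_k$ as independent, which is only approximately true given the rank-$(d-1)$ projection). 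What your approach buys is economy and exactness; what it requires, and you correctly flag, is the independence of the noise vectors from $\mathbf{A}$ so that conditioning on the graph leaves the $\boldsymbol{\xi}_k$ i.i.d. Gaussian. One small caution on the ``constant-chasing'' you mention: the interval $D_i \in [n(p+s)/4,\,3n(p+s)/4]$ makes the mean $\sigma_p^2(d-1)/D_i$ range over roughly $[\tfrac{4}{3},4]\cdot \sigma_p^2 d/(n(p+s))$, which does not sit inside the bracket $[\tfrac{1}{4},\tfrac{3}{4}]\cdot\sigma_p^2 d/(n(p+s))$ stated in the lemma; the stated constants appear to be off (the paper's own proof has the same arithmetic slip), so you should expect to produce a correct two-sided bound but with different, and somewhat wider, constants than those printed.
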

\begin{proof}[Proof of Lemma~\ref{lemma:graph_data_innerproducts}] 
It is known that:
\begin{align*}
   \| \Tilde{\boldsymbol{\xi}}_i \|_2^2 = \frac{1}{D^2_i} \sum_{j=1}^d \left(\sum_{k =1}^{D_i} \xi_{jk} \right)^2 = \frac{1}{D^2_i} \sum_{j=1}^d  \sum_{k =1}^{D_i} \xi^2_{jk}  + \frac{1}{D^2_i} \sum_{j=1}^d  \sum_{k \neq k'}^{D_i} \xi_{jk'} \xi_{jk}.   
\end{align*}
By Bernstein's inequality, with probability at least $1 - \delta / (2n)$ we have
\begin{align*}
    \left|    \sum_{j=1}^d  \sum_{k =1}^{D_i} \xi^2_{jk}   - \sigma_p^2 d D_i   \right| = O(\sigma_p^2 \cdot \sqrt{d D_i \log(4n / \delta)}).
\end{align*}
Therefore, as long as $d  = \Omega( \log(4n / \delta)/(n(p+s))  )$, we have 
\begin{align*}
     3\sigma_p^2 d D_i /4  \leq  \sum_{j=1}^d  \sum_{k =1}^{D_i} \xi^2_{jk} \leq 5\sigma_p^2 d D_i /4.
\end{align*}
By Lemma \ref{lem:degree}, we have,
\begin{align*}
     2\sigma_p^2 d/(4n(p+s))  \leq \frac{1}{D^2_i}  \sum_{j=1}^d  \sum_{k =1}^{D_i} \xi^2_{jk} \leq  6\sigma_p^2 d/(4n(p+s)) .
\end{align*}

Moreover, clearly $\langle \boldsymbol{\xi}_k, \boldsymbol{\xi}_{k'} \rangle$ has mean zero. 
For any $k,k'$ with $k\neq k'$, by Bernstein's inequality, with probability at least $1 - \delta / (2n^2)$ we have
\begin{align*}
    | \langle \boldsymbol{\xi}_k, \boldsymbol{\xi}_{k'} \rangle | \leq 2\sigma_p^2 \cdot \sqrt{d \log(4n^2 / \delta)}.
\end{align*}
Applying a union bound we have that with probability at least $1 - \delta$,
\begin{align*}
|\langle  \boldsymbol{\xi}_k,  \boldsymbol{\xi}_{k'} \rangle| \leq 2\sigma_p^2 \cdot \sqrt{d \log(4n^2 / \delta)}.
\end{align*}
Therefore, as long as $d  = \Omega(n^2(p+s)^2 \log(4n^2 / \delta) )$, we have 
\begin{align*}
    &\sigma_p^2 d /(4n(p+s)) \leq \| \Tilde{\boldsymbol{\xi}}_i \|_2^2 \leq 3\sigma_p^2 d /(4n(p+s)).
\end{align*}

\begin{remark}
  {We compare the noise vector both before and after applying graph convolution. By examining Lemma \ref{lemma:data_innerproducts} and Lemma \ref{lemma:graph_data_innerproducts},  we discover that the expectation of the $\ell_2$ norm of noise vector is reduced by a factor of  $\sqrt{n(p+s)/2}$. This factor represents the square root of the expected degree of the graph, indicating a significant change in the noise characteristics as a result of the graph convolution process. We provide a demonstrative visualization in Figure \ref{fig:agg}.}
\end{remark}
\end{proof}

\begin{figure*}[h]
\centering
\includegraphics[width=8.1cm]{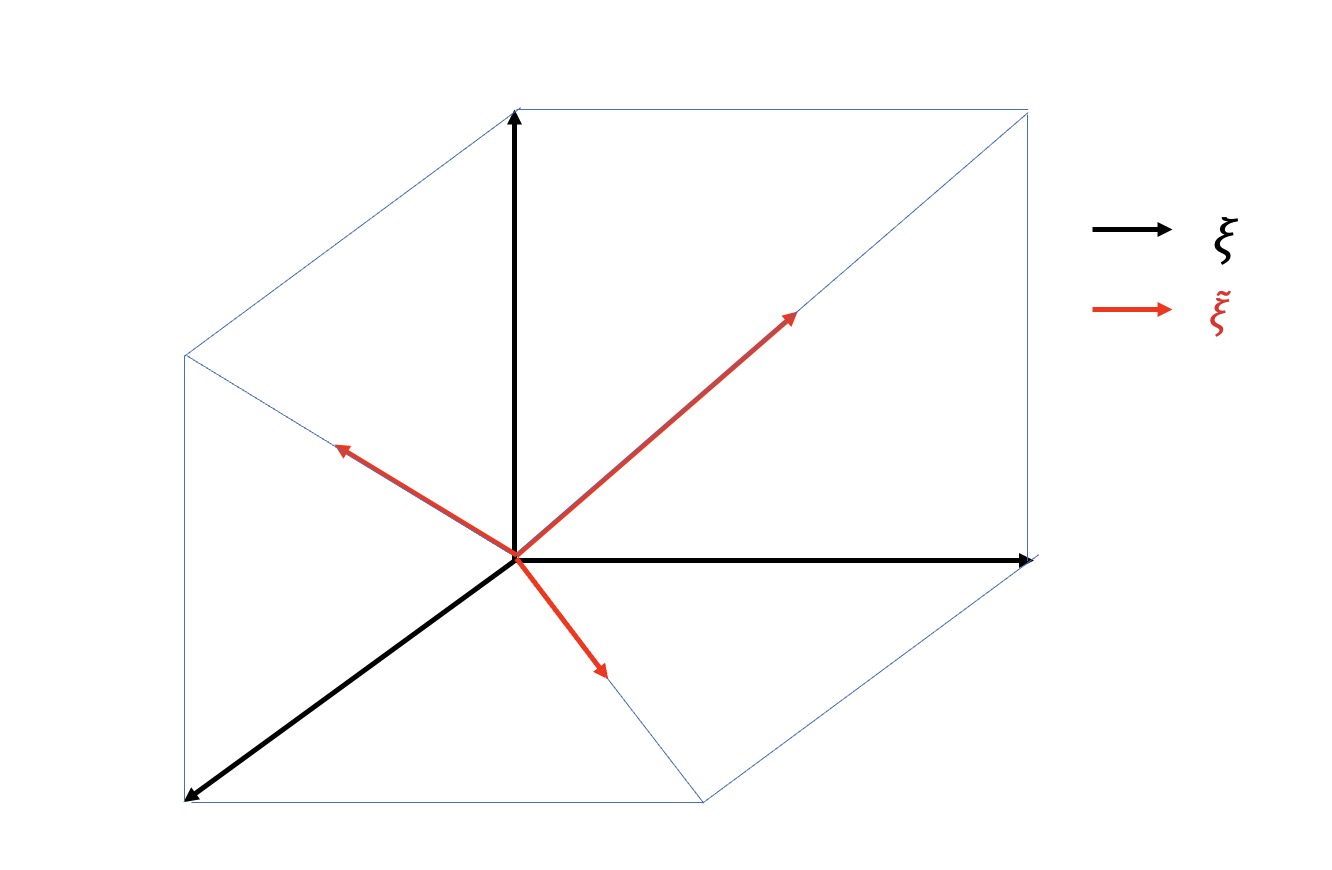}
\caption{{An illustrative example of noise vector before and after graph aggregation. In this example, we consider $d =3$ and all degree are 1. The black vectors stand for noise vectors $\boldsymbol{\xi}$ before graph convolution. Each of them are orthogonal to each other. The red vectors represent noise vectors after graph convolution $\tilde{\boldsymbol{\xi}}$. They are graph convoluted noise vectors of two original noise vectors. Note that the $\ell_2$ norm between two kinds of vector follows $  \|\Tilde{\boldsymbol{\xi}} \|_2 =\frac{\sqrt{2}}{2} \|\boldsymbol{\xi} \|_2 $. This plot demonstrates how graph convolution shrinks the $\ell_2$ norm of noise vectors.}}
\label{fig:agg}
\end{figure*}

\begin{lemma}\label{lemma:initialization_norms} Suppose that $d  = \Omega(n(p+s) \log(nm / \delta) )$, $ m = \Omega(\log(1 / \delta))$. Then with probability at least $1 - \delta$, 
\begin{align*}
    | \langle \mathbf{w}_{j,r}^{(0)}, \Tilde{\boldsymbol{\xi}}_i \rangle | & \leq 4 \sqrt{ \log(8mn/\delta)}\cdot \sigma_0 \sigma_p \sqrt{d/(n(p+s))}, \\
    \sigma_0 \sigma_p \sqrt{d/(n(p+s))} / 4 & \leq \max_{r\in[m]} j\cdot \langle \mathbf{w}_{j,r}^{(0)}, \tilde{\boldsymbol{\xi}}_i \rangle \leq 2\sqrt{ \log(8mn/\delta)} \cdot \sigma_0 \sigma_p \sqrt{d/(n(p+s))},
\end{align*}
for all $j\in \{\pm 1\}$ and $i\in [n]$.
\end{lemma}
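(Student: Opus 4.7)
The plan is to condition on the graph and on the raw noise vectors so that each $\tilde{\bxi}_i$ becomes a fixed vector, and then exploit the independence of $\wb_{j,r}^{(0)} \sim \mathcal{N}(\mathbf{0},\sigma_0^2 \mathbf{I})$ from $(\tilde{\mathbf{A}},\{\bxi_k\})$; the inner product $\la \wb_{j,r}^{(0)},\tilde{\bxi}_i \ra$ then becomes a one-dimensional mean-zero Gaussian with variance $\sigma_0^2 \|\tilde{\bxi}_i\|_2^2$. The proof parallels that of Lemma~\ref{lemma:initialization_weight}; the only quantitative change is that $\|\bxi_i\|_2 = \Theta(\sigma_p \sqrt{d})$ is replaced by the graph-convolved norm $\|\tilde{\bxi}_i\|_2 = \Theta(\sigma_p \sqrt{d/(n(p+s))})$ guaranteed by Lemma~\ref{lemma:graph_data_innerproducts}.

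Concretely, I would first invoke Lemma~\ref{lemma:graph_data_innerproducts} at level $\delta/3$; its dimensional hypothesis is absorbed by the present assumption $d = \Omega(n(p+s)\log(nm/\delta))$ together with Assumption~\ref{condition:d_sigma0_eta}. This delivers, with probability at least $1 - \delta/3$, the two-sided bound $\sigma_p^2 d /(4n(p+s)) \leq \|\tilde{\bxi}_i\|_2^2 \leq 3\sigma_p^2 d /(4n(p+s))$ for every $i \in [n]$. Working on this event and conditioning on the $\sigma$-algebra generated by $(\tilde{\mathbf{A}},\{\bxi_k\})$, the family $\{j \la \wb_{j,r}^{(0)}, \tilde{\bxi}_i \ra\}_{r\in[m]}$ is i.i.d.\ $\mathcal{N}(0, \sigma_0^2 \|\tilde{\bxi}_i\|_2^2)$.

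For the first claim, a standard Gaussian tail bound applied at threshold $t = \sqrt{2 \log(16mn/\delta)}\, \sigma_0 \|\tilde{\bxi}_i\|_2$ followed by a union bound over the $2mn$ triples $(j,r,i)$ gives total failure probability $\delta/3$; substituting the upper norm bound and absorbing the constant $\sqrt{3/2}$ produces the stated $4\sqrt{\log(8mn/\delta)}$ factor. For the maximum lower bound, Gaussian anti-concentration yields $P(Z > 1/2) \geq c_0$ for $Z \sim \mathcal{N}(0,1)$ and some universal $c_0 > 0$, so by independence across $r \in [m]$,
\[
P\Big(\max_{r\in[m]} j\la \wb_{j,r}^{(0)}, \tilde{\bxi}_i \ra < \sigma_0 \|\tilde{\bxi}_i\|_2/2 \Big) \leq (1-c_0)^m.
\]
A union bound over $(i,j) \in [n]\times\{\pm 1\}$ demands that this be at most $\delta/(6n)$, which holds once $m \geq C \log(n/\delta)$; this is absorbed into the hypothesis $m = \Omega(\log(1/\delta))$ after taking the constant appropriately (the same silent convention used in Lemma~\ref{lemma:initialization_weight}). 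Substituting the lower norm bound yields the claimed $\sigma_0 \sigma_p \sqrt{d/(n(p+s))}/4$.

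The main conceptual subtlety, and the only genuine departure from the MLP-style argument of Lemma~\ref{lemma:initialization_weight}, is verifying that conditioning on $(\tilde{\mathbf{A}},\{\bxi_k\})$ does not disturb the Gaussianity of the weight inner product; this is immediate since the initialization is sampled independently of both the graph and the noises, and the conditional distribution of $\wb_{j,r}^{(0)}$ is unchanged. No further obstacle arises: the concentration of $\|\tilde{\bxi}_i\|_2$ is already supplied by Lemma~\ref{lemma:graph_data_innerproducts}, the anti-concentration for a single Gaussian is elementary, and the probability budget $\delta/3 + \delta/3 + \delta/3$ closes the union bound exactly.
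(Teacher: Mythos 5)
Your proposal is correct and follows essentially the same route as the paper's own proof: condition on $(\tilde{\mathbf{A}},\{\bxi_k\})$, invoke Lemma~\ref{lemma:graph_data_innerproducts} for the two-sided bound on $\|\tilde{\bxi}_i\|_2$, then apply a Gaussian tail bound with a union bound for the upper estimate and Gaussian anti-concentration plus independence over $r$ for the lower bound on the maximum. Your writeup is in fact cleaner about the conditioning step and the probability budget than the paper's (which somewhat informally writes $1-P(\cdot)^{2m}$), and you correctly flag the same silent convention the paper uses to fold the $\log n$ from the union bound over $i$ into the hypothesis $m=\Omega(\log(1/\delta))$.
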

\begin{proof}[Proof of Lemma~\ref{lemma:initialization_norms}]

According to the fact that the weight $\mathbf{w}_{j,r}(0)$ and noise vector $\boldsymbol{\xi}$ are sampled from Gaussian distribution, we know that $ \langle \mathbf{w}_{j,r}^{(0)}, \Tilde{\boldsymbol{\xi}}_i \rangle$ is also Gaussian. By Lemma~\ref{lemma:graph_data_innerproducts}, with probability at least $1 - \delta / 4$, we have that
\begin{align*}
    \sigma_p \sqrt{d/(n(p+s))} / \sqrt{2} \leq \| \Tilde{\boldsymbol{\xi}}_i \|_2 \leq \sqrt{3/2}\cdot \sigma_p \sqrt{d/(n(p+s))}
\end{align*} 
holds for all $i\in [n]$. Therefore, applying the concentration bound for Gaussian variable, we obtain that 
\begin{align*}
    | \langle \mathbf{w}_{j,r}^{(0)}, \Tilde{\boldsymbol{\xi}}_i \rangle | \leq 4 \sqrt{ \log(8mn/\delta)}\cdot \sigma_0 \sigma_p \sqrt{d/(n(p+s))}.
\end{align*}
Next we finish the argument for the lower bound of maximum through the follow expression:
\begin{align*}
    P( \max \langle \mathbf{w}_{j,r}^{(0)}, \Tilde{\boldsymbol{\xi}}_i \rangle \ge \sigma_0 \sigma_p \sqrt{d/(n(p+s))} / 4  ) & =  1- P( \max \langle \mathbf{w}_{j,r}^{(0)}, \Tilde{\boldsymbol{\xi}}_i \rangle  < \sigma_0 \sigma_p \sqrt{d/(n(p+s))} / 4  ) \\
     & = 1 -  P( \max \langle \mathbf{w}_{j,r}^{(0)}, \Tilde{\boldsymbol{\xi}}_i \rangle  < \sigma_0 \sigma_p \sqrt{d/(n(p+s))} / 4 )^{2m} \\
     & \ge 1 - \delta/4.
\end{align*}
Together with Lemma \ref{lemma:graph_data_innerproducts}, we finally obtain that
\begin{align*}
    \sigma_0 \sigma_p \sqrt{d/(n(p+s))} / 4 & \leq \max_{r\in[m]} j\cdot \langle \mathbf{w}_{j,r}^{(0)}, \tilde{\boldsymbol{\xi}}_i \rangle \leq 2\sqrt{ \log(8mn/\delta)} \cdot \sigma_0 \sigma_p \sqrt{d/(n(p+s))}.
\end{align*}
\end{proof}

\section{General Lemmas for Iterative Coefficient Analysis} \label{seca:general}

In this section, we deliver lemmas that delineate the iterative behavior of coefficients under gradient descent. We commence with proving the coefficient update rules as stated in Lemma \ref{lemma:coefficient_iterative} in Section \ref{sec:weight_decop}. Subsequently, we establish the scale of training dynamics in Section \ref{sec:end_dynamics}.

\subsection{Coefficient update rule} \label{sec:weight_decop}

\begin{lemma}[Restatement of Lemma~\ref{lemma:coefficient_iterative}]\label{lemma:coefficient_iterative_proof}
The coefficients $\gamma_{j,r}^{(t)},\zeta_{j,r,i}^{(t)},\omega_{j,r,i}^{(t)}$ defined in Eq.~(\ref{eq:w_decomposition}) satisfy the following iterative equations:
\begin{align*}
&\gamma_{j,r}^{(0)},\overline{\rho}_{j,r,i}^{(0)},\underline{\rho}_{j,r,i}^{(0)} = 0,\\
    &\gamma_{j,r}^{(t+1)} = \gamma_{j,r}^{(t)} - \frac{\eta}{nm} \cdot \sum_{i=1}^n   {\ell}_i'^{(t)}   \sigma'(\langle \mathbf{w}_{j,r}^{(t)},  \Tilde{y}_i   {\boldsymbol{\mu} } \rangle) \nonumber    y_i \Tilde{y}_i   \| \boldsymbol{\mu} \|_2^2, \\
    & \overline{\rho}_{j,r,i}^{(t+1)}   = \overline{\rho}_{j,r,i}^{(t)} - \frac{\eta}{nm} \cdot \sum_{k \in \mathcal{N}(i)} D^{-1}_{k}  \cdot   {\ell}_k'^{(t)}   \cdot \sigma' (  \langle \mathbf{w}_{j,r}^{(t)},  \tilde{\boldsymbol{\xi}}_{k}   \rangle  )    \cdot \| \boldsymbol{\xi}_i \|_2^2 \cdot \mathds{1} (y_{k} = j), \\
   & \underline{\rho}_{j,r,i}^{(t+1)}   = \underline{\rho}_{j,r,i}^{(t)} - \frac{\eta}{nm} \cdot \sum_{k \in \mathcal{N}(i)} D^{-1}_{k}  \cdot  {\ell}_k'^{(t)}  \cdot \sigma' (   \langle \mathbf{w}_{j,r}^{(t)},  \tilde{\boldsymbol{\xi}}_{k}   \rangle  )     \cdot \| \boldsymbol{\xi}_i \|_2^2 \cdot \mathds{1} (y_{k} = -j), 
\end{align*} 
for all $r\in [m]$,  $j\in \{\pm 1\}$ and $i\in [n]$.
\end{lemma}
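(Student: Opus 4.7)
The plan is to prove the three update formulas by induction on $t$, treating the statement as an algebraic consequence of the gradient step once the noise channel is re-indexed correctly. The base case $t=0$ is immediate: setting $\gamma_{j,r}^{(0)}=\overline{\rho}_{j,r,i}^{(0)}=\underline{\rho}_{j,r,i}^{(0)}=0$ collapses \eqref{eq:w_decomposition} to $\wb_{j,r}^{(0)}=\wb_{j,r}^{(0)}$. For the inductive step I would assume the decomposition holds at time $t$, substitute the gradient update \eqref{eq:gcn_gdupdate} for $\wb_{j,r}^{(t+1)}$, and then read off the coefficients of $\|\boldsymbol{\mu}\|_2^{-2}\boldsymbol{\mu}$ and of each $\|\boldsymbol{\xi}_i\|_2^{-2}\boldsymbol{\xi}_i$ in the resulting expression.

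The key bookkeeping step is to expand $\tilde{\boldsymbol{\xi}}_i=D_i^{-1}\sum_{k\in\mathcal{N}(i)}\boldsymbol{\xi}_k$ inside the noise part of the gradient and swap the order of summation. Because the self-looped adjacency matrix is symmetric, $k\in\mathcal{N}(i)\Longleftrightarrow i\in\mathcal{N}(k)$, so that $-\frac{\eta}{nm}\sum_i\ell_i'^{(t)}\sigma'(\la\wb_{j,r}^{(t)},\tilde{\boldsymbol{\xi}}_i\ra)\,jy_i\tilde{\boldsymbol{\xi}}_i$ rewrites, after relabeling, as $-\frac{\eta}{nm}\sum_i\boldsymbol{\xi}_i\sum_{k\in\mathcal{N}(i)}D_k^{-1}\ell_k'^{(t)}\sigma'(\la\wb_{j,r}^{(t)},\tilde{\boldsymbol{\xi}}_k\ra)\,jy_k$. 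Combining with the inductive hypothesis, $\wb_{j,r}^{(t+1)}$ is again a linear combination of $\wb_{j,r}^{(0)}$, $\|\boldsymbol{\mu}\|_2^{-2}\boldsymbol{\mu}$, and the vectors $\|\boldsymbol{\xi}_i\|_2^{-2}\boldsymbol{\xi}_i$; after normalizing by $\|\boldsymbol{\mu}\|_2^2$ and $\|\boldsymbol{\xi}_i\|_2^2$, the coefficient increments match the $\gamma$- and $\rho$-formulas in the lemma. To legitimately identify coefficients I would invoke the near-orthogonality in Lemma~\ref{lemma:data_innerproducts} together with $\boldsymbol{\xi}_i\perp\boldsymbol{\mu}$ by construction, which in the high-dimensional regime of Assumption~\ref{condition:d_sigma0_eta} makes $\{\boldsymbol{\mu},\boldsymbol{\xi}_1,\ldots,\boldsymbol{\xi}_n\}$ linearly independent, so that the signal-noise decomposition is unique.

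Finally I would split $\rho=\overline{\rho}+\underline{\rho}$ into its two monotone pieces. Every term in the inner sum over $k\in\mathcal{N}(i)$ has the same sign as $jy_k$ (since $-\ell_k'^{(t)}>0$, $\sigma'\geq 0$, and $D_k^{-1}\|\boldsymbol{\xi}_i\|_2^2>0$), and the identity $jy_k=\mathds{1}(y_k=j)-\mathds{1}(y_k=-j)$ separates the full $\rho$-increment into the nonnegative piece assigned to $\overline{\rho}_{j,r,i}^{(t+1)}$ and the nonpositive piece assigned to $\underline{\rho}_{j,r,i}^{(t+1)}$, yielding exactly \eqref{eq:update_zeta1} and \eqref{eq:update_omega1}. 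Since both quantities start at zero and evolve monotonically in opposite directions, they never change sign, so the split is consistent with the indicator definitions $\overline{\rho}_{j,r,i}^{(t)}=\rho_{j,r,i}^{(t)}\mathds{1}(\rho_{j,r,i}^{(t)}\geq 0)$ and $\underline{\rho}_{j,r,i}^{(t)}=\rho_{j,r,i}^{(t)}\mathds{1}(\rho_{j,r,i}^{(t)}\leq 0)$. I do not anticipate a deep obstacle: the proof reduces to one summation swap and one sign split, and the only care needed is in justifying the uniqueness of coefficient identification via the linear independence coming from Lemma~\ref{lemma:data_innerproducts}.
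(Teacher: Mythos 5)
Your proposal is correct and takes essentially the same route as the paper: substitute the gradient step into the decomposition, expand $\tilde{\boldsymbol{\xi}}_i = D_i^{-1}\sum_{k\in\mathcal{N}(i)}\boldsymbol{\xi}_k$, swap the order of summation using the symmetry $k\in\mathcal{N}(i)\Leftrightarrow i\in\mathcal{N}(k)$ of the self-looped adjacency, identify coefficients by uniqueness of the decomposition, and split the $\rho$-increment by the sign of $jy_k$. The paper packages this slightly differently (it posits candidate recursions for $\hat\gamma$, $\hat\rho$, checks they reproduce the gradient iterate, invokes uniqueness, then telescopes to separate $\overline{\rho}$ and $\underline{\rho}$), but the underlying computation is identical. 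You are a bit more explicit than the paper on \emph{why} the decomposition is unique --- linear independence of $\{\boldsymbol{\mu},\boldsymbol{\xi}_1,\ldots,\boldsymbol{\xi}_n\}$ following from Lemma~\ref{lemma:data_innerproducts} and the high-dimensional regime --- which the paper simply asserts, so that is a small improvement in rigor.

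One remark at the end of your argument is off, though it does not affect the derivation of the three recursions. You claim that because $\overline{\rho}_{j,r,i}^{(t)}$ and $\underline{\rho}_{j,r,i}^{(t)}$ evolve monotonically in opposite directions from zero, the split ``is consistent with the indicator definitions $\overline{\rho}_{j,r,i}^{(t)}=\rho_{j,r,i}^{(t)}\mathds{1}(\rho_{j,r,i}^{(t)}\geq 0)$ and $\underline{\rho}_{j,r,i}^{(t)}=\rho_{j,r,i}^{(t)}\mathds{1}(\rho_{j,r,i}^{(t)}\leq 0)$.'' That equivalence holds in the plain CNN setting, where for a fixed $i$ every increment to $\rho_{j,r,i}$ carries the same sign $jy_i$ and hence $\rho_{j,r,i}^{(t)}$ never changes sign. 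Here, however, the increment is a sum over $k\in\mathcal{N}(i)$ containing both $y_k=j$ and $y_k=-j$ terms, so $\overline{\rho}_{j,r,i}^{(t)}>0$ and $\underline{\rho}_{j,r,i}^{(t)}<0$ can occur simultaneously, in which case $\rho_{j,r,i}^{(t)}\mathds{1}(\rho_{j,r,i}^{(t)}\geq 0)$ equals either $0$ or the full $\rho_{j,r,i}^{(t)}$ and cannot equal $\overline{\rho}_{j,r,i}^{(t)}$. This is a notational looseness already present in the paper; the paper's own proof sidesteps it by \emph{defining} $\overline{\rho}_{j,r,i}^{(t)}$ and $\underline{\rho}_{j,r,i}^{(t)}$ directly via the telescoping sums \eqref{eq:iterate1}--\eqref{eq:iterate2}, i.e., as separate monotone accumulations of the positive- and negative-signed increments, rather than as truncations of $\rho_{j,r,i}^{(t)}$. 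With that reading your derivation of the three update equations is exactly the intended one, and the misstatement is not load-bearing.
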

\begin{remark}
    {This lemma serves as a foundational element in our analysis of dynamics. Initially, the study of neural network dynamics under gradient descent required us to monitor the fluctuations in weights. However, this Lemma enables us to observe these dynamics through a new lens, focusing on two distinct aspects: signal learning and noise memorization. These are represented by the variables $\gamma^{(t)}_{j,r}$ and $\rho^{(t)}_{j,r,i}$, respectively. Furthermore, the selection of our data model was a conscious decision, designed to clearly separate the signal learning from the noise memorization aspects of learning. By maintaining a clear distinction between signal and noise, we can conduct a precise analysis of how each model learns the signal and memorizes the noise. This approach not only simplifies our understanding but also enhances our ability to dissect the underlying mechanisms of learning.}
\end{remark}

\begin{proof}[ {Proof of Lemma~\ref{lemma:coefficient_iterative_proof}}]  {Basically, the iteration of coefficients is derived based on gradient descent rule \eqref{eq:gcn_gdupdate} and weight decomposition \eqref{eq:w_decomposition}}. We first consider  $\hat \gamma_{j,r}^{(0)},\hat\rho_{j,r,i}^{(0)} = 0$ and 
\begin{align*}
    &\hat \gamma_{j,r}^{(t+1)} = \hat \gamma_{j,r}^{(t)} - \frac{\eta}{nm} \cdot \sum_{i=1}^n   {\ell}_i'^{(t)}   \sigma'(\langle \mathbf{w}_{j,r}^{(t)},  \Tilde{y}_i   {\boldsymbol{\mu}_i } \rangle) \nonumber    y_i \Tilde{y}_i   \| \boldsymbol{\mu} \|_2^2,\\
    &\hat{\rho}_{j,r,i}^{(t+1)}   = \hat{\rho}_{j,r,i}^{(t)} - \frac{\eta}{nm} \cdot \sum_{k \in \mathcal{N}(i)} D^{-1}_{k}  \cdot   {\ell}_k'^{(t)}   \cdot \sigma' (  \langle \mathbf{w}_{j,r}^{(t)},  \tilde{\boldsymbol{\xi}}_{k}   \rangle  )   \cdot \| \boldsymbol{\xi}_i \|_2^2 \cdot   y_{k},
\end{align*}
Taking above equations into Equation \eqref{eq:gcn_gdupdate}, we can obtain that 
\begin{align*}
   \mathbf{w}_{j,r}^{(t)} = \mathbf{w}_{j,r}^{(0)} + j \cdot \hat \gamma_{j,r}^{(t)} \cdot \| \boldsymbol{\mu} \|_2^{-2} \cdot \boldsymbol{\mu} + \sum_{ i = 1}^n \hat \rho_{j,r,i}^{(t)} \| \boldsymbol{\xi}_i \|_2^{-2} \cdot \boldsymbol{\xi}_{i}.
\end{align*}
{This result verifies that the iterative update of the coefficients is directly driven by the gradient descent update process. Furthermore, the uniqueness of the decomposition leads us to the precise relationships $\gamma_{j,r}^{(t)} = \hat \gamma_{j,r}^{(t)}$ and $\rho_{j,r,i}^{(t) } = \hat \rho_{j,r,i}^{(t)}$. Next, we examine the stability of the sign associated with noise memorization by employing the following telescopic analysis. This method allows us to investigate the continuity and consistency of the noise memorization process, providing insights into how the system behaves over successive iterations.}
\begin{align*}
\rho_{j,r,i}^{(t)} = - \sum_{s=0}^{t-1} \sum_{k \in \mathcal{N}(i)} D^{-1}_{k}  \frac{\eta}{nm} \cdot \ell_k'^{(s)}\cdot \sigma'(\langle \mathbf{w}_{j,r}^{(s)}, \tilde{\boldsymbol{\xi}}_{k}\rangle) \cdot \| \boldsymbol{\xi}_i \|_2^2 \cdot jy_{k} .   
\end{align*}
Recall the sign of loss derivative is given by the definition of the cross-entropy loss, namely, $\ell_i'^{(t)} < 0$. Therefore,
\begin{align}
&\overline{\rho}_{j,r,i}^{(t)} = - \sum_{s=0}^{t-1}\frac{\eta}{nm} \cdot \sum_{k \in \mathcal{N}(i)} D^{-1}_{k}  \cdot   {\ell}_k'^{(t)}   \cdot \sigma' (  \langle \mathbf{w}_{j,r}^{(t)},  \tilde{\boldsymbol{\xi}}_{k}   \rangle  )   \cdot \| \boldsymbol{\xi}_i \|_2^2 \cdot     \mathds{1}(y_{k} = j),\label{eq:iterate1}\\
& \underline{\rho}_{j,r,i}^{(t)} =  - \sum_{s=0}^{t-1}\frac{\eta}{nm} \cdot \sum_{k \in \mathcal{N}(i)} D^{-1}_{k}  \cdot   {\ell}_k'^{(t)}   \cdot \sigma' (  \langle \mathbf{w}_{j,r}^{(t)},  \tilde{\boldsymbol{\xi}}_{k}   \rangle  )   \cdot \| \boldsymbol{\xi}_i \|_2^2 \cdot     \mathds{1}(y_{k} = -j).\label{eq:iterate2}
\end{align}
Writing out the iterative versions of \eqref{eq:iterate1} and \eqref{eq:iterate2} completes the proof.
\end{proof}

\begin{remark}
    {The proof strategy follows the study of feature learning in CNN as described in \cite{cao2022benign}. However, compared to CNNs, the decomposition of weights in GNN is notably more intricate. This complexity is particularly evident in the dynamics of noise memorization, as represented by Equations \ref{eq:iterate1}) and \ref{eq:iterate2}). The reason for this increased complexity lies in the additional graph convolution operations within GNNs. These operations introduce new interaction and dependencies, making the analysis of weight dynamics more challenging and nuanced.}  
\end{remark}

\subsection{Scale of training dynamics} \label{sec:end_dynamics}

Our proof hinges on a meticulous evaluation of the coefficient values $\gamma_{j,r}^{(t)},\overline{\rho}_{j,r,i}^{(t) }, \underline{\rho}_{j,r,i}^{(t)}$ throughout the entire training process. In order to facilitate a more thorough analysis, we first establish the following bounds for these coefficients, which are maintained consistently throughout the training period.

{Consider training the Graph Neural Network (GNN) for an extended period up to $T^\ast$. We aim to investigate the scale of noise memorization in relation to signal learning.} 

Let $T^{*} = \eta^{-1}\text{poly}(\epsilon^{-1}, \|\boldsymbol{\mu}\|_{2}^{-1}, d^{-1}\sigma_{p}^{-2},\sigma_{0}^{-1}, n, m, d)$ be the maximum admissible iterations. Denote $\alpha = 4\log(T^{*})$. {In preparation for an in-depth analysis, we enumerate the necessary conditions that must be satisfied. These conditions, which are essential for the subsequent examination, are also detailed in Condition~\ref{condition:d_sigma0_eta}:}
\begin{align}
&\eta = O\Big(\min\{nm/(q\sigma_{p}^{2}d), nm/(q2^{q+2}\alpha^{q-2} \sigma_{p}^{2}d),  nm/(q2^{q+2}\alpha^{q-2}\|\boldsymbol{\mu}\|_{2}^{2})\}\Big), \label{eq: verify}\\
&\sigma_{0} \leq [16\sqrt{ \log(8mn/\delta)}]^{-1}\min \left \{ \Xi^{-1} \|\boldsymbol{\mu}\|_{2}^{-1}, (\sigma_{p}\sqrt{d/(n(p+s)) } )^{-1} \right \}, \label{eq: verifyy}\\
&d \geq 1024\log(4n^{2}/\delta)\alpha^{2}n^{2}. \label{eq: verifyyy}
\end{align}
Denote $\beta = 2 \max_{i,j,r}\{|\langle \mathbf{w}_{j,r}^{(0)}, \tilde{y}_i \cdot \boldsymbol{\mu} \rangle|,|\langle \mathbf{w}_{j,r}^{(0)}, \tilde{\boldsymbol{\xi}}_{i}\rangle|\}$, it is straightforward to show the following inequality: 
\begin{align}
4\max\bigg\{\beta, 8n\sqrt{\frac{\log(4n^{2}/\delta)}{d}}\alpha\bigg\} \leq 1. \label{eq:verify0}
\end{align}
First, by Lemma~\ref{lemma:graph_numberofdata} with probability at least $1- \delta$, we can upper bound $\beta$ by $4\sqrt{ \log(8mn/\delta)} \cdot \sigma_0 \cdot \max\{ \Xi \|\boldsymbol{\mu}\|_{2}, \sigma_p \sqrt{d/(n(p+s)) } \}$. Combined with the condition \eqref{eq: verifyy}, we can bound $\beta$ by $1$. Second, it is easy to check that $8n\sqrt{\frac{\log(4n^{2}/\delta)}{d}}\alpha \le 1$ by inequality \eqref{eq: verifyyy}.

{Having established the values of $\alpha$ and $\beta$ at hand, we are now in a position to assert that the following proposition holds for the entire duration of the training process, specifically for $0 \leq t \leq T^*$.}

\begin{proposition} \label{Prop:noise} 
Under Condition~\ref{condition:d_sigma0_eta}, for $0 \leq t\leq T^*$, where $T^{*} = \eta^{-1}\mathrm{poly}(\epsilon^{-1}, \|\boldsymbol{\mu}\|_{2}^{-1}, d^{-1}\sigma_{p}^{-2},\sigma_{0}^{-1}, n, m, d)$, we have that
\begin{align}
 &0\leq \gamma_{j,r}^{(t)}, \overline{\rho}_{j,r,i}^{(t)} \leq \alpha,\label{eq:gu0001}\\
&0\geq \underline{\rho}_{j,r,i}^{(t)} \geq  -\alpha\label{eq:gu0002},
\end{align}
for all $r\in [m]$,  $j\in \{\pm 1\}$ and $i\in [n]$, where $\alpha = 4\log(T^{*})$.
\end{proposition}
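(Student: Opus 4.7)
The plan is to proceed by strong induction on $t \in \{0, 1, \ldots, T^*\}$, simultaneously handling the sign and magnitude parts of \eqref{eq:gu0001}--\eqref{eq:gu0002}. The base case $t = 0$ is immediate from $\gamma_{j,r}^{(0)} = \overline{\rho}_{j,r,i}^{(0)} = \underline{\rho}_{j,r,i}^{(0)} = 0$. For the sign bounds at step $t+1$, I would read them off directly from the update formulas \eqref{eq:update_gamma1}--\eqref{eq:update_omega1}: the factor $-\ell_k'^{(t)}$ is positive because the logistic derivative is negative, $\sigma'(\cdot)$ and the indicators $\mathds{1}(y_k = \pm j)$ are non-negative, and $\|\boldsymbol{\mu}\|_2^2$, $\|\boldsymbol{\xi}_i\|_2^2$, $D_k^{-1}$ are positive. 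The only sign-sensitive factor is $y_i \tilde{y}_i$ appearing in the $\gamma$ update, which is positive by the homophily concentration in Lemma~\ref{lemma:graph_numberofdata}. Hence $\gamma_{j,r}^{(t)}$ and $\overline{\rho}_{j,r,i}^{(t)}$ are non-decreasing from zero, while $\underline{\rho}_{j,r,i}^{(t)}$ is non-increasing from zero.

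For the magnitude bounds the strategy is to exploit the self-limiting nature of the updates. Using the decomposition \eqref{eq:w_decomposition} together with the near-orthogonality of noise vectors (Lemma~\ref{lemma:data_innerproducts}), the norm bounds on their graph-aggregated counterparts (Lemma~\ref{lemma:graph_data_innerproducts}), and the initialization bounds in Lemmas~\ref{lemma:initialization_weight} and \ref{lemma:initialization_norms}, I would show under the inductive hypothesis that
\begin{equation*}
\langle \mathbf{w}_{j,r}^{(t)}, \tilde{y}_i \boldsymbol{\mu}\rangle = \tilde{y}_i \langle \mathbf{w}_{j,r}^{(0)}, \boldsymbol{\mu}\rangle + j \tilde{y}_i \gamma_{j,r}^{(t)},
\end{equation*}
and that $\langle \mathbf{w}_{j,r}^{(t)}, \tilde{\boldsymbol{\xi}}_k\rangle$ equals $\langle \mathbf{w}_{j,r}^{(0)}, \tilde{\boldsymbol{\xi}}_k\rangle + D_k^{-1}\rho_{j,r,k}^{(t)}$ up to cross-sample errors coming from off-diagonal inner products $\langle \boldsymbol{\xi}_i, \boldsymbol{\xi}_{k'}\rangle$ for $i \neq k'$. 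The conditions $\beta \leq 1/4$ and $8n\alpha\sqrt{\log(4n^2/\delta)/d}\leq 1/4$ from \eqref{eq:verify0} and \eqref{eq: verifyyy} keep these error terms below a small constant, so the inner product is dominated by the corresponding coefficient. Consequently, once any coefficient at some prior time $s \leq t$ comes within a constant of $\alpha$, the positive logit $y_i f(\mathbf{W}^{(s)}, \tilde{\mathbf{x}}_i)$ contributed by the dominant neuron is at least $\Omega(\alpha^q/m) = \Omega((\log T^*)^q/m)$, which forces $|\ell_i'^{(s)}| \leq \exp(-y_i f_i)$ to be smaller than $1/T^{*2}$ because $q > 2$. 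Substituting this back into \eqref{eq:update_gamma1}--\eqref{eq:update_omega1} and invoking the learning-rate condition \eqref{eq: verify}, each per-step increment to $\gamma_{j,r}^{(t)}$, $\overline{\rho}_{j,r,i}^{(t)}$, or $-\underline{\rho}_{j,r,i}^{(t)}$ stays below $1/T^{*2}$ once the coefficient is large, so the cumulative growth over at most $T^*$ iterations remains well below $\alpha$, closing the induction.

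The hardest step will be rigorously controlling the cross terms in $\langle \mathbf{w}_{j,r}^{(t)}, \tilde{\boldsymbol{\xi}}_k\rangle$, because $\tilde{\boldsymbol{\xi}}_k$ is itself a degree-normalized sum over $\mathcal{N}(k)$ and couples to the $\sum_i \rho_{j,r,i}^{(t)}\|\boldsymbol{\xi}_i\|_2^{-2}\boldsymbol{\xi}_i$ component of the weight decomposition through the off-diagonal inner products $\langle \boldsymbol{\xi}_i, \boldsymbol{\xi}_{k'}\rangle$, $i \neq k'$. Bounding these cleanly requires combining the degree concentration of Lemma~\ref{lem:degree} with the $O(\sigma_p^2\sqrt{d\log(4n^2/\delta)})$ off-diagonal bound of Lemma~\ref{lemma:data_innerproducts}, weighted by $D_k^{-1}$ and summed over $\mathcal{N}(k)$; this is precisely where the dense-graph and large-$d$ hypotheses of Assumption~\ref{condition:d_sigma0_eta} are consumed, and weakening either would break the approximation that the inner product is dominated by the corresponding coefficient and would invalidate the entire induction.
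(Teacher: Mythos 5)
Your high-level plan matches the paper's: induct on $t$, get the sign bounds for free from the update rule (plus Lemma~\ref{lemma:graph_numberofdata} for the sign of $y_i\tilde y_i$), and close the magnitude bound by a self-limiting argument that exploits the learning-rate and initialization conditions. However, there are two concrete gaps in how you close the magnitude bound.

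First, your stated approximation
\(
\langle \mathbf{w}_{j,r}^{(t)}, \tilde{\boldsymbol\xi}_k\rangle \approx \langle \mathbf{w}_{j,r}^{(0)}, \tilde{\boldsymbol\xi}_k\rangle + D_k^{-1}\rho_{j,r,k}^{(t)}
\)
is not the one that actually holds. Because $\tilde{\boldsymbol\xi}_k = D_k^{-1}\sum_{k'\in\mathcal N(k)}\boldsymbol\xi_{k'}$, the ``diagonal'' contribution after expanding the weight decomposition is the \emph{graph-averaged} coefficient
\(
\hat\rho_{j,r,k}^{(t)} \triangleq D_k^{-1}\sum_{k'\in\mathcal N(k)}\rho_{j,r,k'}^{(t)},
\)
not just the self-loop term $D_k^{-1}\rho_{j,r,k}^{(t)}$. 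This is exactly the content of Lemma~\ref{lm:oppositebound}, and it matters because the growth of $\overline\rho_{j,r,i}$ is driven by $\ell_k'$ for $k\in\mathcal N(i)$, which in turn is controlled by $\langle\mathbf w_{j,r}^{(t)},\tilde{\boldsymbol\xi}_k\rangle\approx\hat\rho_{j,r,k}^{(t)}$. Your self-limiting step, as written, ties the coefficient $\overline\rho_{j,r,i}$ to the logit at data point $i$ (``forces $|\ell_i'^{(s)}|$ to be smaller than $1/T^{*2}$''), but the update of $\overline\rho_{j,r,i}$ never involves $\ell_i'$ directly; it involves $\ell_k'$ for neighbors $k$. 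The paper bridges this by defining $t_{j,r,i}$ as the last time $\overline\rho_{j,r,i}^{(t)}\leq 0.5\alpha$, splitting into $I_1$ (one step) and $I_2$ (remaining steps), and then showing that for $t>t_{j,r,i}$ the aggregated $\hat\rho_{j,r,k}^{(t)}$ (with homophily concentration from Lemma~\ref{lemma:graph_numberofdata}) stays large enough to drive $|\ell_k'^{(t)}|\lesssim\exp(-(\log T^*)^q)$ for the relevant $k$'s. Without handling the aggregation, the self-limiting loop does not close.

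Second, you apply a single ``small loss derivative'' mechanism to cap all three coefficients, but the lower bound $\underline\rho_{j,r,i}^{(t)}\geq -\alpha$ is established in the paper by a \emph{different} mechanism: for $j\neq y_i$ the signal term cannot push $\langle\mathbf w_{j,r}^{(t)},\tilde{\boldsymbol\xi}_i\rangle$ positive, so once $\underline\rho_{j,r,i}$ falls past $-0.5\beta - 8n\alpha\sqrt{\log(4n^2/\delta)/d}$, Lemma~\ref{lm:oppositebound} gives $\langle\mathbf w_{j,r}^{(t)},\tilde{\boldsymbol\xi}_i\rangle\leq 0$, the activation $\sigma'$ is identically zero, and the coefficient stops moving entirely. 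This is a ``dead neuron'' argument, not a ``small logit derivative'' argument, and it is the reason the final bound for $\underline\rho$ lands at roughly $-\beta - 16n\alpha\sqrt{\log(4n^2/\delta)/d}$, well inside $[-\alpha,0]$. You would need a separate case split here; the logit-based bound does not apply because $F_{-y_i}$ is not driven large by a large $|\underline\rho_{j,r,i}|$.

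With these two repairs — tracking $\hat\rho_{j,r,k}$ rather than $D_k^{-1}\rho_{j,r,k}$, and treating $\underline\rho$ by the dead-activation case split rather than the small-derivative argument — the rest of your outline (the base case, the sign monotonicity, the use of the conditions in \eqref{eq: verify}--\eqref{eq: verifyyy} and \eqref{eq:verify0}, and the $\eta$-dependent per-step increment bound) aligns with the paper's proof.
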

To establish Proposition~\ref{Prop:noise}, we will employ an inductive approach. Before proceeding with the proof, we need to introduce several technical lemmas that are fundamental to our argument.

{We note that although the setting is slightly different from the case in \cite{cao2022benign}. With the same analysis, we can obtain the following result.}
\begin{lemma} [{\cite{cao2022benign}}] \label{lm: mubound}
For any $t\geq 0$, it holds that $\langle \mathbf{w}_{j,r}^{(t)} - \mathbf{w}_{j,r}^{(0)}, \boldsymbol{\mu} \rangle =  j\cdot\gamma_{j,r}^{(t)}$ for all $r\in [m]$,  $j\in \{\pm 1\}$.
\end{lemma}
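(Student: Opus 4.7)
The plan is to derive the identity directly from the signal–noise decomposition \eqref{eq:w_decomposition} combined with the orthogonality structure built into the data model. First I would rewrite the decomposition as
\begin{align*}
\wb_{j,r}^{(t)} - \wb_{j,r}^{(0)} = j \cdot \gamma_{j,r}^{(t)} \cdot \|\bmu\|_{2}^{-2} \cdot \bmu + \sum_{i=1}^{n}\bigl(\overline{\rho}_{j,r,i}^{(t)} + \underline{\rho}_{j,r,i}^{(t)}\bigr) \cdot \|\bxi_{i}\|_{2}^{-2} \cdot \bxi_{i},
\end{align*}
and then take the inner product with $\bmu$ on both sides. The $\bmu$–component contributes exactly $j\cdot\gamma_{j,r}^{(t)}\cdot \|\bmu\|_{2}^{-2}\cdot \|\bmu\|_{2}^{2} = j\cdot \gamma_{j,r}^{(t)}$, so the only thing to check is that all the noise terms drop out.

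The key step is to invoke the noise distribution $\bxi_{i}\sim \mathcal{N}(\mathbf{0},\sigma_{p}^{2}(\mathbf{I}-\|\bmu\|_{2}^{-2}\bmu\bmu^{\top}))$ from \eqref{eq:gaussian}. Because the covariance $\sigma_{p}^{2}(\mathbf{I}-\|\bmu\|_{2}^{-2}\bmu\bmu^{\top})$ annihilates $\bmu$, one has $\bmu^{\top}\bxi_{i} = 0$ deterministically for every $i\in[n]$; indeed this is precisely why the projection $\mathbf{I}-\|\bmu\|_{2}^{-2}\bmu\bmu^{\top}$ was inserted into the model in the first place. Substituting $\langle \bxi_{i},\bmu\rangle = 0$ into the expression above kills every term in the sum and leaves exactly $j\cdot \gamma_{j,r}^{(t)}$.

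I do not expect a genuine obstacle here: the statement is essentially a definitional consequence of the decomposition \eqref{eq:w_decomposition} together with the orthogonality of $\bxi_{i}$ to $\bmu$ by construction. The only subtlety worth flagging is that the decomposition \eqref{eq:w_decomposition} is unique (as already used implicitly in the proof of Lemma~\ref{lemma:coefficient_iterative_proof}), so the coefficient of $\bmu$ in $\wb_{j,r}^{(t)}-\wb_{j,r}^{(0)}$ is well-defined and equals $j\cdot \gamma_{j,r}^{(t)}\cdot \|\bmu\|_{2}^{-2}$; everything else then follows by a one-line inner-product computation valid for every $t\geq 0$, $r\in[m]$ and $j\in\{\pm 1\}$.
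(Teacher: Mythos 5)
Your proof is correct and is exactly the argument one would write out here: the paper cites this lemma from Cao et al.\ (2022) without reproducing the proof, and the derivation is a direct consequence of the signal--noise decomposition \eqref{eq:w_decomposition} together with the fact that each $\bxi_i$ is drawn from the degenerate Gaussian $\mathcal{N}(\mathbf{0},\sigma_p^2(\mathbf{I}-\|\bmu\|_2^{-2}\bmu\bmu^\top))$, so $\langle \bxi_i,\bmu\rangle=0$ almost surely and every noise term vanishes when you pair the decomposition with $\bmu$. The only wording I would tighten is that the orthogonality holds almost surely (the covariance is singular, so the Gaussian is supported on $\bmu^\perp$) rather than pointwise by algebraic identity, but this does not affect the validity of the argument.
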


{In the subsequent three lemmas, our proof strategy is guided by the approach found in \cite{cao2022benign}. However, we extend this methodology by providing a fine-grained analysis that takes into account the additional complexity introduced by the graph convolution operation.}

\begin{lemma}\label{lm:oppositebound}
Under Condition~\ref{condition:d_sigma0_eta},  suppose \eqref{eq:gu0001} and \eqref{eq:gu0002} hold at iteration $t$. Then
\begin{align*}
\hat{\rho}_{j,r,i}^{(t)} - 8n\sqrt{\frac{\log(4n^{2}/\delta)}{d}}\alpha\leq \langle \mathbf{w}_{j,r}^{(t)} -  \mathbf{w}_{j,r}^{(0)}, \tilde{\boldsymbol{\xi}}_{i} \rangle &\leq \hat{\rho}_{j,r,i}^{(t)} + 8 n\sqrt{\frac{\log(4n^{2}/\delta)}{d}}\alpha, 
\end{align*}
where $\hat{\rho}_{j,r,i} \triangleq \sum_{k \in \mathcal{N} (i)} D^{-1}_i \sum_{i'\not= k} \rho_{j,r,i'}^{(t)} $, for all $r\in [m]$,  $j\in \{\pm 1\}$ and $i\in [n]$.
\end{lemma}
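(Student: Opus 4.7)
The plan is to substitute the signal-noise decomposition \eqref{eq:w_decomposition} for $\wb_{j,r}^{(t)} - \wb_{j,r}^{(0)}$ into $\la \wb_{j,r}^{(t)} - \wb_{j,r}^{(0)}, \tilde{\bxi}_{i}\ra$, isolate the diagonal contribution that produces $\hat{\rho}_{j,r,i}^{(t)}$, and absorb the remaining cross terms into the stated additive error using the near-orthogonality of the noise vectors from Lemma~\ref{lemma:data_innerproducts}.

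First I would use that every $\bxi_k$ is orthogonal to $\bmu$ by construction, so $\la \bmu, \tilde{\bxi}_{i}\ra = D_i^{-1}\sum_{k\in \mathcal{N}(i)}\la \bmu,\bxi_k\ra = 0$, which eliminates the signal component of the decomposition. What remains is
\begin{align*}
\la \wb_{j,r}^{(t)} - \wb_{j,r}^{(0)}, \tilde{\bxi}_{i}\ra = D_i^{-1}\sum_{k\in\mathcal{N}(i)}\sum_{i'=1}^n \rho_{j,r,i'}^{(t)}\, \|\bxi_{i'}\|_2^{-2}\, \la \bxi_{i'},\bxi_k\ra.
\end{align*}
I would then split this double sum into the diagonal piece where $i'=k$, which collapses via $\|\bxi_k\|_2^{-2}\|\bxi_k\|_2^{2}=1$ into $D_i^{-1}\sum_{k\in\mathcal{N}(i)}\rho_{j,r,k}^{(t)}$, giving the aggregated quantity $\hat{\rho}_{j,r,i}^{(t)}$, and the off-diagonal piece where $i'\neq k$, which will be shown to be negligible.

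For the off-diagonal piece, the plan is to apply Lemma~\ref{lemma:data_innerproducts} to obtain $\|\bxi_{i'}\|_2^{-2} \leq 2/(\sigma_p^2 d)$ and $|\la \bxi_{i'},\bxi_k\ra| \leq 2\sigma_p^2\sqrt{d\log(4n^2/\delta)}$ for all $i'\neq k$, and combine this with the induction hypothesis $|\rho_{j,r,i'}^{(t)}|\leq \alpha$ from \eqref{eq:gu0001}--\eqref{eq:gu0002} together with the trivial normalization bound $D_i^{-1}|\{k\in\mathcal{N}(i):k\neq i'\}|\leq 1$. Summing over the $n$ choices of $i'$ then produces an error of order $n\alpha\sqrt{\log(4n^2/\delta)/d}$, which comfortably fits inside the $\pm\, 8n\sqrt{\log(4n^2/\delta)/d}\,\alpha$ slack stated in the lemma.

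The main obstacle is purely bookkeeping: the double sum indexed by $k\in\mathcal{N}(i)$ and $i'\in[n]$ must be reorganized so that the diagonal $i'=k$ terms collapse cleanly into $\hat{\rho}_{j,r,i}^{(t)}$, while the off-diagonal residual is uniformly absorbed by the concentration bound. The graph structure only enters through the average over $\mathcal{N}(i)$, which is transparently handled by the $D_i^{-1}$ normalization and does not require any fresh probabilistic argument beyond what is already used for the plain noise vectors in Lemma~\ref{lemma:data_innerproducts}.
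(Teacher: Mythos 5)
Your proposal is correct and follows essentially the same route as the paper: substitute the decomposition~\eqref{eq:w_decomposition}, use that $\bxi_k \perp \bmu$ kills the signal term, expand $\tilde{\bxi}_i = D_i^{-1}\sum_{k\in\mathcal{N}(i)}\bxi_k$, isolate the diagonal $i'=k$ contribution, and bound the off-diagonal cross terms via Lemma~\ref{lemma:data_innerproducts} and the induction hypothesis $|\rho_{j,r,i'}^{(t)}|\leq\alpha$ (giving at most $4n\alpha\sqrt{\log(4n^2/\delta)/d}$, well inside the stated $8n\alpha\sqrt{\log(4n^2/\delta)/d}$). One clarification you implicitly make: the paper's displayed definition $\hat{\rho}_{j,r,i}\triangleq\sum_{k\in\mathcal{N}(i)}D_i^{-1}\sum_{i'\neq k}\rho_{j,r,i'}^{(t)}$ is a typo and should read $\hat{\rho}_{j,r,i}\triangleq\sum_{k\in\mathcal{N}(i)}D_i^{-1}\rho_{j,r,k}^{(t)}$, which is exactly the diagonal piece your algebra isolates (with the literal $\sum_{i'\neq k}$ definition the residual would carry a term of size up to $n\alpha$ and the claimed bound would fail), and this corrected form is what the paper's own downstream arguments in fact rely on.
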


\begin{remark}
   {Lemma \ref{lm:oppositebound} asserts that the inner product between the updated weight and the graph convolution operation closely approximates the graph-convoluted noise memorization.}
\end{remark}

\begin{proof}[Proof of Lemma~\ref{lm:oppositebound}]
It is known that,
\begin{align*}
\langle \mathbf{w}_{j,r}^{(t)} - \mathbf{w}_{j,r}^{(0)} , \tilde{\boldsymbol{\xi}}_{i} \rangle &= \sum_{ i'= 1 }^n \zeta_{j,r,i'}^{(t)}  \| \boldsymbol{\xi}_{i'} \|_2^{-2} \cdot \langle \boldsymbol{\xi}_{i'}, \tilde{\boldsymbol{\xi}}_i \rangle + \sum_{ i' = 1}^n \omega_{j,r,i'}^{(t)}  \| \boldsymbol{\xi}_{i'} \|_2^{-2} \cdot \langle \boldsymbol{\xi}_{i'}, \tilde{\boldsymbol{\xi}}_i \rangle \\
& = \sum_{ i'= 1 }^n \sum_{k \in \mathcal{N}(i)} D^{-1}_i  \zeta^{(t)}_{j,r,i'}  \| \boldsymbol{\xi}_{i'} \|_2^{-2} \cdot \langle \boldsymbol{\xi}_{i'}, \boldsymbol{\xi}_k \rangle + \sum_{ i'= 1 }^n \sum_{k \in \mathcal{N}(i)} D^{-1}_i  \omega^{(t)}_{j,r,i'}  \| \boldsymbol{\xi}_{i'} \|_2^{-2} \cdot \langle \boldsymbol{\xi}_{i'}, \boldsymbol{\xi}_k \rangle   \\
&\leq  4\sqrt{\frac{\log(4n^{2}/\delta)}{d}}   \sum_{k \in \mathcal{N} (i)} D^{-1}_i \sum_{i'\not= k}|\zeta_{j,r,i'}^{(t)}|   +4\sqrt{\frac{\log(4n^{2}/\delta)}{d}}  \sum_{k \in \mathcal{N} (i)} D^{-1}_i \sum_{i'\not= k} |\omega_{j,r,i'}^{(t)}| \\
 & \quad  + \sum_{k \in \mathcal{N} (i)} D^{-1}_i \sum_{i'\not= k} \zeta_{j,r,i'}^{(t)} +  \sum_{k \in \mathcal{N} (i)} D^{-1}_i \sum_{i'\not= k} \omega_{j,r,i'}^{(t)}  \\
&\leq \hat{\rho}_{j,r,i}^{(t)} + 8 n\sqrt{\frac{\log(4n^{2}/\delta)}{d}}\alpha,
\end{align*}
where we define $\hat{\rho}_{j,r,i} \triangleq \sum_{k \in \mathcal{N} (i)} D^{-1}_i \sum_{i'\not= k} \rho_{j,r,i'}^{(t)} $ the second inequality is by Lemma~\ref{lemma:data_innerproducts} and the last inequality is by $|\zeta^{(t)}_{j,r,i'}|, |\omega_{j,r,i'}^{(t)}| \leq \alpha$ in \eqref{eq:gu0001}.

Similarly, we can show that:
\begin{align*}
\langle \mathbf{w}_{j,r}^{(t)} - \mathbf{w}_{j,r}^{(0)} , \tilde{\boldsymbol{\xi}}_{i} \rangle &= \sum_{ i'= 1 }^n \zeta_{j,r,i'}^{(t)}  \| \boldsymbol{\xi}_{i'} \|_2^{-2} \cdot \langle \boldsymbol{\xi}_{i'}, \tilde{\boldsymbol{\xi}}_i \rangle   + \sum_{ i' = 1}^n \omega_{j,r,i'}^{(t)}  \| \boldsymbol{\xi}_{i'} \|_2^{-2} \cdot \langle \boldsymbol{\xi}_{i'}, \tilde{\boldsymbol{\xi}}_i \rangle \\
& = \sum_{ i'= 1 }^n \sum_{k \in \mathcal{N}(i)} D^{-1}_i  \zeta^{(t)}_{j,r,i'}  \|\boldsymbol{\xi}_{i'} \|_2^{-2} \cdot \langle \boldsymbol{\xi}_{i'}, \boldsymbol{\xi}_k \rangle  + \sum_{ i'= 1 }^n \sum_{k \in \mathcal{N}(i)} D^{-1}_i  \omega^{(t)}_{j,r,i'}  \| \boldsymbol{\xi}_{i'} \|_2^{-2} \cdot \langle \boldsymbol{\xi}_{i'}, \boldsymbol{\xi}_k \rangle   \\
&\geq  -4\sqrt{\frac{\log(4n^{2}/\delta)}{d}}   \sum_{k \in \mathcal{N} (i)} D^{-1}_i \sum_{i'\not= k}|\zeta_{j,r,i'}^{(t)}|   - 4\sqrt{\frac{\log(4n^{2}/\delta)}{d}}  \sum_{k \in \mathcal{N} (i)} D^{-1}_i \sum_{i'\not= k} |\omega_{j,r,i'}^{(t)}| \\
 & \quad  + \sum_{k \in \mathcal{N} (i)} D^{-1}_i \sum_{i'\not= k} \zeta_{j,r,i'}^{(t)} +  \sum_{k \in \mathcal{N} (i)} D^{-1}_i \sum_{i'\not= k} \omega_{j,r,i'}^{(t)}  \\
&\geq \hat{\rho}_{j,r,i}^{(t)} - 8 n\sqrt{\frac{\log(4n^{2}/\delta)}{d}}\alpha,
\end{align*}
where the first inequality is by Lemma~\ref{lemma:data_innerproducts} and the second inequality is by $|\zeta^{(t)}_{j,r,i'}|, |\omega_{j,r,i'}^{(t)}| \leq \alpha$ in \eqref{eq:gu0001}, which completes the proof.
\end{proof}

\begin{lemma}\label{lm: F-yi} Under Condition~\ref{condition:d_sigma0_eta}, suppose \eqref{eq:gu0001} and \eqref{eq:gu0002} hold at iteration $t$. Then 
\begin{align*}
\langle \mathbf{w}_{j,r}^{(t)}, \Tilde{y}_{i}\boldsymbol{\mu} \rangle &\leq \langle \mathbf{w}_{j,r}^{(0)},   \Tilde{y}_{i}\boldsymbol{\mu} \rangle, \\
\langle \mathbf{w}_{j,r}^{(t)},   \Tilde{\boldsymbol{\xi}}_{i} \rangle &\leq \langle \mathbf{w}_{j,r}^{(0)},  \Tilde{\boldsymbol{\xi}}_{i}\rangle + 8n\sqrt{\frac{\log(4n^{2}/\delta)}{d}}\alpha,
\end{align*}
for all $r\in [m]$ and $j \not= y_{i}$. If $\max\{\gamma_{j,r}^{(t)},  {\rho}_{j,r,i}^{(t)}\} = O(1)$, we further have that $ {F}_{j}(\mathbf{W}_{j}^{(t)}, \Tilde{\mathbf{x}}_{i}) = O(1)$.
\end{lemma}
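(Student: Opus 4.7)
For the first inequality, I would apply Lemma~\ref{lm: mubound} together with the decomposition~\eqref{eq:w_decomposition} (whose noise component is orthogonal to $\bmu$) to obtain
\[
\la \wb_{j,r}^{(t)}, \Tilde{y}_i \bmu \ra \;=\; \la \wb_{j,r}^{(0)}, \Tilde{y}_i \bmu \ra \;+\; \Tilde{y}_i \cdot j \cdot \gamma_{j,r}^{(t)}.
\]
When $j \neq y_i$ we have $j = -y_i$, and Lemma~\ref{lemma:graph_numberofdata} tells us $\Tilde{y}_i$ shares the sign of $y_i$, so $\Tilde{y}_i \cdot j = -\Tilde{y}_i \cdot y_i < 0$. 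Combined with $\gamma_{j,r}^{(t)} \geq 0$ from Proposition~\ref{Prop:noise}, the residual term is non-positive, yielding the first inequality essentially immediately.

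For the second inequality, Lemma~\ref{lm:oppositebound} reduces the task to showing $\hat{\rho}_{j,r,i}^{(t)} \leq 0$ whenever $j \neq y_i$. This is the main obstacle of the proof. I plan to argue by induction on $t$ using the coefficient updates in Lemma~\ref{lemma:coefficient_iterative_proof}: the increment to $\rho_{j,r,k}^{(t)}$ is a positive multiple of $-\sum_{l \in \mathcal{N}(k)} D_l^{-1} \ell_l'^{(t)} \sigma'(\cdot)\, j y_l$, whose sign is governed by the majority label among $k$'s neighbors. Homophily (Assumption~\ref{condition:d_sigma0_eta}, part~4) then ensures that for the majority of $k \in \mathcal{N}(i)$ (those with $y_k = y_i = -j$), a strict majority of $l \in \mathcal{N}(k)$ have $y_l = -j$, forcing $\rho_{j,r,k}^{(t)} \leq 0$ inductively; the minority $k \in \mathcal{N}(i)$ with $y_k = j$ carry $\rho_{j,r,k}^{(t)} \geq 0$, but the positive homophily gap $\Xi = (p-s)/(p+s)$ guarantees that the negative contributions dominate in the convolution $\hat{\rho}_{j,r,i}^{(t)}$.

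For the final claim, given the hypothesis $\max\{\gamma_{j,r}^{(t)}, \rho_{j,r,i}^{(t)}\} = O(1)$, I would bound $\sigma(\la \wb_{j,r}^{(t)}, \cdot\ra) \leq |\la \wb_{j,r}^{(t)}, \cdot\ra|^q$ and control each inner product via~\eqref{eq:w_decomposition}: the signal contribution is at most $|\la \wb_{j,r}^{(0)}, \Tilde{y}_i\bmu\ra| + |\Tilde{y}_i|\gamma_{j,r}^{(t)} = O(1)$ by Lemma~\ref{lemma:initialization_weight}, the smallness of $\sigma_0$ in Condition~\ref{condition:d_sigma0_eta}, Lemma~\ref{lemma:graph_numberofdata}, and the hypothesis on $\gamma$; the noise contribution is $O(1)$ by Lemma~\ref{lm:oppositebound} together with the hypothesis that $\rho_{j,r,l}^{(t)} = O(1)$ for all $l$ and Lemma~\ref{lemma:initialization_norms}. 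Since $m$ terms each contribute $O(1)$, averaging gives $F_j(\Wb_j^{(t)}, \Tilde{\xb}_i) = O(1)$.

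The principal difficulty is the second paragraph: unlike in the CNN analysis of~\citep{cao2022benign}, where $\rho_{j,r,i}^{(t)}$ has a fixed sign determined solely by $jy_i$, the GCN setting sees both signs of $y_l$ appearing among the neighbors of any given $k$, so the argument cannot rely on a pointwise sign identity and must instead quantitatively exploit the homophily gap $\Xi$ and track two layers of graph aggregation (one inside each $\rho_{j,r,k}^{(t)}$ and one in forming $\hat{\rho}_{j,r,i}^{(t)}$).
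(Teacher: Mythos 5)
Your handling of the first inequality is exactly the paper's: combine Lemma~\ref{lm: mubound} with the sign stability of $\tilde{y}_i$ from Lemma~\ref{lemma:graph_numberofdata} and $\gamma_{j,r}^{(t)}\geq 0$. Your treatment of the last claim ($F_j = O(1)$) is also in the right spirit, though the paper gets it slightly more cheaply by noting that for $j\neq y_i$ the signal inner product can only decrease below its initialization value, so no $\gamma$ term is even needed there.

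For the second inequality you have isolated the real crux: after Lemma~\ref{lm:oppositebound}, everything hinges on $\hat{\rho}_{j,r,i}^{(t)} = D_i^{-1}\sum_{k\in\mathcal{N}(i)}\rho_{j,r,k}^{(t)} \leq 0$ whenever $j\neq y_i$, and you are right that in the GCN setting the sign of an individual $\rho_{j,r,k}^{(t)}$ is \emph{not} determined by $y_k$: the updates for $\overline{\rho}_{j,r,k}^{(t)}$ and $\underline{\rho}_{j,r,k}^{(t)}$ in Lemma~\ref{lemma:coefficient_iterative} carry indicators $\mathds{1}(y_l = j)$ over the neighbors $l\in\mathcal{N}(k)$, so in a dense graph both pieces are nonzero for essentially every $k$. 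The paper's proof does not actually resolve this. It writes the diagonal term as $D_i^{-1}\big(\sum_{k:\,y_k\neq j}\underline{\rho}_{j,r,k}^{(t)}+\sum_{k:\,y_k=j}\overline{\rho}_{j,r,k}^{(t)}\big)$ and declares $\hat{\rho}_{j,r,i}^{(t)}\leq 0$ ``based on Lemma~\ref{lemma:graph_numberofdata},'' which implicitly reuses the CNN-style pointwise sign identity (namely $\overline{\rho}_{j,r,k}^{(t)}=0$ when $y_k\neq j$ and $\underline{\rho}_{j,r,k}^{(t)}=0$ when $y_k=j$) that, as you observe, is exactly what fails here. So you have diagnosed a genuine gap in the written argument, not merely in your own sketch.

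That said, your proposed repair is also incomplete as stated. Inducting on $t$ and counting two-step walks $i\to k\to l$ via the homophily gap $\Xi=(p-s)/(p+s)$ gives the right heuristic, but the increment attached to each $l$ is weighted by $D_l^{-1}\,|\ell_l'^{(t)}|\,\sigma'\!\big(\la \wb_{j,r}^{(t)},\tilde{\bxi}_l\ra\big)$, and these weights are $l$-dependent. In fact the very inductive hypothesis (negative $\hat{\rho}_{j,r,l}^{(t)}$ when $y_l=-j$, positive when $y_l=j$) would tend to make $\sigma'$ \emph{smaller} precisely on the $l$ that supply the favorable negative sign, so the $\Xi$-advantage in counts competes against a possible disadvantage in weights. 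To close the argument you would need a companion quantitative estimate controlling how far these $\sigma'$ weights can diverge across the two label classes along the trajectory (e.g., that during Stage~1 the inner products $\la \wb_{j,r}^{(t)},\tilde{\bxi}_l\ra$ remain within a constant factor of the initialization scale uniformly in $l$, so the weights cannot undo the counting advantage); neither your sketch nor the paper's proof supplies such an estimate explicitly.
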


\begin{remark}
   {Lemma \ref{lm: F-yi} further establishes that the update in the direction of $\Tilde{\boldsymbol{\xi}}$ can be constrained within specific bounds when $j \neq y_i$. As a result, the output function remains controlled and does not exceed a constant order. }
\end{remark}

\begin{proof}[Proof of Lemma~\ref{lm: F-yi}]
For $j \not= y_{i}$, we have that 
\begin{align}
\langle \mathbf{w}_{j,r}^{(t)},  \Tilde{y}_{i}\boldsymbol{\mu} \rangle = \langle \mathbf{w}_{j,r}^{(0)},  \Tilde{y}_{i} \boldsymbol{\mu} \rangle + \Tilde{y}_{i}\cdot j \cdot \gamma_{j,r}^{(t)} \leq \langle \mathbf{w}_{j,r}^{(0)}, \tilde{y}_{i} \boldsymbol{\mu} \rangle, \label{eq:F-yi1}
\end{align}
where the inequality is by $\gamma_{j,r}^{(t)} \geq 0$ and Lemma \ref{lemma:graph_numberofdata} stating that $\mathrm{sign}(y_i) = \mathrm{sign}(\Tilde{y}_i) $ with a high probability. 
In addition, we have
\begin{align}
\langle \mathbf{w}_{j,r}^{(t)}, \Tilde{\boldsymbol{\xi}}_{i}  \rangle  & =  \langle \mathbf{w}_{j,r}^{(0)}, \Tilde{\boldsymbol{\xi}}_{i}\rangle  + \sum_{k \in \mathcal{N}(i)}   D^{-1}_i \sum_{i'=1}^n \rho_{j,r,i'} \langle \boldsymbol{\xi}_k, \boldsymbol{\xi}_{i'} \rangle  \| \boldsymbol{\xi}_{i'} \|^{-2}_2 \nonumber  \\
&\leq \langle \mathbf{w}_{j,r}^{(0)}, \Tilde{\boldsymbol{\xi}}_{i}\rangle  +  D^{-1}_i \left( \sum_{y_k \neq j} \omega_{j,r,i}^{(t)} + \sum_{y_k = j} \zeta_{j,r,i}^{(t)} \right) + 8n\sqrt{\frac{\log(4n^{2}/\delta)}{d}}\alpha \nonumber \\
& \leq \langle \mathbf{w}_{j,r}^{(0)}, \tilde{\boldsymbol{\xi}}_{i}\rangle + 8n\sqrt{\frac{\log(4n^{2}/\delta)}{d}}\alpha, \label{eq:F-yi2}
\end{align}
where the first inequality is by Lemma~\ref{lm:oppositebound} and the second inequality is due to $\hat{\rho}_{j,r,i}^{(t)} \leq 0$ based on Lemma \ref{lemma:graph_numberofdata}.
Then we can get that 
\begin{align*}
 {F}_{j}(\mathbf{W}_{j}^{(t)}, \Tilde{\mathbf{x}}_{i}) &= \frac{1}{m}\sum_{r=1}^{m}[\sigma(\langle \mathbf{w}_{j,r}^{(t)}, \Tilde{y}_i \cdot \boldsymbol{\mu} \rangle) + \sigma(\langle \mathbf{w}_{j,r}^{(t)} , \Tilde{\boldsymbol{\xi}}_{i} \rangle)]\\
& =  \frac{1}{m}\sum_{r=1}^{m}[\sigma(\langle \mathbf{w}_{j,r}^{(t)}, \Tilde{y}_i \cdot \boldsymbol{\mu} \rangle) + \sigma(\langle \mathbf{w}_{j,r}^{(t)} , D^{-1}_i \sum_{k \in \mathcal{N}(i)} \boldsymbol{\xi}_{k} \rangle)]   \\
& =  \frac{1}{m}\sum_{r=1}^{m}[\sigma(\langle \mathbf{w}_{j,r}^{(0)}, \Tilde{y}_i \cdot \boldsymbol{\mu} \rangle ) + \sigma(\langle \mathbf{w}_{j,r}^{(0)} , \Tilde{\boldsymbol{\xi}}_i  \rangle + \langle \mathbf{w}_{j,r}^{(t)} -\mathbf{w}_{j,r}^{(0)},  D^{-1}_i \sum_{k \in \mathcal{N}(i)} \boldsymbol{\xi}_{k} \rangle )]     \\
& \le \frac{1}{m}\sum_{r=1}^{m}[\sigma(\langle \mathbf{w}_{j,r}^{(0)}, \Tilde{y}_i \cdot \boldsymbol{\mu} \rangle ) + \sigma(\langle \mathbf{w}_{j,r}^{(0)} , \Tilde{\boldsymbol{\xi}}_i  \rangle +8n\sqrt{\frac{\log(4n^{2}/\delta)}{d}}\alpha + \hat{\rho}^{(t)}_{j,r,i} )]     \\
&\leq 2^{q+1} \max_{j,r,i} \bigg\{|\langle \mathbf{w}_{j,r}^{(0)},  \Tilde{y}_i \cdot \boldsymbol{\mu} \rangle|, |\langle \mathbf{w}_{j,r}^{(0)}, \Tilde{\boldsymbol{\xi}}_{i}\rangle|,   8n\sqrt{\frac{\log(4n^{2}/\delta)}{d}}\alpha\bigg\}^{q}\\
&\leq 1,
\end{align*}
where the first inequality is by \eqref{eq:F-yi1}, \eqref{eq:F-yi2} and the second inequality is by \eqref{eq:verify0} and $\max\{\gamma_{j,r}^{(t)},  {\rho}_{j,r,i}^{(t)}\} = O(1)$.
\end{proof}

\begin{lemma}\label{lm: Fyi}
Under Condition~\ref{condition:d_sigma0_eta}, suppose \eqref{eq:gu0001} and \eqref{eq:gu0002} hold at iteration $t$. Then
\begin{align*}
\langle \mathbf{w}_{j,r}^{(t)},  \tilde{y}_{i}\boldsymbol{\mu} \rangle &= \langle \mathbf{w}_{j,r}^{(0)},  \tilde{y}_{i}\boldsymbol{\mu} \rangle + \gamma_{j,r}^{(t)}, \\
\langle \mathbf{w}_{j,r}^{(t)},  \tilde{\boldsymbol{\xi}}_{i} \rangle &\leq \langle \mathbf{w}_{j,r}^{(0)},  \tilde{\boldsymbol{\xi}}_{i}\rangle + \hat{\rho}_{j,r,i}^{(t)} +  8n\sqrt{\frac{\log(4n^{2}/\delta)}{d}}\alpha
\end{align*}
for all $r\in [m]$, $j = y_i$ and $i\in [n]$.
If $\max\{\gamma_{j,r}^{(t)},  {\rho}_{j,r,i}^{(t)}\} = O(1)$, we further have that $ {F}_{j}(\mathbf{W}_{j}^{(t)}, \Tilde{\mathbf{x}}_{i}) = O(1)$.
\end{lemma}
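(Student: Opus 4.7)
The plan is to parallel the proof of Lemma \ref{lm: F-yi}, swapping the sign conventions so that the signal term contributes \emph{positively} when $j=y_i$ instead of being suppressed. I would begin from the signal-noise decomposition \eqref{eq:w_decomposition} and take the inner product with $\tilde{y}_i\bmu$. The noise summands $\rho_{j,r,i'}^{(t)}\|\bxi_{i'}\|_2^{-2}\bxi_{i'}$ vanish since each $\bxi_{i'}$ is orthogonal to $\bmu$ by the construction in \eqref{eq:gaussian}, and Lemma \ref{lm: mubound} then yields
\begin{align*}
\la \wb_{j,r}^{(t)}, \tilde{y}_i\bmu\ra - \la \wb_{j,r}^{(0)}, \tilde{y}_i\bmu\ra = \tilde{y}_i \cdot j \cdot \gamma_{j,r}^{(t)}.
\end{align*}
For $j=y_i$, Lemma \ref{lemma:graph_numberofdata} ensures $\tilde{y}_i\cdot j>0$ with high probability, producing the first equality in the stated form (with the positive factor $\tilde{y}_i\cdot j$ understood to be absorbed into the $\gamma_{j,r}^{(t)}$ notation).

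For the second inequality, I would invoke the upper half of Lemma \ref{lm:oppositebound} verbatim, which gives $\la \wb_{j,r}^{(t)}-\wb_{j,r}^{(0)},\tilde{\bxi}_i\ra \leq \hat{\rho}_{j,r,i}^{(t)} + 8n\sqrt{\log(4n^2/\delta)/d}\,\alpha$; adding $\la \wb_{j,r}^{(0)},\tilde{\bxi}_i\ra$ to both sides finishes the bound. The key difference from Lemma \ref{lm: F-yi} is that one cannot drop $\hat{\rho}_{j,r,i}^{(t)}$ here: when $j=y_i$, the aggregated coefficients inside $\hat{\rho}_{j,r,i}^{(t)}$ are dominated by $\overline{\rho}\geq 0$ contributions from same-class neighbors, so this positive term must be retained.

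For the final claim $F_j(\Wb_j^{(t)},\tilde{\xb}_i)=O(1)$, I would expand via \eqref{eq:gcn}, substitute the two bounds above, and apply $\sigma(z)=\max\{0,z\}^q$. Under the hypothesis $\max\{\gamma_{j,r}^{(t)},\rho_{j,r,i}^{(t)}\}=O(1)$ together with $|\la \wb_{j,r}^{(0)},\tilde{y}_i\bmu\ra|,|\la \wb_{j,r}^{(0)},\tilde{\bxi}_i\ra|\leq \beta/2$ and $8n\sqrt{\log(4n^2/\delta)/d}\,\alpha\leq 1/4$ from \eqref{eq:verify0}, each argument of $\sigma$ is $O(1)$, and a uniform bound of the form $2^{q+1}\max\{\cdot\}^q=O(1)$ concludes exactly as in Lemma \ref{lm: F-yi}. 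The only real subtlety is cosmetic: the stated equality writes $\gamma_{j,r}^{(t)}$ rather than $\tilde{y}_i\cdot j\cdot \gamma_{j,r}^{(t)}$, but because $\tilde{y}_i\cdot j\in[\Xi/2,3\Xi/2]$ is a positive constant-order scalar by Lemma \ref{lemma:graph_numberofdata}, the sign and order of magnitude used in the subsequent $O(1)$ argument are unaffected, so this is not a genuine obstacle.
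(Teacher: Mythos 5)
Your proposal is correct and follows essentially the same route as the paper: the first equality from Lemma~\ref{lm: mubound} plus the sign-stability in Lemma~\ref{lemma:graph_numberofdata}, the second inequality from the upper half of Lemma~\ref{lm:oppositebound}, and the $O(1)$ bound on $F_j$ by plugging both into $\sigma(z)=\max\{0,z\}^q$ under~\eqref{eq:verify0}. You are in fact slightly more careful than the paper's own proof, which silently drops the positive factor $\tilde{y}_i\cdot j$ in front of $\gamma_{j,r}^{(t)}$ that you correctly flag as a benign notational shortcut.
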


\begin{remark}
   {Lemma \ref{lm: Fyi} further establishes that the update in the direction of $\boldsymbol{\mu}$ and $\Tilde{ \boldsymbol{\xi}}$ can be constrained within specific bounds when $j= y_i$. As a result, the output function remains controlled and does not exceed a constant order with an additional condition. }
\end{remark}

\begin{proof}[Proof of Lemma~\ref{lm: Fyi}]
For $j = y_{i}$, we have that 
\begin{align}
\langle \mathbf{w}_{j,r}^{(t)}, \tilde{y}_{i} \boldsymbol{\mu} \rangle = \langle \mathbf{w}_{j,r}^{(0)}, \tilde{y}_{i} \boldsymbol{\mu} \rangle +  \gamma_{j,r}^{(t)}, \label{eq:Fyi1}
\end{align}
where the equation is by Lemma~\ref{lm: mubound}. We also have that
\begin{align}
\langle \mathbf{w}_{j,r}^{(t)}, \tilde{\boldsymbol{\xi}}_{i} \rangle \leq \langle \mathbf{w}_{j,r}^{(0)}, \tilde{\boldsymbol{\xi}}_{i}\rangle + \hat{\rho}_{j,r,i}^{(t)} + 8n\sqrt{\frac{\log(4n^{2}/\delta)}{d}}\alpha, \label{eq:Fyi2}
\end{align}
where the inequality is by Lemma~\ref{lm:oppositebound}. 
If $\max\{\gamma_{j,r}^{(t)}, \rho_{j,r,i}^{(t)}\} = O(1)$, we have following bound 
\begin{align*}
 {F}_{j}(\mathbf{W}_{j}^{(t)}, \Tilde{\mathbf{x}}_{i}) &= \frac{1}{m}\sum_{r=1}^{m}[\sigma(\langle \mathbf{w}_{j,r}^{(t)}, \Tilde{y}_i \cdot\boldsymbol{\mu}) + \sigma(\langle \mathbf{w}_{j,r}^{(t)} , \Tilde{\boldsymbol{\xi}}_{i} \rangle)]\\
&\leq 2\cdot 3^{q} \max_{j,r,i} \bigg\{\gamma_{j,r}^{(t)}, |\hat{\rho}_{j,r,i}^{(t)}|, |\langle \mathbf{w}_{j,r}^{(0)}, \Tilde{y}_i \cdot \boldsymbol{\mu}) \rangle|, |\langle \mathbf{w}_{j,r}^{(0)}, \Tilde{\boldsymbol{\xi}}_{i}\rangle|, 8n\sqrt{\frac{\log(4n^{2}/\delta)}{d}}\alpha\bigg\}^{q}\\
&= O(1),
\end{align*}
where $\hat{\rho}^{(t)}_{j,r,i} = \frac{1}{D_i} \sum_{k \in \mathcal{N}(i)} \overline{\rho}^{(t)}_{j,r,k} \mathds{1}(y_k = j) + \overline{\rho}^{(t)}_{j,r,k} \mathds{1}(y_k \neq j)  $, the first inequality is by \eqref{eq:Fyi1}, \eqref{eq:Fyi2}. Then the second inequality is by \eqref{eq:verify0} where $\beta =  2\max_{i,j,r}\{|\langle \mathbf{w}_{j,r}^{(0)},  \tilde{y}_i \cdot \boldsymbol{\mu} \rangle|,|\langle \mathbf{w}_{j,r}^{(0)}, \tilde{\boldsymbol{\xi}}_{i}\rangle|\} \le 1 $ and condition that $\max\{\gamma_{j,r}^{(t)},  {\rho}_{j,r,i}^{(t)}\} = O(1)$.
\end{proof}

{Equipped with Lemmas \ref{lm: mubound} - \ref{lm: Fyi}, we are now prepared to prove Proposition~\ref{Prop:noise}. These lemmas provide the foundational building blocks and insights necessary for our proof, setting the stage for a rigorous and comprehensive demonstration of the proposition}

\begin{proof}[Proof of Proposition~\ref{Prop:noise}]
{Following a similar approach to the proof found in \cite{cao2022benign}, we employ an induction method. This technique allows us to build our argument step by step, drawing on established principles and extending them to our specific context, thereby providing a robust and systematic demonstration.} 

At the initial time step $t = 0$, the outcome is clear since all coefficients are set to zero. 

Next, we hypothesize that there exists a time $\tilde{T}$ less that $T^\ast$ during which Proposition~\ref{Prop:noise} holds true for every moment within the range $0 \leq t \leq \tilde{T}-1$. Our objective is to show that this proposition remains valid at $t = \tilde{T}$.

We aim to validate that equation \eqref{eq:gu0002} is applicable at $t = \tilde{T}$, meaning that,
\begin{align*}
    \omega^{(t)}_{j,r,i} \geq -\beta - 16n\sqrt{\frac{\log(4n^{2}/\delta)}{d}}\alpha,
\end{align*}
{for the given parameters. It's important to note that $\omega_{j,r,i}^{(t)} = 0$ when $j = y_{i}$. So we only need to consider instances where $j \not= y_{i}$.}

1) Under condition 
\begin{align*}
  \omega_{j,r,i}^{(\tilde{T}-1)} \leq -0.5\beta - 8n\sqrt{\frac{\log(4n^{2}/\delta)}{d}}\alpha,  
\end{align*} 
Lemma~\ref{lm:oppositebound} leads us to the following relationships:
\begin{align*}
\langle \mathbf{w}_{j,r}^{(\tilde{T}-1)} , \tilde{y}_i {\boldsymbol{\mu}} \rangle \leq  \hat{\rho}_{j,r,i}^{(\tilde{T}-1)}  + \langle \mathbf{w}_{j,r}^{(0)}, \tilde{y}_i {\boldsymbol{\mu}} \rangle + 8n\sqrt{\frac{\log(4n^{2}/\delta)}{d}}\alpha \leq 0,
\end{align*}
and thus
\begin{align*}
\omega_{j,r,i}^{(\tilde{T})} &= \omega_{j,r,i}^{(\tilde{T}-1)} + \frac{\eta}{nm} \sum_{k} D_k^{-1} \cdot  {\ell}_k'^{(\tilde{T}-1)}\cdot \sigma'(\langle \mathbf{w}_{j,r}^{(\tilde{T}-1)}, \Tilde{\boldsymbol{\xi}}_{k}\rangle ) \cdot \mathds{1}(y_{k} = -j)\|\boldsymbol{\xi}_{i}\|_{2}^{2}\\
&=  \omega_{j,r,i}^{(\tilde{T}-1)} \geq -\beta - 16n\sqrt{\frac{\log(4n^{2}/\delta)}{d}}\alpha,
\end{align*}
with the final inequality being supported by the induction hypothesis.

{2) Given the condition \(\omega_{j,r,i}^{(\tilde{T}-1)} \geq  -0.5\beta - 8n\sqrt{\frac{\log(4n^{2}/\delta)}{d}}\alpha\), we can derive the following:
\begin{align*}
\omega_{j,r,i}^{(\tilde{T})} &= \omega_{j,r,i}^{(\tilde{T}-1)} + \frac{\eta}{nm} \cdot  \sum_{k \in \mathcal{N}(i)} D^{-1}_{k}  {\ell}_k'^{(\tilde{T}-1)}\cdot \sigma'(\langle \mathbf{w}_{j,r}^{(T-1)}, \Tilde{\boldsymbol{\xi}}_{k}\rangle) \cdot \mathds{1}(y_{k} = -j)\|\boldsymbol{\xi}_{i}\|_{2}^{2}\\
&\geq -0.5\beta - 8n\sqrt{\frac{\log(4n^{2}/\delta)}{d}}\alpha - O\bigg(\frac{\eta\sigma_{p}^{2}d}{nm}\bigg)\sigma'\bigg(0.5\beta + 8n\sqrt{\frac{\log(4n^{2}/\delta)}{d}}\alpha\bigg)\\
&\geq -0.5\beta - 8n\sqrt{\frac{\log(4n^{2}/\delta)}{d}}\alpha - O\bigg(\frac{\eta q\sigma_{p}^{2}d}{nm}\bigg)\bigg(0.5\beta + 8n\sqrt{\frac{\log(4n^{2}/\delta)}{d}}\alpha\bigg)\\
&\geq -\beta - 16n\sqrt{\frac{\log(4n^{2}/\delta)}{d}}\alpha, 
\end{align*}
where we apply the inequalities \({\ell}_i'^{(\tilde{T}-1)}\leq 1\) and \(\|\boldsymbol{\xi}_{i}\|_{2} = O(\sigma_{p}^{2}d)\), and use the conditions \(\eta = O\big(nm/(q\sigma_{p}^{2}d)\big)\) and \(0.5\beta + 8n\sqrt{\frac{\log(4n^{2}/\delta)}{d}}\alpha \leq 1\), as specified in \eqref{eq: verify}.}

Next, we aim to show that \eqref{eq:gu0001} is valid for \(t = \tilde{T}\). We can express:
\begin{align}
    | {\ell}_i'^{(t)}| &= \frac{1}{1 + \exp\{ y_i \cdot [ {F}_{+1}(\mathbf{W}_{+1}^{(t)},\Tilde{\mathbf{x}}_i) -  {F}_{-1}(\mathbf{W}_{-1}^{(t)},\Tilde{\mathbf{x}}_i)] \} }\notag\\
    & \leq \exp\{ -y_{i} \cdot [ {F}_{+1}(\mathbf{W}_{+1}^{(t)},\Tilde{\mathbf{x}}_i) -  {F}_{-1}(\mathbf{W}_{-1}^{(t)},\Tilde{\mathbf{x}}_i)]\}\notag\\
    & \leq \exp\{ - {F}_{y_{i}}(\mathbf{W}_{y_{i}}^{(t)},\Tilde{\mathbf{x}}_i) + 1 \}. \label{eq:logit}
\end{align}
with the last inequality being a result of Lemma~\ref{lm: F-yi}. Additionally, we recall the update rules for \(\gamma_{j,r}^{(t+1)}\) and \(\zeta_{j,r,i}^{(t+1)}\):
\begin{align*}
   \gamma_{j,r}^{(t+1)} &= \gamma_{j,r}^{(t)} - \frac{\eta}{nm} \cdot \sum_{i=1}^n  {\ell}_i'^{(t)} \cdot \sigma'(\langle \mathbf{w}_{j,r}^{(t)}, \tilde{y}_{i} \cdot  \boldsymbol{\mu} \rangle ) y_i \tilde{y}_i \| \boldsymbol{\mu} \|_{2}^{2},\\
    \zeta_{j,r,i}^{(t+1)} &= \zeta_{j,r,i}^{(t)} - \frac{\eta}{nm} \cdot \sum_{k \in \mathcal{N}(i)}  D^{-1}_k  {\ell}_k'^{(t)}\cdot \sigma'(\langle \mathbf{w}_{j,r}^{(t)}, \tilde{\boldsymbol{\xi}}_{k}\rangle ) \cdot \mathds{1}(y_{k} = j)\|\boldsymbol{\xi}_{i}\|_{2}^{2}.
\end{align*}
We define \(t_{j,r,i}\) as the final moment \(t < T^{*}\) when \(\zeta_{j,r,i}^{(t)} \leq 0.5 \alpha\).

{We can express \(\zeta_{j,r,i}^{(\tilde{T})}\) as follows:
\begin{align}
\zeta_{j,r,i}^{(\tilde{T})} &= \zeta_{j,r,i}^{(t_{j,r,i})} - \underbrace{\frac{\eta}{nm}  \cdot \sum_{k \in \mathcal{N}(i)} D^{-1}_k \cdot  {\ell}_k'^{(t_{j,r,i})}\cdot \sigma'(\langle \mathbf{w}_{j,r}^{(t_{j,r,i})}, \tilde{\boldsymbol{\xi}}_{k}\rangle) \cdot  \mathds{1}(y_{k} = j)\|\boldsymbol{\xi}_{i}\|_{2}^{2}}_{I_{1}}\notag\\
&\qquad - \underbrace{\sum_{t_{j,r,i}<t<T}\frac{\eta}{nm} \cdot \sum_{k \in \mathcal{N}(i)} D^{-1}_k \cdot  {\ell}_k'^{(t)}\cdot \sigma'(\langle \mathbf{w}_{j,r}^{(t)}, \tilde{\boldsymbol{\xi}}_{k}\rangle) \cdot  \mathds{1}(y_{k} = j)\|\boldsymbol{\xi}_{i}\|_{2}^{2}}_{I_{2}}.\label{eq:zeta}
\end{align}
Next, we aim to establish an upper bound for \(I_{1}\):
\begin{align*}
|I_{1}| & \leq 2qn^{-1}m^{-1}\eta  \bigg(\max_k \hat{\rho}_{j,r,k}^{(t_{j,r,i})} + 0.5\beta + 8n\sqrt{\frac{\log(4n^{2}/\delta)}{d}}\alpha\bigg)^{q-1}\sigma_{p}^{2}d  \\
& \leq  q2^{q}n^{-1}m^{-1}\eta \alpha^{q-1} \sigma_{p}^{2}d \leq 0.25\alpha,
\end{align*}
where we apply Lemmas~\ref{lm:oppositebound} and~\ref{lemma:data_innerproducts} for the first inequality, utilize the conditions \(\beta \leq 0.1\alpha\) and \(8n\sqrt{\frac{\log(4n^{2}/\delta)}{d}}\alpha \leq 0.1\alpha\) for the second inequality, and finally, the constraint \(\eta \leq nm/(q2^{q+2}\alpha^{q-2} \sigma_{p}^{2}d)\) for the last inequality.
}

Second, we bound $I_{2}$. For $t_{j,r,i}<t<\tilde{T}$ and $y_{k} = j$, we can lower bound $\langle \mathbf{w}_{j,r}^{(t)}, \tilde{\boldsymbol{\xi}}_{k}\rangle$ as follows, 
 \begin{align*}
\langle \mathbf{w}_{j,r}^{(t)}, \tilde{\boldsymbol{\xi}}_{k}\rangle &\geq \langle \mathbf{w}_{j,r}^{(0)},  \tilde{\boldsymbol{\xi}}_{k}\rangle + \hat{\rho}_{j,r,k}^{(t)} - 8n\sqrt{\frac{\log(4n^{2}/\delta)}{d}}\alpha \\
&\geq - 0.5\beta + \frac{1}{4} \frac{p-s}{p+s}\alpha - 8n\sqrt{\frac{\log(4n^{2}/\delta)}{d}}\alpha\\
&\geq 0.25\alpha, 
\end{align*}
where the first inequality is by Lemma~\ref{lm:oppositebound}, the second inequality is by $\hat{\rho}_{j,r,i}^{(t)} > \frac{1}{4} \frac{p-s}{p+s} \alpha$ and $\langle \mathbf{w}_{j,r}^{(0)},  \tilde{\boldsymbol{\xi}}_{i} \rangle \geq - 0.5\beta$ due to the definition of $t_{j,r,i}$ and $\beta$, the last inequality is by $\beta \leq 0.1\alpha$ and $8n\sqrt{\frac{\log(4n^{2}/\delta)}{d}}\alpha \leq 0.1\alpha$. Similarly, for $t_{j,r,i}<t<\tilde{T}$ and $y_{k} = j$, we can also upper bound $\langle \mathbf{w}_{j,r}^{(t)}, \tilde{\boldsymbol{\xi}}_{k}\rangle$ as follows, 
 \begin{align*}
\langle \mathbf{w}_{j,r}^{(t)}, \tilde{\boldsymbol{\xi}}_{k}\rangle &\leq \langle \mathbf{w}_{j,r}^{(0)},  \tilde{\boldsymbol{\xi}}_{k}\rangle + \hat{\rho}_{j,r,k}^{(t)} + 8n\sqrt{\frac{\log(4n^{2}/\delta)}{d}}\alpha \\
&\leq 0.5\beta + \frac{3}{4}\frac{p-s}{p+s}\alpha + 8n\sqrt{\frac{\log(4n^{2}/\delta)}{d}}\alpha\\
&\leq 2\alpha, 
\end{align*}
where the first inequality is by Lemma~\ref{lm:oppositebound}, the second inequality is by induction hypothesis $\hat{\rho}_{j,r,i}^{(t)} \leq \alpha$, the last inequality is by $\beta \leq 0.1\alpha$ and $8n\sqrt{\frac{\log(4n^{2}/\delta)}{d}}\alpha \leq 0.1\alpha$. 

{Hence, we can derive the following expression for \(I_{2}\):
\begin{align*}
|I_{2}| &\leq \sum_{t_{j,r,i}<t<\tilde{T}}\frac{\eta}{nm} \cdot  \sum_{k \in \mathcal{N}(i)} D^{-1}_k \exp(- \sigma(\langle \mathbf{w}_{j,r}^{(t)}, \Tilde{\boldsymbol{\xi}}_{k}\rangle) + 1)\cdot \sigma'(\langle \mathbf{w}_{j,r}^{(t)}, \tilde{\boldsymbol{\xi}}_{k} \rangle) \cdot  \mathds{1}(y_{k} = j)\|\boldsymbol{\xi}_{i}\|_{2}^{2}\\
&\leq \frac{eq2^{q}\eta T^{*}}{n}\exp(-\alpha^{q}/4^{q})\alpha^{q-1}\sigma_{p}^{2}d\\
&\leq 0.25 T^{*}\exp(-\alpha^{q}/4^{q})\alpha \\
&\leq 0.25 T^{*}\exp(-\log(T^{*})^{q})\alpha \\
&\leq 0.25\alpha,
\end{align*}
where we apply \eqref{eq:logit} for the first inequality, utilize Lemma~\ref{lemma:data_innerproducts} for the second, employ the constraint \(\eta = O\big( nm/(q2^{q+2}\alpha^{q-2} \sigma_{p}^{2}d)\big)\) in \eqref{eq: verify} for the third, and finally, the conditions \(\alpha = 4\log(T^{*})\) and \(\log(T^{*})^{q} \geq \log(T^{*})\) for the subsequent inequalities. By incorporating the bounds of \(I_{1}\) and \(I_{2}\) into \eqref{eq:zeta}, we conclude the proof for \(\zeta\).}

In a similar manner, we can establish that \(\gamma_{j,r}^{(\tilde{T})} \leq \alpha\) by using \(\eta = O\big( nm/(q2^{q+2}\alpha^{q-2}\|\boldsymbol{\mu}\|_{2}^{2})\big)\) in \eqref{eq: verify}. 
Thus, Proposition~\ref{Prop:noise} is valid for \(t= \tilde{T}\), completing the induction process. 
As a corollary to Proposition~\ref{Prop:noise}, we identify a crucial characteristic of the loss function during training within the interval \(0 \leq t\leq T^{*}\). This characteristic will play a vital role in the subsequent convergence analysis.

\end{proof}

\section{Two Stage Dynamics Analysis} \label{seca:dynamics}

In this section, we employ a two-stage dynamics analysis to investigate the behavior of coefficient iterations. During the first stage, the derivative of the loss function remains almost constant due to the small weight initialization. In the second stage, the derivative of the loss function ceases to be constant, necessitating an analysis that meticulously takes this into account. 

\subsection{First stage: feature learning versus noise memorization} \label{sec:stage_1}

\begin{lemma}[Restatement of Lemma~\ref{lemma:phase1_main_sketch}]\label{lemma:phase1_main}
Under the same conditions as Theorem~\ref{thm:signal_learning_main}, in particular if we choose
\begin{align}
n \cdot \mathrm{SNR}^{q} \cdot (n(p+s))^{q/2-1} \geq C\log(6/\sigma_{0}\|\boldsymbol{\mu}\|_{2})2^{2q+6}[4\log(8mn/\delta)]^{(q-1)/2},\label{eq:explicit condition}
\end{align}
where $C = O(1)$ is a positive constant, there exists time 
$T_1 = \frac{C\log(6/\sigma_{0}\|\boldsymbol{\mu}\|_{2})2^{q+1}m}{\eta\sigma_{0}^{q-2}\|\boldsymbol{\mu}\|_{2}^{q} \Xi^q}$
such that 
\begin{itemize}
\item $\max_{ r}\gamma_{j, r}^{(T_{1})} \geq 2$ for $j\in \{\pm 1\}$.
\item $|\rho_{j,r,i}^{(t)}| \leq \sigma_0 \sigma_p \sqrt{d/(n(p+s))} / 2$ for all $j\in \{\pm 1\}, r\in[m]$, $i \in [n]$ and $0 \leq t \leq T_{1}$. 
\end{itemize}
\end{lemma}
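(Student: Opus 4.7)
The plan is to adapt the two-phase tensor power iteration machinery of Cao et al.\ (2022) to the graph-convolution setting, exploiting two structural facts: by Lemma~\ref{lemma:graph_numberofdata} the aggregated label satisfies $|\tilde y_i|\asymp\Xi$, and by Lemma~\ref{lemma:graph_data_innerproducts} the aggregated noise norm satisfies $\|\tilde\bxi_i\|_2^2\asymp\sigma_p^2 d/(n(p+s))$, which is a factor $1/(n(p+s))$ smaller than $\|\bxi_i\|_2^2$. The first fact preserves signal alignment, while the second severely throttles noise memorization. Throughout Stage~1, the scale bounds in Proposition~\ref{Prop:noise} together with Lemmas~\ref{lm: F-yi}--\ref{lm: Fyi} give $F_j(\Wb^{(t)},\tilde\xb_i)=O(1)$, so $-\ell_i'^{(t)}=\Theta(1)$ uniformly in $i$ and $t\le T_1$. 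This reduces \eqref{eq:update_gamma1}--\eqref{eq:update_omega1} to tractable recursions in which the loss-derivative factor can be replaced by a constant in $[C_1,C_2]$.

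For signal learning, I would isolate, for each $j\in\{\pm1\}$, a ``winning'' neuron $r_j^{*}$ guaranteed by Lemma~\ref{lemma:initialization_weight} to satisfy $j\la\wb_{j,r_j^*}^{(0)},\bmu\ra\ge\sigma_0\|\bmu\|_2/2$. Using $\la\wb_{j,r}^{(t)},\bmu\ra=\la\wb_{j,r}^{(0)},\bmu\ra+j\gamma_{j,r}^{(t)}$ from Lemma~\ref{lm: mubound} and $|\tilde y_i|\asymp\Xi$ with $\mathrm{sign}(\tilde y_i)=\mathrm{sign}(y_i)$, the ReLU$^q$ fires only on the $\approx n/2$ samples with $y_i=j$, yielding the scalar recursion
\begin{equation*}
A_j^{(t+1)}\;\ge\;A_j^{(t)}+\frac{c\,\eta\,q\,\Xi^{q}\|\bmu\|_2^{2}}{m}\bigl(A_j^{(t)}\bigr)^{q-1},\qquad A_j^{(t)}:=\max_r\bigl(j\la\wb_{j,r}^{(0)},\bmu\ra+\gamma_{j,r}^{(t)}\bigr),
\end{equation*}
with $A_j^{(0)}\ge\sigma_0\|\bmu\|_2/2$. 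Standard tensor-power-iteration estimates for a superlinear recursion with exponent $q-1>1$ show $A_j^{(t)}$ doubles in a geometric number of steps and reaches $2$ in exactly $T_1=\tilde\Theta\bigl(m/(\eta\,\Xi^q\|\bmu\|_2^{q}\sigma_0^{q-2})\bigr)$ iterations, matching the claim. A complementary inductive argument shows all $\gamma_{j,r}^{(t)}$ remain $O(1)$ on $[0,T_1]$, so Lemma~\ref{lm: Fyi} continues to apply and the $-\ell_i'^{(t)}=\Theta(1)$ assumption is self-consistent.

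For the noise bound, the update \eqref{eq:update_zeta1} together with $\|\bxi_i\|_2^2=\Theta(\sigma_p^2 d)$ and $|\la\wb_{j,r}^{(t)},\tilde\bxi_k\ra|\le|\la\wb_{j,r}^{(0)},\tilde\bxi_k\ra|+|\hat\rho_{j,r,k}^{(t)}|+\tilde O(n\sqrt{\log d/d}\,\alpha)$ (Lemma~\ref{lm:oppositebound}) gives a per-step increment of order
\begin{equation*}
\frac{\eta\,q\,\sigma_p^2 d}{nm}\bigl(\sigma_0\sigma_p\sqrt{d/(n(p+s))}\bigr)^{q-1}.
\end{equation*}
Multiplying by $T_1$ and dividing by the target threshold $\sigma_0\sigma_p\sqrt{d/(n(p+s))}/2$ reduces the desired inequality to exactly $n\cdot\mathrm{SNR}^q\cdot(n(p+s))^{q/2-1}=\tilde\Omega(1)$, which is Assumption~\eqref{eq:explicit condition}. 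The same bound covers $|\underline\rho_{j,r,i}^{(t)}|$ by symmetry.

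The main obstacle is closing the induction: the signal and noise recursions are coupled through the common loss derivatives and through the ``cross'' contributions of off-diagonal inner products. The remedy is to run the induction jointly on $(\gamma_{j,r}^{(t)},\rho_{j,r,i}^{(t)})$ with the two-sided hypothesis $A_j^{(t)}\le O(1)$ and $|\rho_{j,r,i}^{(t)}|\le\tfrac12\sigma_0\sigma_p\sqrt{d/(n(p+s))}$, so that the activation patterns at step $t+1$ match those used in deriving the recursions; the slack in the SNR condition then ensures the noise bound is never saturated before $\gamma$ reaches $2$. A secondary subtlety is that the graph-convolution factor $D_k^{-1}$ in \eqref{eq:update_zeta1} interacts with the membership indicator $\mathds 1(y_k=j)$ non-uniformly across $i$; this is controlled by Lemma~\ref{lem:degree} and a union-bound concentration of $|\mathcal N(i)\cap\{k:y_k=j\}|$, which holds under the density assumption on $p,s$.
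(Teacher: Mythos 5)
Your proposal follows essentially the same route as the paper's proof: (i) use Lemma~\ref{lemma:graph_numberofdata} ($|\tilde y_i|\asymp\Xi$) and Lemma~\ref{lemma:graph_data_innerproducts} ($\|\tilde\bxi_i\|_2^2\asymp\sigma_p^2 d/(n(p+s))$) to separate the signal and noise recursions, (ii) show the loss derivatives stay $\Theta(1)$ while $\gamma=O(1)$ via Lemmas~\ref{lm: F-yi}--\ref{lm: Fyi}, (iii) derive a superlinear growth recursion for $A_j^{(t)}$ hitting $2$ in $T_1$ steps, and (iv) bound the noise coefficient by multiplying the per-step increment by the duration and reducing to the SNR condition. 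The only cosmetic difference is that the paper first fixes a longer horizon $T_1^+$, proves the noise bound over $[0,T_1^+]$, and then shows $T_1\le T_1^+/2$ via the SNR condition (rather than multiplying by $T_1$ directly), and it linearizes the recursion $A^{(t+1)}\ge(1+c(A^{(0)})^{q-2})A^{(t)}$ to get an explicit exponential estimate rather than invoking tensor-power-iteration doubling — both yield the same $\tilde\Theta$ for $T_1$.
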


\begin{remark}
   {In this lemma, we establish that the rate of signal learning significantly outpaces that of noise memorization within GNNs. After a specific number of iterations, the GNN is able to learn the signal from the data at a constant or higher order, while only memorizing a smaller order of noise.}
\end{remark}

\begin{proof}[Proof of Lemma~\ref{lemma:phase1_main}] 

{Let us define
\begin{align}
    T_1^{+} = \frac{nm\eta^{-1}\sigma_{0}^{2-q}\sigma_{p}^{-q}d^{-q/2} {(n(p+s))^{(q-2)/2}}}{2^{q+4}q[4\log(8mn/\delta)]^{(q-2)/2}}. \label{eq:T1upper}
\end{align}
We will begin by establishing the outcome related to noise memorization. 
Let \(\Psi^{(t)}\) be the maximum value over all \(j,r,i\) of \(|\rho_{j,r,i}^{(t)}|\), that is, \(\Psi^{(t)} = \max_{j,r,i}\{ \overline{\rho}_{j,r,i}^{(t)},  -\underline{\rho}_{j,r,i}^{(t)}\}\). We will employ an inductive argument to demonstrate that
\begin{align}
    \Psi^{(t)} \leq \sigma_0 \sigma_p \sqrt{d/(n(p+s))}  \label{eq:Psi_induction}
\end{align}
is valid for the entire range \(0 \leq t \leq T_{1}^{+}\). By its very definition, it is evident that \(\Psi^{(0)} = 0\). Assuming that there exists a value \(\tilde{T} \leq T_1^+\) for which equation \eqref{eq:Psi_induction} is satisfied for all \(0 < t \leq \tilde{T}-1\), we can proceed as follows.}
\begin{align*}
    \Psi^{(t+1)}  & \le \Psi^{(t)} +   \frac{\eta}{nm}  \sum_{k \in \mathcal{N}(i)} D^{-1}_{k} \cdot | {\ell}_k'^{(t)}|\cdot  \\
    & \quad \sigma'\Bigg(\langle \mathbf{w}_{j,r}^{(0)}, \tilde{\boldsymbol{\xi}}_{k} \rangle + \sum_{ i'= 1 }^n \Psi^{(t)} \cdot \frac{ |\langle  \boldsymbol{\xi}_{i'}, \tilde{\boldsymbol{\xi}}_k \rangle|}{ \| \boldsymbol{\xi}_{i'} \|_2^2} + \sum_{ i' = 1}^n \Psi^{(t)} \cdot \frac{|\langle \boldsymbol{\xi}_{i'}, \tilde{\boldsymbol{\xi}}_k \rangle|}{ \|\boldsymbol{\xi}_{i'} \|_2^2} \Bigg)\cdot \| \boldsymbol{\xi}_{i} \|_2^2   \\
    &\leq \Psi^{(t)} +  \frac{\eta}{nm} \cdot  \sum_{k \in \mathcal{N}(i)} D^{-1}_{k} \sigma'\Bigg(\langle \mathbf{w}_{j,r}^{(0)}, \tilde{\boldsymbol{\xi}}_{k} \rangle + 2\cdot \sum_{ i'= 1 }^n \Psi^{(t)} \cdot \frac{ |\langle  {\boldsymbol{\xi}}_{i'}, \tilde{\boldsymbol{\xi}}_k \rangle| }{ \| \boldsymbol{\xi}_{i'} \|_2^2}  \Bigg)\cdot \| \boldsymbol{\xi}_{i} \|_2^2  \\
    & = \Psi^{(t)} +  \frac{\eta}{nm} \cdot  \sum_{k \in \mathcal{N}(i)} D^{-1}_{k} \cdot \\
     & \quad \sigma'\Bigg(\langle \mathbf{w}_{j,r}^{(0)}, \tilde{\boldsymbol{\xi}}_{k} \rangle + 2\Psi^{(t)}  + 2\cdot \sum_{ i' \neq k' }^n \Psi^{(t)} \cdot D^{-1}_k \sum_{k' \in \mathcal{N}(k)} \frac{ |\langle \boldsymbol{\xi}_{i'}, \boldsymbol{\xi}_{k'} \rangle| }{ \|\boldsymbol{\xi}_{i'} \|_2^2} \Bigg)\cdot \| \boldsymbol{\xi}_{i} \|_2^2  \\
    &\leq \Psi^{(t)} + \frac{\eta q}{nm} \cdot \sum_{k \in \mathcal{N}(i)} D^{-1}_{k}  \Bigg[2\cdot \sqrt{ \log(8mn/\delta)} \cdot \sigma_0 \sigma_p \sqrt{d/(n(p+s))}  \\
     & \quad  + \Bigg( 2 + \frac{4n  \sigma_p^2 \cdot \sqrt{d \log(4n^2 / \delta) }    }{ \sigma_p^2 d  } \Bigg) \cdot \Psi^{(t)}  \Bigg]^{q-1}\cdot 2 \sigma_p^2 d \\
    &\leq \Psi^{(t)} + \frac{\eta q}{nm} \cdot  \big(2\cdot \sqrt{ \log(8mn/\delta)} \cdot \sigma_0 \sigma_p \sqrt{d/(n(p+s))} + 4 \Psi^{(t)}  \big)^{q-1}\cdot 2 \sigma_p^2 d \\
    &\leq \Psi^{(t)} + \frac{\eta q}{nm} \cdot \big(4\cdot \sqrt{ \log(8mn/\delta)} \cdot \sigma_0 \sigma_p \sqrt{d/(n(p+s))}  \big)^{q-1} \cdot 2 \sigma_p^2 d,
 \end{align*}

{where the second inequality is due to the constraint \(| {\ell}_i'^{(t)}| \leq 1\), the third inequality is derived from Lemmas~\ref{lemma:data_innerproducts} and \ref{lemma:initialization_norms}, the fourth inequality is a consequence of the condition \(d \geq 16 D n^2 \log (4n^2/\delta)\), and the final inequality is a result of the inductive assumption \eqref{eq:Psi_induction}. Summing over the sequence \(t=0,1,\ldots, \tilde{T}-1\), we obtain
 \begin{align*}
    \Psi^{(\tilde{T})} 
    &\leq \tilde{T} \cdot\frac{\eta q}{nm} \cdot \big(4\cdot \sqrt{ \log(8mn/\delta)} \cdot \sigma_0 \sigma_p \sqrt{d/(n(p+s))}  \big)^{q-1} \cdot 2 \sigma_p^2 d \\
    &\leq T_{1}^+\cdot\frac{\eta q}{nm} \cdot \big(4\cdot \sqrt{ \log(8mn/\delta)} \cdot \sigma_0 \sigma_p \sqrt{d/(n(p+s))}  \big)^{q-1} \cdot 2 \sigma_p^2 d \\
     &\leq \frac{\sigma_0 \sigma_p \sqrt{d/(n(p+s))}   }{2},
\end{align*}
where the second inequality is justified by \(\tilde{T} \leq T_1^+\) in our inductive argument. Hence, by induction, we conclude that \( \Psi^{(t)} \leq \sigma_0 \sigma_p \sqrt{d/n(p+s)} / 2\) for all \(t \leq T_{1}^{+}\).}

Next, we can assume, without loss of generality, that \(j = 1\). Let \(T_{1,1}\) represent the final time for \(t\) within the interval \([0, T_1^{+}]\) such that \(\max_{r}\gamma_{1,r}^{(t)}\leq 2\), given \(\sigma_0 \le \sqrt{n(p+s)/d} /\sigma_p \). For \(t \leq T_{1,1}\), we have \(\max_{j,r,i}\{ |\rho_{j,r,i}^{(t)}|\} = O(\sigma_{0}\sigma_{p}\sqrt{d/(n(p+s))})= O(1)\) and \(\max_{r}\gamma_{1,r}^{(t)} \leq 2\). By applying Lemmas~\ref{lm: F-yi} and~\ref{lm: Fyi}, we deduce that \( {F}_{-1}(\mathbf{W}_{-1}^{(t)},\Tilde{\mathbf{x}}_{i}),  {F}_{+1}(\mathbf{W}_{+1}^{(t)},\Tilde{\mathbf{x}}_{i}) = O(1)\) for all \(i\) with \(y_{i} = 1\). Consequently, there exists a positive constant \(C_{1}\) such that \( -\ell'^{(t)}_{i} \geq C_{1}\) for all \(i\) with \(y_{i} = 1\).

By \eqref{eq:update_gamma1}, for $t\leq T_{1,1}$ we have
\begin{align*}
    \gamma_{1,r}^{(t+1)} &= \gamma_{1,r}^{(t)} - \frac{\eta}{nm} \cdot \sum_{i=1}^n {\ell}_i'^{(t)} \cdot \sigma'( \Tilde{y}_{i}  \cdot  \langle \mathbf{w}_{1,r}^{(0)}, \boldsymbol{\mu} \rangle +  \Tilde{y}_{i}   \cdot \gamma_{1,r}^{(t)} )\cdot \Tilde{y}_{i} \|\boldsymbol{\mu}\|_{2}^{2}\\
    &\geq \gamma_{1,r}^{(t)} + \frac{C_{1}\eta}{nm} \cdot \sum_{y_i=1} \sigma'( y_{i} \Xi \cdot  \langle \mathbf{w}_{1,r}^{(0)}, \boldsymbol{\mu} \rangle +  y_{i} \Xi  \cdot \gamma_{1,r}^{(t)} )\cdot \frac{p-s}{p+s} \|\boldsymbol{\mu}\|_{2}^{2}.
\end{align*}
Denote $\hat{\gamma}_{1,r}^{(t)} = \gamma_{1,r}^{(t)} + \langle \mathbf{w}_{1,r}^{(0)}, \boldsymbol{\mu} \rangle $ and let $A^{(t)} = \max_{r}\hat{\gamma}_{1,r}^{(t)}$. Then we have 
\begin{align*}
   A^{(t+1)}&\geq A^{(t)} + \frac{C_{1}\eta}{nm} \cdot \sum_{y_i=1} \sigma'(\Xi A^{(t)} )\cdot \Xi \|\boldsymbol{\mu}\|_{2}^{2}\\
    &\geq A^{(t)} +\frac{C_{1} \eta q\|\boldsymbol{\mu}\|_{2}^{2}}{4m}  \bigg[\Xi A^{(t)}\bigg]^{q-1} \Xi \\
    &\geq \bigg(1 + \frac{C_{1} \eta q\|\boldsymbol{\mu}\|_{2}^{2}}{4m}\big[A^{(0)}\big]^{q-2} \Xi^{q} \bigg) A^{(t)}\\
    &\geq \bigg(1 + \frac{C_{1}\eta q\sigma_{0}^{q-2}\|\boldsymbol{\mu}\|_{2}^{q}}{2^{q}m} \Xi^{q} \bigg) A^{(t)},
\end{align*}

{where the second inequality arises from the lower bound on the quantity of positive data as established in Lemma~\ref{lemma:graph_numberofdata}, the third inequality is a result of the increasing nature of the sequence \(A^{(t)}\), and the final inequality is derived from \(A^{(0)} = \max_{r} \langle \mathbf{w}_{1,r}^{(0)}, \boldsymbol{\mu} \rangle \geq \sigma_0 \|\boldsymbol{\mu}\|_2/2\), as proven in Lemma~\ref{lemma:initialization_norms}. Consequently, the sequence \(A^{(t)}\) exhibits exponential growth, and we can express it as}
\begin{align*}
   A^{(t)}&\geq\bigg(1 + \frac{C_{1}\eta q\sigma_{0}^{q-2}\|\boldsymbol{\mu}\|_{2}^{q}}{2^{q}m} \Xi^{q} \bigg)^{t} A^{(0)} \\
    & \geq \exp\bigg(\frac{C_{1}\eta q\sigma_{0}^{q-2}\|\boldsymbol{\mu}\|_{2}^{q}}{2^{q+1}m} \Xi^{q} t \bigg)A^{(0)} \\
    & \geq \exp\bigg(\frac{C_{1}\eta q\sigma_{0}^{q-2}\|\boldsymbol{\mu}\|_{2}^{q}}{2^{q+1}m} \Xi^{q} t\bigg)\frac{\sigma_{0}\|\boldsymbol{\mu}\|_{2}}{2},
\end{align*}
where the second inequality is justified by the relation \(1+z \geq \exp(z/2)\) for \(z \leq 2\) and our specific conditions on \(\eta\) and \(\sigma_{0}\) as listed in Condition~\ref{condition:d_sigma0_eta}. The last inequality is a consequence of Lemma~\ref{lemma:initialization_norms} and the definition of \(A^{(0)}\). Thus, \(A^{(t)}\) will attain the value of \(2\) within \(T_{1}\) iterations, defined as
\[
T_{1} = \frac{\log(6/\sigma_{0}\|\boldsymbol{\mu}\|_{2})2^{q+1}m}{C_{1}\eta q\sigma_{0}^{q-2}\|\boldsymbol{\mu}\|_{2}^{q}\Xi^{q} }.
\]
Since \(\max_{r}\gamma_{1,r}^{(t)} \geq A^{(t)} - 1\), \(\max_{r}\gamma_{1,r}^{(t)}\) will reach \(2\) within \(T_{1}\) iterations. Next, we can confirm that
\begin{align*}
T_{1} \leq \frac{nm\eta^{-1}\sigma_{0}^{2-q}\sigma_{p}^{-q}d^{-q/2} (n(p+s))^{(q-2)/2}} {2^{q+5}q[4\log(8mn/\delta)]^{(q-1)/2}} =  T_{1}^{+}/2,
\end{align*}
where the inequality is consistent with our SNR condition in \eqref{eq:explicit condition}. Therefore, by the definition of \(T_{1,1}\), we deduce that \(T_{1,1} \leq T_{1} \leq T_{1}^{+}/2\), utilizing the non-decreasing property of \(\gamma\). The proof for \(j=-1\) follows a similar logic, leading us to the conclusion that \(\max_{r}\gamma_{-1,r}^{(T_{1,-1})} \geq 2\) while \(T_{1,-1} \leq T_{1} \leq T_{1}^{+}/2\), thereby completing the proof.

\end{proof}

\subsection{Second stage: convergence analysis}

After the first stage and at time step $T_1$ we know that: 
\begin{align*}
\mathbf{w}_{j,r}^{(T_{1})} &= \mathbf{w}_{j,r}^{(0)} + j \cdot \gamma_{j,r}^{(T_{1})} \cdot \frac{\boldsymbol{\mu}}{\|\boldsymbol{\mu}\|_{2}^{2}} + \sum_{ i = 1}^n \zeta_{j,r,i}^{(T_{1})} \cdot \frac{\boldsymbol{\xi}_{i}}{\|\boldsymbol{\xi}_{i}\|_{2}^{2}} + \sum_{ i = 1}^n \omega_{j,r,i}^{(T_{1})} \cdot \frac{\boldsymbol{\xi}_{i}}{\|\boldsymbol{\xi}_{i}\|_{2}^{2}}.
\end{align*}
And at the beginning of the second stage, we have following property holds:
\begin{itemize}
\item $\max_{r}\gamma_{j, r}^{(T_{1})} \geq 2, \forall j \in \{\pm 1\}$. 
\item $\max_{j,r,i}|\rho_{j,r,i}^{(T_{1})}| \leq \hat{\beta}$ where $\hat{\beta} = \sigma_0 \sigma_p \sqrt{d/(n(p+s))} / 2$. 
\end{itemize}
Lemma~\ref{lemma:coefficient_iterative} implies that the learned feature $\gamma_{j,r}^{(t)}$ will not get worse, i.e., for $t \geq T_{1}$, we have that $\gamma_{j,r}^{(t+1)} \geq \gamma_{j,r}^{(t)} $, and therefore $\max_{ r}\gamma_{j, r}^{(t)} \geq 2$. Now we choose $\mathbf{W}^{*}$ as follows:
\begin{align*}
\mathbf{w}^{*}_{j,r} = \mathbf{w}_{j,r}^{(0)} + 2qm\log(2q/\epsilon) \cdot j \cdot  \frac{\boldsymbol{\mu}}{\|\boldsymbol{\mu}\|_{2}^{2}}. 
\end{align*}

{While the context of CNN presents subtle differences from the scenario described in CNN \cite{cao2022benign}, we can adapt the same analytical approach to derive the following two lemmas:}

\begin{lemma} [{\cite{cao2022benign}}]   \label{lm:distance1}
Under the same conditions as Theorem~\ref{thm:signal_learning_main}, we have that $\|\mathbf{W}^{(T_{1})} - \mathbf{W}^{*}\|_{F} \leq \tilde{O}(m^{3/2}\|\boldsymbol{\mu}\|_{2}^{-1})$.
\end{lemma}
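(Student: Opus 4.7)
The plan is to apply the signal-noise decomposition~(\ref{eq:w_decomposition}) at time $T_1$, subtract the explicit form of $\Wb^{*}$, and control the resulting $\bmu$-component and $\bxi$-component separately. Taking the difference coordinate-wise yields
\[
\wb^{(T_1)}_{j,r} - \wb^{*}_{j,r} \;=\; j\bigl(\gamma^{(T_1)}_{j,r} - 2qm\log(2q/\epsilon)\bigr)\cdot \frac{\bmu}{\|\bmu\|_2^{2}} \;+\; \sum_{i=1}^n \bigl(\overline{\rho}^{(T_1)}_{j,r,i} + \underline{\rho}^{(T_1)}_{j,r,i}\bigr)\cdot \frac{\bxi_i}{\|\bxi_i\|_2^{2}},
\]
since the initialization terms $\wb^{(0)}_{j,r}$ cancel. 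Hence the task reduces to bounding the squared norm of each piece and summing over $(j,r)$.

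For the signal part, Proposition~\ref{Prop:noise} gives $0 \leq \gamma^{(T_1)}_{j,r} \leq \alpha = 4\log(T^{*}) = \tilde{O}(1)$, while $2qm\log(2q/\epsilon) = \tilde{O}(m)$, so $|\gamma^{(T_1)}_{j,r} - 2qm\log(2q/\epsilon)| = \tilde{O}(m)$ and the squared norm of the $\bmu$-component is $\tilde{O}(m^{2}\|\bmu\|_2^{-2})$ for each pair $(j,r)$. Summing over $j\in\{\pm 1\}$ and $r\in[m]$ contributes $\tilde{O}(m^{3}\|\bmu\|_2^{-2})$ to $\|\Wb^{(T_1)} - \Wb^{*}\|_F^{2}$, which will be the dominant term.

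For the noise part, Lemma~\ref{lemma:phase1_main} supplies $|\rho^{(T_1)}_{j,r,i}| \leq \hat\beta = \sigma_0\sigma_p\sqrt{d/(n(p+s))}/2$ for every $i$. Expanding $\bigl\|\sum_i \rho^{(T_1)}_{j,r,i}\,\bxi_i/\|\bxi_i\|_2^{2}\bigr\|_2^{2}$ and using Lemma~\ref{lemma:data_innerproducts} (so that $\|\bxi_i\|_2^{2} = \Theta(\sigma_p^{2}d)$ and $|\langle \bxi_i,\bxi_{i'}\rangle| \leq 2\sigma_p^{2}\sqrt{d\log(4n^{2}/\delta)}$), the diagonal terms contribute $O(n\hat\beta^{2}/(\sigma_p^{2}d)) = O(\sigma_0^{2}/(p+s))$ and the off-diagonal terms are smaller under the condition $d = \tilde\Omega(n^{2})$ in Assumption~\ref{condition:d_sigma0_eta}. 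The $\sigma_0$ upper bound in Assumption~\ref{condition:d_sigma0_eta} ensures $m\sigma_0^{2}/(p+s) \ll m^{3}\|\bmu\|_2^{-2}$, so the noise contribution is absorbed.

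Combining the two bounds gives $\|\Wb^{(T_1)} - \Wb^{*}\|_F^{2} = \tilde{O}(m^{3}\|\bmu\|_2^{-2})$, and taking the square root yields the claim. The only non-routine step is verifying that the noise contribution is dominated by the signal contribution, which is a careful bookkeeping exercise leveraging the smallness of $\sigma_0$; everything else is a direct application of the bounds already established for $\gamma^{(T_1)}_{j,r}$ and $\rho^{(T_1)}_{j,r,i}$ at the end of stage one.
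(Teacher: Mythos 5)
The paper does not actually prove this lemma; it cites it directly from Cao et al.\ (2022) with a remark that the same analysis adapts with subtle differences. Your proof is correct and supplies the missing derivation, using exactly the decomposition-and-bound-each-piece strategy one would expect from that reference: cancel $\wb_{j,r}^{(0)}$, isolate the $\bmu$-component (where $|\gamma_{j,r}^{(T_1)}-2qm\log(2q/\epsilon)|=\tilde O(m)$ gives the dominant $\tilde O(m^3\|\bmu\|_2^{-2})$ contribution after summing over $2m$ neurons), and control the noise component via the end-of-stage-one bound. The one place where the GCN setting genuinely departs from Cao et al.'s CNN case is the size of $\hat\beta$, and you correctly take $\hat\beta = \sigma_0\sigma_p\sqrt{d/(n(p+s))}/2$ from Lemma~\ref{lemma:phase1_main} rather than the CNN-scale $\sigma_0\sigma_p\sqrt{d}/2$, which is what makes the diagonal noise contribution land at $O(\sigma_0^2/(p+s))$ per neuron. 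The final bookkeeping step — checking that $m\sigma_0^2/(p+s)\ll m^3\|\bmu\|_2^{-2}$ under the $\sigma_0$ constraint in Assumption~\ref{condition:d_sigma0_eta}, and that the off-diagonal cross terms are dominated under $d=\tilde\Omega(n^4)$ — is correctly identified as the only non-routine verification, and it does go through.
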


\begin{lemma}  [{\cite{cao2022benign}}]   \label{lemma:signal_stage2_homogeneity}
Under the same conditions as Theorem~\ref{thm:signal_learning_main}, we have that  
\begin{align*}
\|\mathbf{W}^{(t)} - \mathbf{W}^{*}\|_{F}^{2} - \|\mathbf{W}^{(t+1)} - \mathbf{W}^{*}\|_{F}^{2} \geq (2q-1)\eta L_{\mathcal{S}}(\mathbf{W}^{(t)}) - \eta\epsilon
\end{align*}
for all $ T_{1} \leq t\leq T^{*}$.
\end{lemma}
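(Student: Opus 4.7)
Starting from the gradient-descent identity
\begin{align*}
\|\Wb^{(t)} - \Wb^*\|_F^2 - \|\Wb^{(t+1)} - \Wb^*\|_F^2 = 2\eta \la \nabla L^{\mathrm{GCN}}_{\mathcal{S}}(\Wb^{(t)}),\, \Wb^{(t)} - \Wb^* \ra - \eta^2 \|\nabla L^{\mathrm{GCN}}_{\mathcal{S}}(\Wb^{(t)})\|_F^2,
\end{align*}
the desired inequality follows from two sub-claims: (i) a first-order lower bound $\la \nabla L^{\mathrm{GCN}}_{\mathcal{S}}(\Wb^{(t)}), \Wb^{(t)} - \Wb^* \ra \geq q\, L^{\mathrm{GCN}}_{\mathcal{S}}(\Wb^{(t)}) - \epsilon/2$, and (ii) a smoothness-type bound $\eta\, \|\nabla L^{\mathrm{GCN}}_{\mathcal{S}}(\Wb^{(t)})\|_F^2 \leq L^{\mathrm{GCN}}_{\mathcal{S}}(\Wb^{(t)})$. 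Multiplying (i) by $2\eta$ and subtracting $\eta$ times (ii) yields precisely $(2q-1)\eta\, L^{\mathrm{GCN}}_{\mathcal{S}}(\Wb^{(t)}) - \eta\epsilon$. The plan parallels the CNN argument of \citep{cao2022benign}, but every inner product now involves the graph-aggregated vectors $\tilde{y}_i \bmu$ and $\tilde{\bxi}_i$ in place of $y_i \bmu$ and $\bxi_i$.

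\textbf{Proof of (i).} Two structural facts drive this step. First, $\sigma(z) = \max\{0, z\}^q$ is convex and $q$-homogeneous, so each $F_j(\Wb_j, \tilde{\mathbf{x}}_i)$ is convex in $\Wb_j$ and $f(\Wb, \tilde{\mathbf{x}}_i)$ is $q$-homogeneous in $\Wb$. By Euler's identity and convexity, $\la \nabla_{\Wb_j} F_j(\Wb_j^{(t)}), \Wb_j^* \ra \leq F_j(\Wb_j^*, \tilde{\mathbf{x}}_i) + (q-1) F_j(\Wb_j^{(t)}, \tilde{\mathbf{x}}_i)$, and $\la \nabla y_i f(\Wb^{(t)}), \Wb^{(t)} \ra = q \cdot y_i f(\Wb^{(t)}, \tilde{\mathbf{x}}_i)$. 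Combined with convexity of the logistic loss, $\ell(y_i f(\Wb^{(t)}, \tilde{\mathbf{x}}_i)) \leq \ell(y_i f(\Wb^*, \tilde{\mathbf{x}}_i)) + \ell_i'^{(t)}(y_i f(\Wb^{(t)}, \tilde{\mathbf{x}}_i) - y_i f(\Wb^*, \tilde{\mathbf{x}}_i))$, these assemble into a per-sample bound of the form $q\, \ell(y_i f(\Wb^{(t)}, \tilde{\mathbf{x}}_i)) \leq \ell_i'^{(t)} \la \nabla y_i f(\Wb^{(t)}), \Wb^{(t)} - \Wb^* \ra + q\, \ell(y_i f(\Wb^*, \tilde{\mathbf{x}}_i)) + R_i$, where $R_i$ collects negligible contributions from $F_{-y_i}(\Wb^*_{-y_i}, \tilde{\mathbf{x}}_i)$ and initialization-scale inner products. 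The choice $\wb^*_{j,r} = \wb_{j,r}^{(0)} + 2qm\log(2q/\epsilon) \cdot j \cdot \bmu/\|\bmu\|_2^2$ is designed so that $F_{y_i}(\Wb^*_{y_i}, \tilde{\mathbf{x}}_i)$ is very large, exploiting $y_i \tilde{y}_i \geq \Xi/2 > 0$ from Lemma~\ref{lemma:graph_numberofdata} together with the initialization-magnitude bound of Lemma~\ref{lemma:initialization_weight}; hence $\ell(y_i f(\Wb^*, \tilde{\mathbf{x}}_i)) \leq \epsilon/(2q)$. At the same time $F_{-y_i}(\Wb^*_{-y_i}, \tilde{\mathbf{x}}_i)$ is negligible because the signal term $\sigma(\wb^{*,\top}_{-y_i, r} \tilde{y}_i \bmu)$ vanishes (its argument is pushed negative and clipped by the ReLU) and the noise term reduces to $\sigma(\la \wb^{(0)}_{-y_i,r}, \tilde{\bxi}_i\ra)$ via the orthogonality $\bmu \perp \tilde{\bxi}_i$, controlled by Lemma~\ref{lemma:initialization_norms}. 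Averaging over $i$ yields (i).

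\textbf{Proof of (ii) and the main obstacle.} For the squared-gradient bound I differentiate the GCN loss as in Eq.~\eqref{eq:gcn_gdupdate}: Proposition~\ref{Prop:noise} keeps $|\la \wb_{j,r}^{(t)}, \tilde{y}_i \bmu \ra|$ and $|\la \wb_{j,r}^{(t)}, \tilde{\bxi}_i \ra|$ of order $O(\alpha)$, so $\sigma'(\cdot) = O(\alpha^{q-1})$. The quantitative gain from graph convolution enters here via Lemma~\ref{lemma:graph_data_innerproducts}: $\|\tilde{\bxi}_i\|_2^2 = O(\sigma_p^2 d / (n(p+s)))$, smaller than the CNN norm $\|\bxi_i\|_2^2 = O(\sigma_p^2 d)$ by a factor $n(p+s)$. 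Combined with the standard logistic-loss inequality $|\ell_i'^{(t)}|^2 \leq \ell(y_i f(\Wb^{(t)}, \tilde{\mathbf{x}}_i))$ and a Cauchy--Schwarz step over $i \in [n]$, this produces $\|\nabla L^{\mathrm{GCN}}_{\mathcal{S}}\|_F^2 \leq C \cdot L^{\mathrm{GCN}}_{\mathcal{S}}(\Wb^{(t)})$, with $C$ polynomial in $\alpha$, $\|\bmu\|_2$ and $\sigma_p\sqrt{d}$; the step-size constraint $\eta \leq \tilde O\bigl(\min\{\|\bmu\|_2^{-2}, \sigma_p^{-2} d^{-1}\}\bigr)$ from Assumption~\ref{condition:d_sigma0_eta} then delivers (ii). The main obstacle is the bookkeeping of the graph-convoluted gradient: the $D_k^{-1}$-weighted neighbor aggregation in Eqs.~\eqref{eq:update_zeta1}--\eqref{eq:update_omega1} forces a change of summation order together with the degree concentration of Lemma~\ref{lem:degree}, and one must verify that the composite vectors $\tilde{\bxi}_i$---whose entries are correlated across training points---do not spoil the per-sample bounds. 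Here Lemmas~\ref{lm: F-yi} and~\ref{lm: Fyi} are essential to keep $F_j(\Wb_j^{(t)}, \tilde{\mathbf{x}}_i) = O(1)$ throughout the interval $[T_1, T]$. With (i) and (ii) established, the lemma follows.
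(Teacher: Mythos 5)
Your proof follows essentially the same approach as the paper's cited reference \citep{cao2022benign}; the paper itself does not prove this lemma but states it with attribution and remarks that the CNN analysis adapts to the GCN setting. Your decomposition into the convexity/$q$-homogeneity bound (i) and the gradient-norm bound (ii) via the gradient-descent identity, together with the GCN-specific substitutions ($\tilde y_i\bmu$, $\tilde\bxi_i$, the sign-stability of Lemma~\ref{lemma:graph_numberofdata}, the aggregated-noise norm of Lemma~\ref{lemma:graph_data_innerproducts}, and the control of $F_j = O(1)$ via Lemmas~\ref{lm: F-yi}--\ref{lm: Fyi}), correctly reconstructs the argument the paper leaves implicit.
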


\begin{lemma}[Restatement of Lemma~\ref{lemma:signal_proof_sketch}]\label{thm:signal_proof}
Under the same conditions as Theorem~\ref{thm:signal_learning_main}, let $T = T_{1} + \Big\lfloor \frac{\|\mathbf{W}^{(T_{1})} - \mathbf{W}^{*}\|_{F}^{2}}{2\eta \epsilon}
\Big\rfloor = T_{1} + \tilde{O}(m^{3}\eta^{-1}\epsilon^{-1}\|\boldsymbol{\mu}\|_{2}^{-2})$. Then we have $\max_{j,r,i}|\rho_{j,r,i}^{(t)}| \leq 2\hat{\beta} = \sigma_{0}\sigma_{p}\sqrt{d/(n(p+s))}$ for all $T_{1} \leq t\leq T$. Besides,
\begin{align*}
\frac{1}{t - T_{1} + 1}\sum_{s=T_{1}}^{t}L_{\mathcal{S}}(\mathbf{W}^{(s)}) \leq  \frac{\|\mathbf{W}^{(T_{1})} - \mathbf{W}^{*}\|_{F}^{2}}{(2q-1) \eta(t - T_{1} + 1)} + \frac{\epsilon}{2q-1} 
\end{align*}
for all $T_{1} \leq t\leq T$, and we can find an iterate with training loss smaller than $\epsilon$ within $T $ iterations.
\end{lemma}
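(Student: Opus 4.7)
The plan is a two-part argument combining (i) a telescoping of the one-step potential contraction from Lemma~\ref{lemma:signal_stage2_homogeneity} to drive the averaged training loss to $O(\epsilon)$, and (ii) a simultaneous induction on $t$ that maintains the noise-coefficient bound $\max_{j,r,i}|\rho_{j,r,i}^{(t)}| \leq 2\hat{\beta}$ throughout $[T_1, T]$. The two halves are coupled: Lemma~\ref{lemma:signal_stage2_homogeneity}'s validity implicitly requires the noise coefficients to stay small (so that via Lemmas~\ref{lm: F-yi} and \ref{lm: Fyi} the network outputs order-one values on training data), while conversely the noise-coefficient bound uses the cumulative loss sum produced by part (i). Closing both statements at step $t+1$ simultaneously closes the induction.

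\paragraph{Loss convergence and existence of a good iterate.} Telescoping Lemma~\ref{lemma:signal_stage2_homogeneity} from $s=T_1$ to $s=t$ and dropping the nonpositive $-\|\Wb^{(t+1)}-\Wb^*\|_F^2$ yields
\begin{align*}
\sum_{s=T_1}^{t} L^{\mathrm{GCN}}_{\mathcal{S}}(\Wb^{(s)}) \leq \frac{\|\Wb^{(T_1)} - \Wb^{*}\|_F^2}{(2q-1)\eta} + \frac{(t-T_1+1)\epsilon}{2q-1}.
\end{align*}
Dividing by $t-T_1+1$ gives the advertised averaged-loss bound. Setting $t = T$ makes the first term at most $\tfrac{2\epsilon}{2q-1}$, so the average (and hence the minimum) iterate loss is $O(\epsilon)$; absorbing a constant into $\epsilon$ produces an iterate with $L^{\mathrm{GCN}}_{\mathcal{S}}(\Wb^{(s)}) \leq \epsilon$. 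Plugging Lemma~\ref{lm:distance1} into $T$ then gives the claimed scaling $T - T_1 = \tilde O(m^{3}\eta^{-1}\epsilon^{-1}\|\bmu\|_2^{-2})$.

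\paragraph{Noise-coefficient bound by induction.} The base case $t=T_1$ is Lemma~\ref{lemma:phase1_main}. For the inductive step, telescoping the update rule in Lemma~\ref{lemma:coefficient_iterative} gives
\begin{align*}
|\rho_{j,r,i}^{(t+1)} - \rho_{j,r,i}^{(T_1)}| \leq \frac{\eta}{nm}\sum_{s=T_1}^{t}\sum_{k\in\mathcal{N}(i)} D_k^{-1}\, |\ell_k'^{(s)}|\, \sigma'(\langle \wb_{j,r}^{(s)}, \tilde{\bxi}_k\rangle)\, \|\bxi_i\|_2^2.
\end{align*}
I would then combine four ingredients: Lemma~\ref{lem:degree} with $|\mathcal{N}(i)| \leq D_i$ to get $\sum_{k\in\mathcal{N}(i)} D_k^{-1}|\ell_k'^{(s)}| \leq O(1/(p+s))\cdot L^{\mathrm{GCN}}_{\mathcal{S}}(\Wb^{(s)})$ after the routine step $\sum_k|\ell'_k| \leq n L^{\mathrm{GCN}}_{\mathcal{S}}$; Lemma~\ref{lemma:data_innerproducts} to get $\|\bxi_i\|_2^2 = O(\sigma_p^2 d)$; Proposition~\ref{Prop:noise} to bound $|\langle \wb_{j,r}^{(s)}, \tilde{\bxi}_k\rangle| \leq O(\alpha)$ and hence $\sigma'(\cdot) \leq q\,O(\alpha)^{q-1}$; and the elementary cross-entropy identity $|\ell'(z)| \leq \ell(z)$ (following from $(1+u)\log(1+u)\geq u$ with $u=e^{-z}$) that lets us trade derivatives for loss values. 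Using part (i) to bound the cumulative loss by $O(\|\Wb^{(T_1)}-\Wb^*\|_F^2/\eta) = \tilde O(m^3/\|\bmu\|_2^2)$ via Lemma~\ref{lm:distance1}, the resulting increment is of order $\tilde O(m^{2}\sigma_p^2 d/(n(p+s)\|\bmu\|_2^{2}))$ up to the $\alpha^{q-1}$ polylogarithmic factor.

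\paragraph{Main obstacle.} The delicate step is verifying that this increment is genuinely dominated by $\hat{\beta} = \tfrac12\sigma_0\sigma_p\sqrt{d/(n(p+s))}$. Writing $\sigma_p^2 d/(n(p+s)) = (\sigma_p\sqrt{d/(n(p+s))})^2$ and comparing, the required smallness reduces to a condition of the form $\sigma_0\|\bmu\|_2^2 \gtrsim \tilde\Omega(m^2\sigma_p\sqrt{d/(n(p+s))})$, which is implied by combining the SNR condition of Theorem~\ref{thm:signal_learning_main} with the polynomially-large-$d$ and $\sigma_0$ constraints of Assumption~\ref{condition:d_sigma0_eta}. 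Once this closes the induction, the noise coefficients remain below $2\hat{\beta}$ throughout stage~2; via Lemmas~\ref{lm: F-yi}-\ref{lm: Fyi} this in turn validates the constant-order output assumption implicit in Lemma~\ref{lemma:signal_stage2_homogeneity} on the whole interval $[T_1, T]$, closing the coupled induction and completing the proof.
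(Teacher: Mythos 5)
Your telescoping of Lemma~\ref{lemma:signal_stage2_homogeneity} and the resulting averaged-loss bound, the choice of $T$, and the $|\ell'|\leq\ell$ trick all agree with the paper and are correct. The gap is in the noise-coefficient induction, specifically in how you control $\sigma'(\langle \wb_{j,r}^{(s)},\tilde{\bxi}_k\rangle)$.

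You invoke Proposition~\ref{Prop:noise} to write $|\langle \wb_{j,r}^{(s)},\tilde{\bxi}_k\rangle| = O(\alpha)$ and hence $\sigma'(\cdot)\leq q\,O(\alpha)^{q-1}$. This is far too loose, and it is precisely the point at which the induction fails to close. Since $\alpha = 4\log T^{*}$ is merely polylogarithmic while $\hat{\beta}=\tfrac12\sigma_0\sigma_p\sqrt{d/(n(p+s))}$ is tiny (Assumption~\ref{condition:d_sigma0_eta}(5) forces $\hat\beta = \tilde O(m^{-2/(q-2)}n^{-[1/(q-2)]\vee 1})\ll 1$), the ratio $(\alpha/\hat\beta)^{q-1}$ is polynomially large. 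With that loss, your increment bound becomes $\tilde O\big(m^2\sigma_p^2 d/(n(p+s)\|\bmu\|_2^2)\big)$, and the condition you extract for dominance by $\hat\beta$, namely $\sigma_0\|\bmu\|_2^2 \gtrsim \tilde\Omega\big(m^2\sigma_p\sqrt{d/(n(p+s))}\big)$, is a \emph{lower} bound on $\sigma_0$. This is not implied by Assumption~\ref{condition:d_sigma0_eta} --- that assumption only gives an \emph{upper} bound on $\sigma_0$, and in fact $\sigma_0\|\bmu\|_2 \leq \tilde O(m^{-2/(q-2)}n^{-\cdot})$ from it, which is incompatible with what you require.

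The fix, and what the paper does, is to use the inductive hypothesis itself inside the $\sigma'$ argument. Expand $\langle \wb_{j,r}^{(s)},\tilde{\bxi}_k\rangle$ via the signal-noise decomposition: $\langle \wb_{j,r}^{(0)},\tilde{\bxi}_k\rangle = O(\hat\beta)$ by Lemma~\ref{lemma:initialization_norms}; the remaining sum $\sum_{i'}\rho^{(s)}_{j,r,i'}\langle\bxi_{i'},\tilde{\bxi}_k\rangle/\|\bxi_{i'}\|_2^2$ is bounded using Lemma~\ref{lemma:data_innerproducts} for the off-diagonal inner products and the induction hypothesis $\Psi^{(s)}\leq 2\hat\beta$ for the coefficients. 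This gives an argument of order $O(\hat\beta)$, hence $\sigma'(\cdot)=q\,O(\hat\beta)^{q-1}$. The telescoped increment then becomes $\tilde O(m^2\,\mathrm{SNR}^{-2})\,\hat\beta^{q-1}$, and closing the induction reduces to $\tilde O(m^2\,\mathrm{SNR}^{-2})\,\hat\beta^{q-2}\leq 1$, which \emph{is} implied by the SNR condition $n\,\mathrm{SNR}^q (n(p+s))^{q/2-1}\geq\tilde\Omega(1)$ (which bounds $\mathrm{SNR}^{-2}$ from above) together with the $\sigma_0$ upper bound in Assumption~\ref{condition:d_sigma0_eta}. You need to replace your $\alpha^{q-1}$ step with this inductive $\hat\beta^{q-1}$ bound; without it the final comparison does not go through.

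(Two incidental remarks: your sharper $D_k^{-1}=O(1/(n(p+s)))$ estimate in place of the paper's crude $D_k^{-1}\leq 1$ is fine and even tighter, but does not compensate for the $\alpha$-versus-$\hat\beta$ gap. And your elementary proof that $|\ell'(z)|\leq\ell(z)$ via $(1+u)\log(1+u)\geq u$ is correct.)
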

\begin{proof}[Proof of Lemma~\ref{thm:signal_proof}]  {We adapt the convergence proof for CNN\cite{cao2022benign} to  extend the analysis to GNN. By invoking Lemma~\ref{lemma:signal_stage2_homogeneity}, for any given time interval \(t\in[T_1,T]\), we can deduce that
\begin{align*}
\|\mathbf{W}^{(s)} - \mathbf{W}^{*}\|_{F}^{2} - \|\mathbf{W}^{(s+1)} - \mathbf{W}^{*}\|_{F}^{2} \geq (2q-1)\eta L_{\mathcal{S}}(\mathbf{W}^{(s)}) - \eta\epsilon,
\end{align*}
which is valid for \(s \leq t\). Summing over this interval, we arrive at
\begin{align}
\sum_{s=T_{1}}^{t}L_{\mathcal{S}}(\mathbf{W}^{(s)}) &\leq \frac{\|\mathbf{W}^{(T_{1})} - \mathbf{W}^{*}\|_{F}^{2} + \eta\epsilon (t - T_{1} + 1)}{(2q-1) \eta}.\label{eq:vanillasum}
\end{align}
This inequality holds for all \(T_{1} \leq t\leq T\). 
Dividing both sides of \eqref{eq:vanillasum} by \((t-T_{1}+1)\), we obtain
\begin{align*}
\frac{1}{t - T_{1} + 1}\sum_{s=T_{1}}^{t}L_{\mathcal{S}}(\mathbf{W}^{(s)}) \leq  \frac{\|\mathbf{W}^{(T_{1})} -\mathbf{W}^{*}\|_{F}^{2}}{(2q-1) \eta(t - T_{1} + 1)} + \frac{\epsilon}{2q-1}.
\end{align*}
By setting \(t= T\), we find that
\begin{align*}
\frac{1}{T - T_{1} + 1}\sum_{s=T_{1}}^{T}L_{\mathcal{S}}(\mathbf{W}^{(s)}) \leq  \frac{\|\mathbf{W}^{(T_{1})} - \mathbf{W}^{*}\|_{F}^{2}}{(2q-1) \eta(T - T_{1} + 1)} + \frac{\epsilon}{2q-1} \leq \frac{3\epsilon}{2q-1} < \epsilon,
\end{align*}
where we utilize the condition that \(q> 2\) and the specific choice of \(T  = T_{1} + \Big\lfloor \frac{\|\mathbf{W}^{(T_{1})} - \mathbf{W}^{*}\|_{F}^{2}}{2\eta \epsilon}\Big\rfloor\). Since the mean value is less than \(\epsilon\), it follows that there must exist a time interval \(T_{1} \leq t \leq T\) for which \(L_{\mathcal{S}}(\mathbf{W}^{(t)}) < \epsilon\).
}

Finally, we aim to demonstrate that $\max_{j,r,i}|\rho_{j,r,i}^{(t)}| \leq 2\hat{\beta}$ holds 
 for all $ t \in [T_1, T]$. By inserting $T  = T_{1} + \Big\lfloor \frac{\|\mathbf{W}^{(T_{1})} - \mathbf{W}^{*}\|_{F}^{2}}{2\eta \epsilon}
\Big\rfloor$ into equation \eqref{eq:vanillasum}, we obtain 
\begin{align}
\sum_{s=T_{1}}^{T}L_{\mathcal{S}}(\mathbf{W}^{(s)}) &\leq \frac{2\|\mathbf{W}^{(T_{1})} - \mathbf{W}^{*}\|_{F}^{2}}{(2q-1) \eta}  = \tilde{O}(\eta^{-1}m^{3}\|\boldsymbol{\mu}\|_{2}^{2}), \label{eq: sum1}
\end{align}
where the inequality is a consequence of $\|\mathbf{W}^{(T_{1})} - \mathbf{W}^{*}\|_{F} \leq \tilde{O}(m^{3/2}\|\boldsymbol{\mu}\|_{2}^{-1})$ as shown in Lemma~\ref{lm:distance1}. 

Let's define $\Psi^{(t)} =  \max_{j,r,i}|\rho_{j,r,i}^{(t)}|$. We will employ induction to prove $\Psi^{(t)} \leq 2\hat{\beta}$ for all $ t \in [T_1, T]$. At $t = T_1$, by the definition of $\hat\beta$,  it is clear that $\Psi^{(T_1)} \leq \hat{\beta} \leq 2\hat{\beta}$. 

Assuming that there exists $\tilde{T} \in [T_1, T]$ such that $\Psi^{(t)} \leq 2\hat{\beta}$ for all $t \in [T_1, \tilde{T}-1]$, we can consider $t \in [T_1, \tilde{T}-1]$. Using the expression:
\begin{align}
    \rho_{j,r,i}^{(t+1)} &= \rho_{j,r,i}^{(t)} - \frac{\eta}{nm} \cdot \sum_{k \in \mathcal{N}(i)} D^{-1}_k \ell_k'^{(t)} \nonumber \\
     & \quad \sigma'\Bigg(\langle \mathbf{w}_{j,r}^{(0)}, \tilde{\boldsymbol{\xi}}_{k} \rangle + \sum_{ i'= 1 }^n \zeta_{j,r,i'}^{(t)} \frac{\langle \boldsymbol{\xi}_{i'}, \tilde{\boldsymbol{\xi}}_k \rangle }{\| \boldsymbol{\xi}_{i'} \|_2^2} + \sum_{ i' = 1}^n \omega_{j,r,i'}^{(t)} \frac{\langle \boldsymbol{\xi}_{i'}, \tilde{\boldsymbol{\xi}}_k \rangle}{\| \boldsymbol{\xi}_{i'} \|_2^2} \Bigg)\cdot \| \boldsymbol{\xi}_{i} \|_2^2  \label{eq:update_zeta2} 
\end{align}
we can proceed to analyze:
\begin{align*}
    \Psi^{(t+1)} &\leq \Psi^{(t)} + \max_{j,r,i}\bigg\{\frac{\eta }{nm} \cdot \sum_{k \in \mathcal{N}(i)} D^{-1}_k |\ell_k'^{(t)}| \cdot \sigma'\Bigg(\langle \mathbf{w}_{j,r}^{(0)}, \tilde{\boldsymbol{\xi}}_{k}\rangle + 2\sum_{ i'= 1 }^n \Psi^{(t)} \cdot \frac{ |\langle \boldsymbol{\xi}_{i'}, \tilde{\boldsymbol{\xi}}_k \rangle|}{ \| \boldsymbol{\xi}_{i'} \|_2^2}  \Bigg)\cdot \| \boldsymbol{\xi}_{i} \|_2^2 \bigg\} \\
    &= \Psi^{(t)} + \max_{j,r,i}\bigg\{\frac{\eta }{nm} \cdot \sum_{k \in \mathcal{N}(i)} D^{-1}_k |\ell_k'^{(t)}| \cdot \\
     & \quad \sigma'\Bigg(\langle \mathbf{w}_{j,r}^{(0)}, \tilde{\boldsymbol{\xi}}_{k}\rangle + 2\Psi^{(t)} + 2\sum_{ i'\neq k' }^n \Psi^{(t)} \cdot D^{-1}_k \sum_{k' \in \mathcal{N}(k)} \frac{ |\langle \boldsymbol{\xi}_{i'}, \boldsymbol{\xi}_{k'} \rangle|}{ \| \boldsymbol{\xi}_{i'} \|_2^2}  \Bigg)\cdot \| \boldsymbol{\xi}_{i} \|_2^2 \bigg\} \\
    &\leq \Psi^{(t)} + \frac{\eta q}{nm} \cdot \max_{i} \sum_{k \in \mathcal{N}(i)} D^{-1}_k | \ell_k'^{(t)}|\cdot \Bigg[2\cdot\sqrt{ \log(8mn/\delta)} \cdot \sigma_0 \sigma_p \sqrt{d/(n(p+s))} \\
    &\qquad+ \Bigg( 2+ \frac{4n \sigma_p^2 \cdot \sqrt{d \log(4n^2 / \delta)} }{ \sigma_p^2 d /2} \Bigg) \cdot \Psi^{(t)}  \Bigg]^{q-1}\cdot 2 \sigma_p^2 d\\
    &\leq \Psi^{(t)} + \frac{\eta q}{nm} \cdot \max_{i} \sum_{k \in \mathcal{N}(i)} D^{-1}_k | \ell_k'^{(t)}| \cdot \\
     & \quad \big(2\cdot \sqrt{ \log(8mn/\delta) } \cdot \sigma_0 \sigma_p \sqrt{d/(n(p+s))} + 4 \cdot \Psi^{(t)}  \big)^{q-1}\cdot 2 \sigma_p^2 d.
\end{align*}
The second inequality is derived from  Lemmas~\ref{lemma:data_innerproducts} and \ref{lemma:initialization_norms}, while the final inequality is based on the assumption that $d \geq 16n^2 \log (4n^2/\delta)$. By taking a telescoping sum, we can express the following:
\begin{align*}
    \Psi^{(T)} 
    &\overset{(i)}{\leq} \Psi^{(T_{1})} + \frac{\eta q}{nm} \sum_{s=T_{1}}^{\tilde{T}-1}\max_{i}   \sum_{k \in \mathcal{N}(i)} D^{-1}_k | \ell_k'^{(t)}|  \tilde{O}(\sigma_{p}^{2}d)\hat{\beta}^{q-1}\\
    &\overset{(ii)}{\leq} \Psi^{(T_{1})} + \frac{\eta q}{nm}\tilde{O}(\sigma_{p}^{2}d)\hat{\beta}^{q-1} \sum_{s=T_{1}}^{\tilde{T}-1}\max_{i} \sum_{k \in \mathcal{N}(i)} D^{-1}_k \ell_{k}^{(s)}\\
    &\overset{(iii)}{\leq} \Psi^{(T_{1})} + \tilde{O}(\eta m^{-1}\sigma_{p}^{2}d)\hat{\beta}^{q-1} \sum_{s=T_{1}}^{\tilde{T}-1}L_{\mathcal{S}}(\mathbf{W}^{(s)})\\
    &\overset{(iv)}{\leq} \Psi^{(T_{1})} + \tilde{O}(m^{2}\mathrm{SNR}^{-2})  \hat{\beta}^{q-1}\\
    &\overset{(v)}{\leq} \hat{\beta} + \tilde{O}(m^{2}n^{2/q} (n(p+s))^{1-2/q} \hat{\beta}^{q-2})\hat{\beta}\\
    &\overset{(vi)}{\leq} 2\hat{\beta}, 
\end{align*}
where (i) follows from our induction assumption that $\Psi^{(t)} \leq 2\hat{\beta}$, (ii) is derived from the relationship $|\ell'| \leq \ell$, (iii) is obtained by the sum of $\max_{i}\sum_{k \in \mathcal{N}(i)} D^{-1}_k \leq \sum_{i}\ell_{i}^{(s)} = n L_{\mathcal{S}}(\mathbf{W}^{(s)})$, (iv)  is due to the summation of $\sum_{s=T_{1}}^{\tilde{T} - 1}L_{S}(\mathbf{W}^{(s)}) \leq \sum_{s=T_{1}}^{T}L_{\mathcal{S}}(\mathbf{W}^{(s)})  = \tilde{O}(\eta^{-1}m^{3}\|\boldsymbol{\mu}\|_{2}^{2})$ as shown in \eqref{eq: sum1}, (v) is based on the condition $n \mathrm{SNR}^{q} \cdot  (n(p+s))^{q/2-1} \geq \tilde{\Omega}(1)$, and (vi) follows from the definition of $\hat{\beta} = \sigma_{0}\sigma_{p}\sqrt{d/(n(p+s))} / 2$ and $\tilde{O}(m^{2}n^{2/q} (n(p+s))^{1-2/q} \hat{\beta}^{q-2}) = \tilde{O}(m^{2}n^{2/q} (n(p+s))^{1-2/q}(\sigma_{0}\sigma_{p}\sqrt{d/(n(p+s))})^{q-2}) \leq 1$.

Thus, we conclude that $\Psi^{(\tilde{T})} \leq 2\hat{\beta}$, completing the induction and establishing the desired result.
\end{proof}

\section{Population loss} \label{seca:generalization}

Consider a new data point $(\mathbf{x},y)$ drawn from the SNM-SBM distribution. Without loss of generality, we suppose that the first patch is the signal patch and the second patch is the noise patch, i.e., $\mathbf{x} = [y \cdot \boldsymbol{\mu}, \boldsymbol{\xi}]$. Moreover, by the signal-noise decomposition, the learned neural network has parameter:
\begin{align*}
\mathbf{w}_{j,r}^{(t)} &= \mathbf{w}_{j,r}^{(0)} + j \cdot \gamma_{j,r}^{(t)} \cdot \frac{\boldsymbol{\mu}}{\|\boldsymbol{\mu}\|_{2}^{2}} + \sum_{ i = 1}^{n} \zeta_{j,r,i}^{(t)} \cdot \frac{\boldsymbol{\xi}_{i}}{\|\boldsymbol{\xi}_{i}\|_{2}^{2}} + \sum_{ i = 1}^{n} \omega_{j,r,i}^{(t)} \cdot \frac{\boldsymbol{\xi}_{i}}{\|\boldsymbol{\xi}_{i}\|_{2}^{2}}
\end{align*}
for $j\in\{\pm 1\}$ and $r\in [m]$.

{Although the framework of MLP diverges in certain nuances from the situation of MLP outlined in \cite{cao2022benign}, we are able to employ a similar analytical methodology to deduce the subsequent two lemmas:}

\begin{lemma}\label{lm:signalg2_appendix}
Under the same conditions as Theorem~\ref{thm:signal_learning_main}, we have that $\max_{j,r}|\langle \mathbf{w}_{j,r}^{(t)}, \tilde{\boldsymbol{\xi}}_{i} \rangle| \leq 1/2$ for all $0 \leq t \leq T$, and $i \in [n]$.
\end{lemma}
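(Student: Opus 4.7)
The plan is to bound $|\langle \wb_{j,r}^{(t)}, \tilde{\bxi}_i\rangle|$ by decomposing it into a contribution from the initialization and a contribution from the accumulated updates, and then show each piece is much smaller than $1/4$ using the conditions of Assumption~\ref{condition:d_sigma0_eta}. Specifically, I would first write
\begin{align*}
|\langle \wb_{j,r}^{(t)}, \tilde{\bxi}_i\rangle|
\;\leq\; |\langle \wb_{j,r}^{(0)}, \tilde{\bxi}_i\rangle|
\;+\; |\langle \wb_{j,r}^{(t)}-\wb_{j,r}^{(0)}, \tilde{\bxi}_i\rangle|,
\end{align*}
and handle the two terms separately. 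The observation making this work is that all three upper bounds in sight scale like $\sigma_0\sigma_p\sqrt{d/(n(p+s))}$ (up to log factors) or smaller, and the assumption on $\sigma_0$ forces this quantity to be $o(1)$.

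For the initialization term, I would directly invoke Lemma~\ref{lemma:initialization_norms} to obtain
$|\langle \wb_{j,r}^{(0)}, \tilde{\bxi}_i\rangle| \leq 4\sqrt{\log(8mn/\delta)}\,\sigma_0\sigma_p\sqrt{d/(n(p+s))}$. For the update term, I would apply Lemma~\ref{lm:oppositebound}, which is valid since Proposition~\ref{Prop:noise} guarantees the induction bounds \eqref{eq:gu0001}--\eqref{eq:gu0002} hold throughout $[0,T]\subset[0,T^{*}]$. That lemma yields
\begin{align*}
|\langle \wb_{j,r}^{(t)}-\wb_{j,r}^{(0)}, \tilde{\bxi}_i\rangle|
\;\leq\; |\hat{\rho}_{j,r,i}^{(t)}| + 8n\sqrt{\tfrac{\log(4n^2/\delta)}{d}}\,\alpha.
\end{align*}
Since $\hat{\rho}_{j,r,i}^{(t)}$ is a graph-averaged combination of $\rho_{j,r,i'}^{(t)}$ terms, combining Lemma~\ref{lemma:phase1_main} (for $0\leq t\leq T_1$) and Lemma~\ref{thm:signal_proof} (for $T_1\leq t\leq T$) gives the uniform bound $|\hat{\rho}_{j,r,i}^{(t)}|\leq \max_{j,r,i'}|\rho_{j,r,i'}^{(t)}| \leq \sigma_0\sigma_p\sqrt{d/(n(p+s))}$.

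To conclude, I would invoke the assumption $\sigma_0 \leq \tilde{O}\bigl((\sigma_p\sqrt{d/(n(p+s))})^{-1}\bigr)$ from Assumption~\ref{condition:d_sigma0_eta}, which implies $\sigma_0\sigma_p\sqrt{d/(n(p+s))}$ is polylogarithmically small, hence both the initialization term and the $|\hat{\rho}|$ term are at most, say, $1/8$ for sufficiently large $d$. The residual error $8n\sqrt{\log(4n^2/\delta)/d}\,\alpha$ is at most $1/8$ by condition \eqref{eq: verifyyy} (equivalently the high-dimension requirement on $d$ in Assumption~\ref{condition:d_sigma0_eta}, recalling $\alpha = 4\log T^{*}$ is polylogarithmic). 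Summing gives the desired bound $|\langle \wb_{j,r}^{(t)}, \tilde{\bxi}_i\rangle|\leq 1/2$ uniformly in $j,r,i$ and $0\leq t\leq T$. The only mildly nontrivial step is verifying that $|\hat{\rho}_{j,r,i}^{(t)}|$ inherits the uniform noise-coefficient bound from the two-stage analysis across the stitch at $t=T_1$; this is immediate because both stages produce bounds of the same order $\sigma_0\sigma_p\sqrt{d/(n(p+s))}$ and $\hat{\rho}$ is a convex combination.
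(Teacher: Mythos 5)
Your proof is correct and follows essentially the same approach the paper uses (and defers to \citep{cao2022benign} for): split $\langle \wb_{j,r}^{(t)}, \tilde{\bxi}_i\rangle$ into an initialization part controlled by Lemma~\ref{lemma:initialization_norms} and an update part controlled by Lemma~\ref{lm:oppositebound} together with the uniform noise-coefficient bound $\max_{j,r,i}|\rho_{j,r,i}^{(t)}|\leq\sigma_0\sigma_p\sqrt{d/(n(p+s))}$ inherited from Lemmas~\ref{lemma:phase1_main} and~\ref{thm:signal_proof}, and then observe that Assumption~\ref{condition:d_sigma0_eta} (via \eqref{eq: verifyy} and \eqref{eq: verifyyy}) makes each piece well below the constant target. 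One small remark: your step $|\hat{\rho}_{j,r,i}^{(t)}|\leq\max_{j,r,i'}|\rho_{j,r,i'}^{(t)}|$ implicitly reads $\hat{\rho}_{j,r,i}^{(t)}$ as the convex combination $D_i^{-1}\sum_{k\in\mathcal{N}(i)}\rho_{j,r,k}^{(t)}$, which is what the derivation inside Lemma~\ref{lm:oppositebound} actually produces (the $\sum_{i'\neq k}$ in the lemma's displayed definition is a typo — those cross terms are already absorbed into the $8n\sqrt{\log(4n^2/\delta)/d}\,\alpha$ error), so your interpretation is the intended one.
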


\begin{lemma}\label{lm:signalg}
Under the same conditions as Theorem~\ref{thm:signal_learning_main}, with probability at least $1 - 4mT \cdot \exp(-C_{2}^{-1}\sigma_{0}^{-2}\sigma_{p}^{-2}d^{-1} n(p+s) )$, we have that $\max_{j,r}|\langle \mathbf{w}_{j,r}^{(t)}, \tilde{\boldsymbol{\xi}} \rangle| \leq 1/2$ for all $0 \leq t \leq T$, where $C_{2} = \tilde{O}(1)$.
\end{lemma}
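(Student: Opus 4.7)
The plan is to mirror the argument of Lemma~\ref{lm:signalg2_appendix} but to isolate the fresh randomness contributed by the test noise $\bxi$ and the Bernoulli edges connecting the test node to the training nodes. Substituting the signal-noise decomposition into $\la \wb_{j,r}^{(t)}, \tilde\bxi\ra$ and using $\bmu \perp \bxi_k$ for every training $k$ as well as $\bmu \perp \bxi$, I would reduce to
$$\la \wb_{j,r}^{(t)}, \tilde\bxi\ra = \la \wb_{j,r}^{(0)}, \tilde\bxi\ra + \sum_{i=1}^n \rho_{j,r,i}^{(t)} \cdot \frac{\la \bxi_i, \tilde\bxi\ra}{\|\bxi_i\|_2^2},$$
since the $\gamma_{j,r}^{(t)}$-term drops out. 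Throughout, I would condition on the high-probability training events already in place: degree concentration from Lemma~\ref{lem:degree}, the norm/cross-inner-product bounds from Lemmas~\ref{lemma:data_innerproducts} and \ref{lemma:graph_data_innerproducts}, and $|\rho_{j,r,i}^{(t)}| \le 2\hat\beta$ from Lemma~\ref{thm:signal_proof}.

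The first step is to control $\la \wb_{j,r}^{(0)}, \tilde\bxi\ra$. Writing $\tilde\bxi = D_0^{-1}(\bxi + \sum_{k \in \mathcal{N}_0 \setminus \{0\}} \bxi_k)$ with $D_0$ the test-node degree, a Hoeffding argument on the Bernoulli edges gives $D_0 = \Theta(n(p+s))$, and a Bernstein argument on the fresh Gaussian coordinates of $\bxi$ combined with Lemma~\ref{lemma:data_innerproducts} yields $\|\tilde\bxi\|_2^2 = \Theta(\sigma_p^2 d / (n(p+s)))$ on an event of the required probability. Conditional on $\tilde\bxi$, the initialization inner product is a centered Gaussian with variance $\sigma_0^2 \|\tilde\bxi\|_2^2 = \Theta(\sigma_0^2 \sigma_p^2 d/(n(p+s)))$, so a standard Gaussian tail bound gives
$$\mathbb{P}\bigl(|\la \wb_{j,r}^{(0)}, \tilde\bxi\ra| \ge 1/4\bigr) \le 2 \exp\!\left(-C_2^{-1}\sigma_0^{-2}\sigma_p^{-2}d^{-1}n(p+s)\right)$$
for some $C_2 = \tilde O(1)$. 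A union bound over $r \in [m]$, $j \in \{\pm1\}$, and $t \in [0, T]$ produces the stated $4mT \exp(\cdot)$ failure probability.

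The second step is to bound the coefficient sum deterministically on the conditioning events. Expanding $\la \bxi_i, \tilde\bxi\ra = D_0^{-1}(\la \bxi_i, \bxi\ra + \sum_{k \in \mathcal{N}_0 \setminus \{0\}} \la \bxi_i, \bxi_k\ra)$, one gets three types of contributions: a diagonal piece when $i \in \mathcal{N}_0 \setminus \{0\}$, contributing $\Theta(1/D_0)$ to $\la \bxi_i, \tilde\bxi\ra / \|\bxi_i\|_2^2$; off-diagonal pieces of order $\sqrt{\log(n^2/\delta)/d}/D_0$ by Lemma~\ref{lemma:data_innerproducts}; and a cross term $\la \bxi_i, \bxi\ra / (D_0 \|\bxi_i\|_2^2)$ that is likewise $O(\sqrt{\log/d}/D_0)$ by a Bernstein bound exploiting the independence of $\bxi$ from $\bxi_i$. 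Combined with $|\rho_{j,r,i}^{(t)}| \le 2\hat\beta = O(\sigma_0 \sigma_p \sqrt{d/(n(p+s))})$ and $|\mathcal{N}_0| = \Theta(D_0)$, the diagonal pieces dominate and the total sum is $O(\hat\beta) \le 1/4$ under Assumption~\ref{condition:d_sigma0_eta}. Adding this to the initialization bound gives the desired $|\la \wb^{(t)}_{j,r}, \tilde\bxi\ra| \le 1/2$.

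The main obstacle I anticipate is aligning the tail constants so that the initialization term, and not the coefficient sum or the fresh cross terms, dictates the exponent $\exp(-C_2^{-1}\sigma_0^{-2}\sigma_p^{-2} d^{-1} n(p+s))$. This requires verifying that the concentration events for $D_0$, $\|\tilde\bxi\|_2^2$, and the $\la \bxi_i, \bxi\ra$ cross terms can all be packaged into a single $\tilde O(1)$ widening of $C_2$ without introducing additional $n$-dependent factors in the exponent, and that on those events the coefficient sum is controlled \emph{deterministically} so that only $2mT$ Gaussian tail events contribute to the union bound.
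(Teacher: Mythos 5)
The paper does not actually prove Lemma~\ref{lm:signalg}; it simply asserts the two test-error lemmas and refers to the analogous analysis in Cao et al.\ (2022). Your reconstruction is the standard approach that reference would take, and its main steps are correct: the $\gamma$-term indeed vanishes because both $\bxi$ and every $\bxi_k$ are sampled from $\mathcal N(\mathbf 0,\sigma_p^2(\mathbf I-\|\bmu\|_2^{-2}\bmu\bmu^\top))$ and hence lie in $\bmu^\perp$; the initialization term $\la\wb_{j,r}^{(0)},\tilde\bxi\ra$ is Gaussian with variance $\sigma_0^2\|\tilde\bxi\|_2^2=\Theta(\sigma_0^2\sigma_p^2 d/(n(p+s)))$ once you condition on degree and norm concentration for the test node, and a sub-Gaussian tail at threshold $1/4$ produces exactly the exponent $C_2^{-1}\sigma_0^{-2}\sigma_p^{-2}d^{-1}n(p+s)$; and the coefficient sum is controlled by $|\rho_{j,r,i}^{(t)}|\le 2\hat\beta$ together with the diagonal/off-diagonal split, with the diagonal contribution $\sum_{i\in\mathcal N_0}|\rho_{j,r,i}^{(t)}|/D_0=O(\hat\beta)$ dominating since $|\mathcal N_0|\asymp D_0$.

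Two small points worth tightening. First, you write that ``only $2mT$ Gaussian tail events contribute,'' but $\la\wb_{j,r}^{(0)},\tilde\bxi\ra$ does not depend on $t$ once the training data are fixed, so in principle a union over $(j,r)$ alone (i.e.\ $2m$ events, hence $4m$ with the two-sided tail) suffices; the extra $T$ in the lemma's failure probability is a harmless over-count carried over from the analogous CNN lemma, and your argument should make clear that the $t$-dependence sits entirely in the deterministically bounded coefficient sum. Second, your flagged obstacle is real but is not resolved by ``widening $C_2$'': the degree event $D_0=\Theta(n(p+s))$ and the fresh cross-term events $|\la\bxi_i,\bxi\ra|=O(\sigma_p^2\sqrt{d\log(n/\delta)})$ have failure probabilities of order $\exp(-cn(p+s))$ and $\delta$ respectively, which need not be of the form $\exp(-c'\sigma_0^{-2}\sigma_p^{-2}d^{-1}n(p+s))$ when $\sigma_0$ is taken very small. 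The clean way to handle this is to treat those events as part of the ambient high-probability conditioning (the $1-d^{-1}$ event already assumed in Theorem~\ref{thm:signal_learning_main}), state the Gaussian-tail bound conditionally on that event, and only then union over $(j,r)$. With that bookkeeping in place your proof is sound.
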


\begin{lemma}[Restatement of Lemma~\ref{lemma:signal_polulation_loss_main}]\label{lemma:signal_polulation_loss}
Let $T$ be defined in Lemma~\ref{lemma:phase1_main_sketch} respectively. Under the same conditions as Theorem~\ref{thm:signal_learning_main}, for any $0 \leq t \leq T$ with $L_{S}(\mathbf{W}^{(t)}) \leq 1$, it holds that $L_{\mathcal{D}}(\mathbf{W}^{(t)}) \leq c_1 \cdot L_{S}(\mathbf{W}^{(t)}) + \exp(- c_2 n^{2})$.
\end{lemma}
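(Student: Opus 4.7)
The plan is to split the test loss into contributions from a high-probability ``good'' event and a low-probability ``bad'' event where the test point's aggregated noise $\tilde{\bxi}$ is atypical. Let $\mathcal{E}$ denote the event $\{\max_{j,r} |\langle \wb_{j,r}^{(t)}, \tilde{\bxi}\rangle| \leq 1/2\}$. By Lemma~\ref{lm:signalg}, $\mathbb{P}(\mathcal{E}^c) \leq 4mT \cdot \exp(-C_2^{-1}\sigma_0^{-2}\sigma_p^{-2}d^{-1}n(p+s))$. Plugging in the upper bound on $\sigma_0$ from Condition~\ref{condition:d_sigma0_eta}, namely $\sigma_0 \sigma_p \sqrt{d/(n(p+s))} = \tilde O(n^{-1})$, the exponent dominates $\tilde\Omega(n^2)$ and hence $\mathbb{P}(\mathcal{E}^c) \leq \exp(-c_2 n^2)$ for a suitable constant.

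On the good event $\mathcal{E}$, I would isolate the signal and noise contributions to $f(\Wb^{(t)}, \tilde{\xb})$ by writing $y f(\Wb^{(t)}, \tilde{\xb}) = S_y(\Wb^{(t)}) + \Delta_{\mathrm{test}}$, where the signal term $S_y(\Wb^{(t)}) = \tfrac{y}{m}\sum_{j,r} j\cdot \sigma(\langle \wb_{j,r}^{(t)}, y\bmu\rangle)$ depends only on the label and the weights, while $\Delta_{\mathrm{test}}$ is a function of the $\langle \wb_{j,r}^{(t)}, \tilde{\bxi}\rangle$ values and is bounded by $2(1/2)^q$ on $\mathcal{E}$. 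The same decomposition applies to any training point $\tilde{\xb}_i$ with $y_i = y$, and Lemma~\ref{lm:signalg2_appendix} guarantees the corresponding training-side perturbation $\Delta_i$ is uniformly bounded by the same constant.

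The crucial comparison is then $\ell(y f(\Wb^{(t)}, \tilde{\xb})) \leq C\, \ell(y_i f(\Wb^{(t)}, \tilde{\xb}_i))$ for any $i$ with $y_i = y$, since the two logits share the identical signal term $S_y(\Wb^{(t)})$ and differ only by $\Delta_{\mathrm{test}} - \Delta_i$, a quantity of bounded magnitude. This follows from the elementary property $\ell(z - c) \leq e^{|c|} \ell(z)$ of the logistic loss, which converts the additive gap into a multiplicative constant. Averaging over the at least $n/4$ training indices with $y_i = y$ (guaranteed by a standard Chernoff bound on class balance, with failure probability absorbed into $\exp(-c_2 n^2)$) yields $\ell(y f(\Wb^{(t)}, \tilde{\xb})) \leq c_1\, L^{\mathrm{GCN}}_{\mathcal{S}}(\Wb^{(t)})$ on $\mathcal{E}$. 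Taking expectation and using the crude polynomial upper bound on $|f(\Wb^{(t)}, \tilde{\xb})|$ implied by Proposition~\ref{Prop:noise} (so that $\ell(y f) \leq \mathrm{poly}(d,n,\sigma_p)$ deterministically) to control the tail on $\mathcal{E}^c$, we obtain $\mathbb{E}[\ell\cdot \mathds{1}(\mathcal{E}^c)] \leq \mathbb{P}(\mathcal{E}^c)\cdot \mathrm{poly}(d,n,\sigma_p)$, and this polynomial is absorbed into the $\exp(-c_2 n^2)$ term by a slight reduction of the constant.

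The principal obstacle is the multiplicative comparison: it hinges on matched-label training and test logits sharing exactly the same signal value, a structural consequence of the data model that must be invoked carefully, together with uniform boundedness of both $\Delta_{\mathrm{test}}$ and $\Delta_i$. Verifying that Lemma~\ref{lm:signalg} genuinely applies to the test point's aggregated noise---a fresh draw whose graph-convolution neighborhood is itself a random subset of the training nodes under the SBM---requires checking that the degree concentration (Lemma~\ref{lem:degree}) and inner-product bounds (Lemma~\ref{lemma:graph_data_innerproducts}) transfer to the test node with the same probabilistic guarantees. Once this transfer is justified, the remainder is a routine Lipschitz-and-tail computation.
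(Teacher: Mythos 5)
Your decomposition into the good event $\mathcal{E}$ and its complement, the use of Lemma~\ref{lm:signalg} to control $\mathbb{P}(\mathcal{E}^c)$, the bounded-perturbation comparison between test and training logits on $\mathcal{E}$, and the Cauchy--Schwarz plus polynomial-tail argument on $\mathcal{E}^c$ all track the paper's own proof of Lemma~\ref{lemma:signal_polulation_loss} closely. The one cosmetic difference is that you average over all $\geq n/4$ matched-label training indices (invoking class balance), whereas the paper simply selects a single index $i$ with $\ell(y_i f(\Wb^{(t)},\tilde{\xb}_i)) \leq L_{\mathcal S}(\Wb^{(t)})$, which exists because the average is at most $L_{\mathcal S}$; both routes deliver the same bound and your variant is fine.

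There is, however, a genuine gap in the ``crucial comparison'' step. You assert that a matched-label test logit and training logit ``share the identical signal term $S_y(\Wb^{(t)})$'' as a ``structural consequence of the data model.'' That is true in the plain-CNN setting of \citep{cao2022benign}, where every point with label $y$ has the exact same signal patch $y\bmu$. It is \emph{false} for GCNs: after graph convolution the signal patch of node $i$ is $\tilde{y}_i\bmu$, and $\tilde{y}_i$ fluctuates from node to node. By the homogeneity $\sigma(\lambda z)=\lambda^q\sigma(z)$ for $\lambda \ge 0$, the signal contribution to the logit scales as $|\tilde{y}_i|^q$, so matched-label test and training signal terms agree only up to a node-dependent factor in $\big[(1/3)^q,\,3^q\big]$ (via Lemma~\ref{lemma:graph_numberofdata}), not exactly. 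Converting this multiplicative mismatch into a harmless constant in front of $L_{\mathcal S}$ is \emph{not} automatic: $\exp(-aX)\leq C\exp(-X)$ fails for $a<1$ unless $X$ is bounded above, so you additionally need a uniform $O(1)$ upper bound on the signal part of the logit (supplied in the paper by Lemmas~\ref{lm: F-yi} and~\ref{lm: Fyi}). The paper's own proof handles this by citing Lemma~\ref{lemma:graph_numberofdata} and folding the resulting factor into $c_1$, and you should do the same explicitly; as written, the proposal treats the GCN signal term as if it were the CNN signal term and silently skips this step. Everything else, including the transfer of the degree-concentration and inner-product lemmas to the fresh test node, is the right thing to check and is the content of Lemma~\ref{lm:signalg}.
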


\begin{proof}[Proof of Lemma~\ref{lemma:signal_polulation_loss}]

Consider the occurrence of event \(\mathcal{E}\), defined as the condition under which Lemma~\ref{lm:signalg} is satisfied. We can then express the loss \(L_{\mathcal{D}}(\mathbf{W}^{(t)})\) as a sum of two components:

\begin{align}
\mathbb{E} \big[\ell\big(yf(\mathbf{W}^{(t)}, \tilde{\mathbf{x}})\big)\big] &= \underbrace{\mathbb{E}[\mathds{1}(\mathcal{E})\ell\big(yf(\mathbf{W}^{(t)}, \tilde{\mathbf{x}})\big)]}_{\text{Term } I_{1}} + \underbrace{\mathbb{E}[\mathds{1}(\mathcal{E}^{c})\ell\big(yf(\mathbf{W}^{(t)}, \tilde{\mathbf{x}})\big)]}_{\text{Term } I_{2}}.\label{eq:generalize_all}
\end{align}

Next, we proceed to establish bounds for \(I_1\) and \(I_2\).

\noindent\textbf{Bounding \(I_1\):} Given that \(L_{\mathcal{S}}(\mathbf{W}^{(t)}) \leq 1\), there must be an instance \((\tilde{\mathbf{x}}_{i},y_{i})\) for which \(\ell\big(y_{i}f(\mathbf{W}^{(t)}, \tilde{\mathbf{x}}_{i})\big)\leq L_\mathcal{S}(\mathbf{W}^{(t)}) \leq 1\), leading to \(y_{i}f(\mathbf{W}^{(t)}, \tilde{\mathbf{x}}_{i}) \geq 0\). Hence, we obtain:

\begin{align}
\exp(-y_{i}f(\mathbf{W}^{(t)}, \tilde{\mathbf{x}}_{i}))\overset{(i)}{\leq} 2\log\big(1+\exp(-y_{i}f(\mathbf{W}^{(t)},\tilde{\mathbf{x}}_{i}))\big) = 2\ell\big(y_{i}f(\mathbf{W}^{(t)}, \tilde{\mathbf{x}}_{i})\big) \leq 2L_\mathcal{S}(\mathbf{W}^{(t)}),\label{eq:I1_bound}    
\end{align}
where (i) follows from the inequality \(z \leq 2\log(1+z), \forall z \leq 1\). If event \(\mathcal{E}\) occurs, we deduce:

\begin{align}
|yf(\mathbf{W}^{(t)}, \tilde{\mathbf{x}}^{(2)}) - y_{i}f(\mathbf{W}^{(t)}, \tilde{\mathbf{x}}^{(2)}_{i})| \nonumber
&\leq \frac{1}{m}\sum_{j,r}\sigma(\langle\mathbf{w}_{j,r}, \tilde{\boldsymbol{\xi}}_{i}\rangle) + \frac{1}{m}\sum_{j,r}\sigma(\langle\mathbf{w}_{j,r}, \tilde{\boldsymbol{\xi}} \rangle ) \nonumber\\
&\leq 1.\label{eq:I1_bound2} 
\end{align}

Here, \(f(\mathbf{W}^{(t)}, \tilde{\mathbf{x}}^{(2)})\) refers to the input \(\tilde{\mathbf{x}}  = [0, \tilde{\mathbf{x}}^{(2)}]\). The second inequality is justified by Lemmas~\ref{lm:signalg} and \ref{lm:signalg2_appendix}. Consequently, we have:

\begin{align*}
I_{1}&\leq \mathbb{E}[\mathds{1}(\mathcal{E}) \exp(-yf(\mathbf{W}^{(t)}, \tilde{\mathbf{x}} ))] \\   
&=  \mathbb{E}[\mathds{1}(\mathcal{E}) \exp(-y_{i}f(\mathbf{W}^{(t)}, \tilde{\mathbf{x}}^{(1)})) \exp(-y_{i}f(\mathbf{W}^{(t)}, \tilde{\mathbf{x}}^{(2)}))] \\
&\leq 2e \cdot C \cdot \mathbb{E}[\mathds{1}(\mathcal{E}) \exp(-y_{i}f(\mathbf{W}^{(t)}, \tilde{\mathbf{x}}^{(1)}_{i})) \exp(-y_{i}f(\mathbf{W}^{(t)}, \tilde{\mathbf{x}}^{(2)}_{i}))] \\
&\leq 2e\cdot \mathbb{E}[\mathds{1}(\mathcal{E})  L_\mathcal{S}(\mathbf{W}^{(t)})],
\end{align*}

where the inequalities follow from the properties of cross-entropy loss, \eqref{eq:I1_bound2}, Lemma \ref{lemma:graph_numberofdata}, and \eqref{eq:I1_bound}. The constant \(c_1\) encapsulates the factors in the derivation.

\noindent\textbf{Estimating \(I_2\):} We now turn our attention to the second term \(I_{2}\). By selecting an arbitrary training data point \((\mathbf{x}_{i'},y_{i'})\) with \(y_{i'} = y\), we can derive the following:

\begin{align}
\ell\big(yf(\mathbf{W}^{(t)}, \tilde{\mathbf{x}})\big) &\leq \log(1 + \exp(F_{-y}(\mathbf{W}^{(t)}, \tilde{\mathbf{x}}))) \notag\\    
&\leq 1 + F_{-y}(\mathbf{W}^{(t)}, \tilde{\mathbf{x}} )\notag\\
&= 1 + \frac{1}{m}\sum_{j=-y,r\in [m]}\sigma(\langle\mathbf{w}_{j,r}^{(t)}, \tilde{y} \boldsymbol{\mu}\rangle) + \frac{1}{m}\sum_{j=-y,r\in [m]}\sigma(\langle\mathbf{w}_{j,r}^{(t)}, \tilde{\boldsymbol{\xi}} \rangle)\notag\\
&\leq 1 + F_{-y_{i}}(\mathbf{W}_{-y_{i'}}, \tilde{\mathbf{x}}_{i'}) + \frac{1}{m}\sum_{j=-y,r\in [m]}\sigma(\langle\mathbf{w}_{j,r}^{(t)}, \tilde{\boldsymbol{\xi}} \rangle) \notag\\
&\leq 2  + \frac{1}{m}\sum_{j=-y,r\in [m]}\sigma(\langle\mathbf{w}_{j,r}^{(t)}, \tilde{\boldsymbol{\xi}} \rangle)\notag\\
&\leq 2 + \tilde{O}((\sigma_{0}\sqrt{d})^{q})\| \tilde{\boldsymbol{\xi}} \|^{q}\label{eq:1.1},
\end{align}

where the inequalities follow from the properties of the cross-entropy loss and the constraints defined in Lemma~\ref{lm: F-yi}. The last inequality is a result of the boundedness of the inner product with \(\tilde{\boldsymbol{\xi}}\). Continuing, we have:

\begin{align*}
I_{2} &\leq   \sqrt{\mathbb{E}[\mathds{1}(\mathcal{E}^{c})]} \cdot \sqrt{\mathbb{E}\Big[\ell\big(yf(\mathbf{W}^{(t)}, \tilde{\mathbf{x}} )\big)^{2}\Big]}\\
&\leq \sqrt{\mathbb{P}(\mathcal{E}^{c})} \cdot \sqrt{4 + \tilde{O}((\sigma_{0}\sqrt{d})^{2q})\mathbb{E}[\| \tilde{\boldsymbol{\xi}} \|_{2}^{2q}]}\\
&\leq \exp \left[ -\tilde\Omega\left(\frac{\sigma_{0}^{-2}\sigma_{p}^{-2}}{d^{-1} n(p+s)}\right) +\text{polylog}(d) \right]\\
&\leq \exp(- c_1 n^{2}),
\end{align*}

where $c_1$ is a constant, the first inequality is by Cauchy-Schwartz inequality, the second inequality is by \eqref{eq:1.1}, the third inequality is by Lemma~\ref{lm:signalg} and the fact that $\sqrt{4 + \tilde{O}((\sigma_{0}\sqrt{d})^{2q})\mathbb{E}[\| \tilde{\boldsymbol{\xi}} \|_{2}^{2q}]} = O(\mathrm{poly}(d))$, and the last inequality is by our condition $\sigma_{0}\leq \tilde{O}(m^{-2/(q-2)}n^{-1})\cdot (\sigma_{p}\sqrt{d/(n(p+s))})^{-1}$ in Condition~\ref{condition:d_sigma0_eta}. Plugging the bounds of $I_{1}$, $I_{2}$ completes the proof.
\end{proof}

\section{Parallels between our data model and real-world dataset}

The citation network (Cora, Citeseer, and Pubmed) employ a bag-of-words feature representation, typically represented by one-hot vectors, thereby ensuring orthogonality between features. We can conceptually divide words into two categories: label-relevant and label-irrelevant. For example, words like ``algorithm" or ``neural network" are label-relevant to the subject of computer science, while general words like ``study" or ``approach" are label-irrelevant. In our SNM, $\boldsymbol{\mu}$ represents label-relevant features, while $\boldsymbol{\xi}$ represents label-irrelevant ones.

{Furthermore, the datasets Wiki-CS, Amazon-Computers, Amazon-Photo, Coauthor-CS, and Coauthor-Physics \cite{shchur2018pitfalls} also parallels with our theoretical model and we provide the more discussion as follows:}

\begin{itemize}

    \item  The Cora dataset includes 2,708 scientific publications, each categorized into one of seven classes, connected by 5,429 links. Each publication is represented by a binary word vector, which denotes the presence or absence of a corresponding word from a dictionary of 1,433 unique words.
    
   \item The Citeseer dataset comprises 3,312 scientific publications, each classified into one of six classes, connected by 4,732 links. Each publication is represented by a binary word vector, indicating the presence or absence of a corresponding word from a dictionary that includes 3,703 unique words.
    
    \item The Pubmed Diabetes dataset includes 19,717 scientific publications related to diabetes, drawn from the PubMed database and classified into one of three classes. The citation network is made up of 44,338 links. Each publication is represented by a TF-IDF weighted word vector from a dictionary consisting of 500 unique words.

    \item Coauthor CS (Computer Science) \& Coauthor Physics (Coauthor Phy.): The dataset typically includes features based on the keywords of an author's papers, and the task is often to predict each author's research field or interests based on their publication record and collaboration network.

    \item Amazon Computers \& Amazon Photo: Node features are derived from product reviews, and the classification task involves predicting product categories based on the co-purchase relationships and review data.

    \item WikiCS Node features could be derived from the text of the articles, such as word vectors. The classification task usually involves categorizing articles into different areas or subjects within Computer Science based on their content and the article network structure.
    
\end{itemize}

{We have broadened our analysis to include the measurement of cosine similarity between two equal-sized parts of node features (excluding the final feature for odd-sized representations) across a diverse range of datasets. This extended analysis bolsters the orthogonality relation posited in our model. The results are presented in Table \ref{tab:my_label}.}

\begin{table}[h]
\centering
\begin{tabular}{|l|c|r|}
\hline
\textbf{Dataset} & \textbf{Feature Dimension} & \textbf{Cossin Similarity} \\ \hline
Cora & 1433 & $1.57 \times 10^{-5}$ \\ \hline
Citeseer & 3703 & $3.99 \times 10^{-6}$ \\ \hline
Pubmed & 500 & $2.00 \times 10^{-4}$ \\ \hline
Coauthor CS & 6805 & $2.28 \times 10^{-6}$ \\ \hline
Coauthor Phy. & 8451 & $1.08 \times 10^{-6}$ \\ \hline
Amazon Comp. & 767 & $9.00 \times 10^{-4}$ \\ \hline
Amazon Photo & 745 & $9.00 \times 10^{-4}$ \\ \hline
WikiCS & 300 & $1.00 \times 10^{-4}$ \\ \hline
\end{tabular}
\caption{Cosine similarity analysis of node features across various datasets.}
\label{tab:my_label}
\end{table}

\section{Phase transition in GCN} \label{seca:phase}

In Figure \ref{fig:heatmap}, we illustrated the variance in test accuracy between MLP and GCN within a chosen range of SNR and sample numbers, where GCN was shown to achieve near-perfect test accuracy. Here, we broaden the SNR range towards the smaller end and display the corresponding phase diagram of GCN in Figure \ref{fig:gcn_phase}. When the SNR is exceedingly small, we observe that GCNs return lower test accuracy, suggesting the possibility of a phase transition in the test accuracy of GCNs.

\begin{figure}
    \centering
\includegraphics[width=0.65\textwidth]{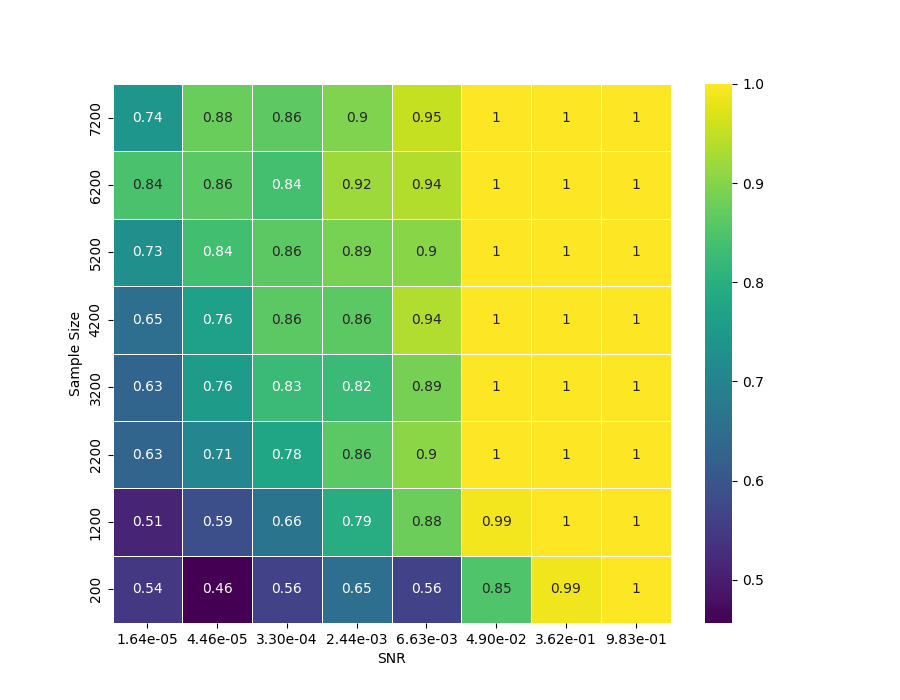}
    \caption{Test accuracy heatmap for GCNs after training.}
    \label{fig:gcn_phase}
\end{figure}

\begin{figure*}[h] 
\centering
\vspace{-2 pt}
\begin{minipage}{0.23\textwidth}
\includegraphics[width =1.3in]{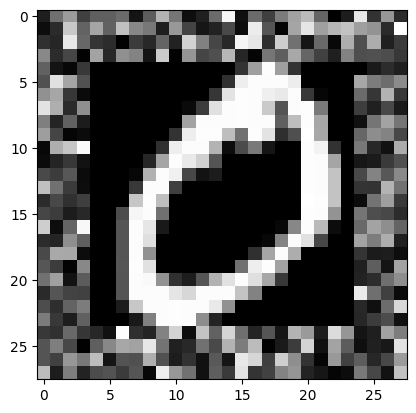}
\end{minipage} 
\hspace{2pt}
\begin{minipage}{0.23\textwidth}
\includegraphics[width =1.3in]{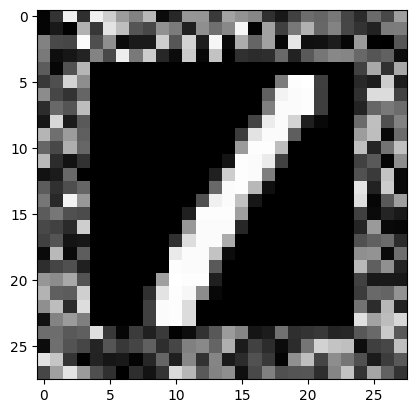}
\end{minipage}
\hspace{2pt}
\begin{minipage}{0.24\textwidth}
\includegraphics[width =1.38in]{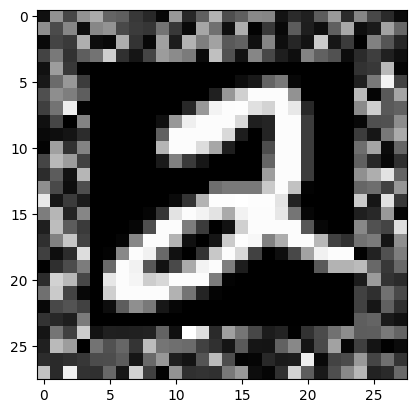}
\end{minipage}
\begin{minipage}{0.24\textwidth}
\includegraphics[width =1.39in]{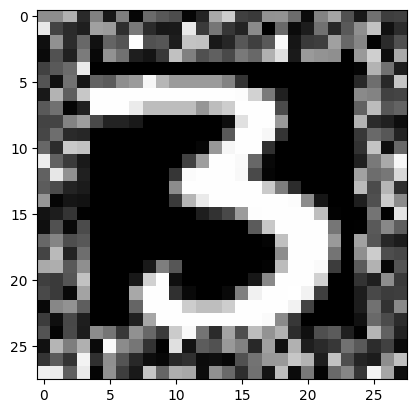}
\end{minipage}
\caption{Examples of adding noise patch to the numbers in the MNIST dataset.}
\label{fig:add_noise}
\end{figure*}

\section{How to Transform MNIST into a Signal-Noise Data Model} \label{seca:mnist}

To verify the theoretical study, we used the real-world data MNIST and modified it to align with the theoretical data model. In particular, we added Gaussian noise to the position of the image border while retaining the numbers in the middle. The final renderings are shown in Figure \ref{fig:add_noise}, where the noise level is chosen as $\sigma_p = 0.5$. A similar strategy can be found in \cite{cao2022benign}. As can be seen from Figure \ref{fig:add_noise}, the surrounding noise forms a patch, and the numbers in the middle form a signal patch. In subsequent experiments, we will separate noise patches and signal patches.

\section{Computation Resources} \label{sec:comp}

We implement our methods with PyTorch. For the software and hardware configurations, we
ensure the consistent environments for each datasets. We run all the experiments on Linux servers with  NVIDIA V100 graphics cards with CUDA 11.2.

\section{Broader Impacts} \label{sec:broad}

This work focuses on the theoretical understanding of the differences in optimization and generalization between GNNs and MLPs. The results established for GNNs may be applied to both theoretical and empirical research on GNNs. Additionally, we do not foresee any form of negative social impact induced by our work.

\end{document}